\DeclareMathOperator*{\argmax}{arg\,max}
\definecolor{blue}{RGB}{36,105,255} % hex: 4269ff
\definecolor{green}{RGB}{41,255,27}
\definecolor{red}{RGB}{255,27,27}  % hex: ff1b1b
\definecolor{yellow}{RGB}{255,242,0}
\theoremstyle{plain}
\newtheorem{thm}{Theorem}
\newtheorem{lemma}[thm]{Lemma}
\newtheorem{proposition}{Proposition}
\newtheorem*{claim}{Claim}
\theoremstyle{definition}
\icmltitlerunning{Certified Adversarial Robustness via Randomized Smoothing}
\begin{document}

\twocolumn[
\icmltitle{Certified Adversarial Robustness via Randomized Smoothing}

% It is OKAY to include author information, even for blind
% submissions: the style file will automatically remove it for you
% unless you've provided the [accepted] option to the icml2019
% package.

% List of affiliations: The first argument should be a (short)
% identifier you will use later to specify author affiliations
% Academic affiliations should list Department, University, City, Region, Country
% Industry affiliations should list Company, City, Region, Country

% You can specify symbols, otherwise they are numbered in order.
% Ideally, you should not use this facility. Affiliations will be numbered
% in order of appearance and this is the preferred way.
\icmlsetsymbol{equal}{*}

\begin{icmlauthorlist}
\icmlauthor{Jeremy Cohen}{cmu}
\icmlauthor{Elan Rosenfeld}{cmu}
\icmlauthor{J. Zico Kolter}{cmu,bosch}
\end{icmlauthorlist}

\icmlaffiliation{cmu}{Carnegie Mellon University}
\icmlaffiliation{bosch}{Bosch Center for AI}

\icmlcorrespondingauthor{Jeremy Cohen}{jeremycohen@cmu.edu}

% You may provide any keywords that you
% find helpful for describing your paper; these are used to populate
% the "keywords" metadata in the PDF but will not be shown in the document
\icmlkeywords{Machine Learning, ICML}

\vskip 0.3in
]

% this must go after the closing bracket ] following \twocolumn[ ...

% This command actually creates the footnote in the first column
% listing the affiliations and the copyright notice.
% The command takes one argument, which is text to display at the start of the footnote.
% The \icmlEqualContribution command is standard text for equal contribution.
% Remove it (just {}) if you do not need this facility.

\printAffiliationsAndNotice{}  % leave blank if no need to mention equal contribution
%\printAffiliationsAndNotice{\icmlEqualContribution} % otherwise use the standard text.

\begin{abstract}
We show how to turn any classifier that classifies well under Gaussian noise into a new classifier that is certifiably robust to adversarial perturbations under the $\ell_2$ norm.
This ``randomized smoothing''  technique has been proposed recently in the literature, but existing guarantees are loose.
We prove a tight robustness guarantee in $\ell_2$ norm for smoothing with Gaussian noise.
We use randomized smoothing to obtain an ImageNet classifier with e.g. a certified top-1 accuracy of 49\% under adversarial perturbations with $\ell_2$ norm less than 0.5 (=127/255).
No certified defense has been shown feasible on ImageNet except for smoothing.
On smaller-scale datasets where competing approaches to certified $\ell_2$ robustness are viable, smoothing delivers higher certified accuracies.
Our strong empirical results suggest that randomized smoothing is a promising direction for future research into adversarially robust classification.  Code and models are available at \url{http://github.com/locuslab/smoothing}.

\end{abstract}

\section{Introduction}

\begin{figure}[t]
%\vskip 0.2in
\begin{center}
\begin{tikzpicture}
\node[inner sep=0pt] {\includegraphics[width=130px]{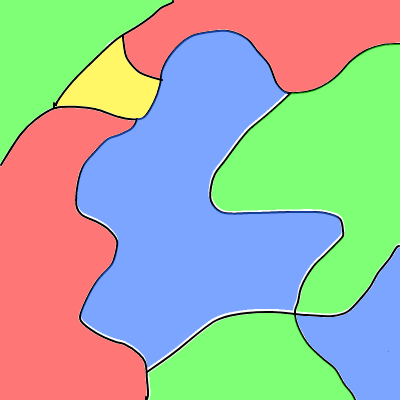}};
\draw[dashed] (-0.05, 0) circle (0.4);
\draw[dashed] (-0.05, 0) circle (0.8);
\draw[dashed] (-0.05, 0) circle (1.2);
\draw[dashed] (-0.05, 0) circle (1.6);
\draw[dashed] (-0.05, 0) circle (2.0);
\fill[black] (0, 0) circle (.05);
\node at (0, -0.2) {\small $x$};
\end{tikzpicture}
\hfill
\begin{tikzpicture}
\filldraw[fill={blue}, fill opacity = 0.6] (0,0) rectangle (0.75,3.5);
\filldraw[fill={green}, fill opacity = 0.6] (0.75,0) rectangle (1.5, 1.8);
\filldraw[fill={red}, fill opacity = 0.6] (1.5,0) rectangle (2.25, 1.4);
\filldraw[fill={yellow}, fill opacity = 0.6] (2.25, 0) rectangle (3.0, 0.5);
\draw [dashed] (0, 3.2) -- (3, 3.2);
\draw [dashed] (0, 2.0) -- (3, 2.0);
\node at (1.2, 3.0) {$\underline{p_A}$};
\node at (2.6, 2.2) {$\overline{p_B}$};
\end{tikzpicture}
\caption{Evaluating the smoothed classifier at an input $x$.  \textbf{Left}: the decision regions of the base classifier $f$ are drawn in different colors.  The dotted lines are the level sets of the distribution $\mathcal{N}(x, \sigma^2 I)$.  \textbf{Right}: the distribution $f(\mathcal{N}(x, \sigma^2 I))$.  
As discussed below, $\underline{p_A}$ is a lower bound on the probability of the top class and $\overline{p_B}$ is an upper bound on the probability of each other class. 
Here, $g(x)$ is ``blue.''}.
\label{page1illustration}
\end{center}
\vspace{-25px}
\end{figure}

Modern image classifiers achieve high accuracy on i.i.d. test sets but are not robust to small, adversarially-chosen perturbations of their inputs \citep{szegedy2014intriguing, biggio2013evasion}.
Given an image $x$ correctly classified by, say, a neural network, an adversary can usually engineer an adversarial perturbation $\delta$ so small that $x+\delta$ looks just like $x$ to the human eye, yet the network classifies $x + \delta$ as a different, incorrect class.
Many works have proposed heuristic methods for training classifiers intended to be robust to adversarial perturbations.  However, most of these heuristics have been subsequently shown to fail against suitably powerful adversaries  \citep{carlini2017adversarial, athalye2018obfuscated, uesato2018adversarial}.
In response, a line of work on \emph{certifiable robustness} studies classifiers whose prediction at any point $x$ is verifiably constant within some set around $x$ \citep[e.g.]{wong2018provable, raghunathan2018certified}.
In most of these works, the robust classifier takes the form of a neural network.
Unfortunately, all existing approaches for certifying the robustness of neural networks have trouble scaling to networks that are large and expressive enough to solve problems like ImageNet.

One workaround is to look for robust classifiers that are not neural networks.
Recently, two papers \citep{lecuyer2018certified, li2018second} showed that an operation we call \emph{randomized smoothing}\footnote{Smoothing was proposed under the name ``PixelDP'' (for differential privacy).  We use a different name since our improved analysis does not involve differential privacy.} can transform any arbitrary base classifier $f$ into a new ``smoothed classifier'' $g$ that is certifiably robust in $\ell_2$ norm.
Let $f$ be an arbitrary classifier which maps inputs $\mathbb{R}^d$ to classes $\mathcal{Y}$.
For any input $x$, the smoothed classifier's prediction $g(x)$ is defined to be the class which $f$ is most likely to classify the random variable $\mathcal{N}(x, \sigma^2 I)$ as.
That is, $g(x)$ returns the most probable prediction by $f$ of random Gaussian corruptions of $x$.

If the base classifier $f$ is most likely to classify $\mathcal{N}(x, \sigma^2 I)$ as $x$'s correct class, then the smoothed classifier $g$ will be correct at $x$.  But the smoothed classifier $g$ will also possess a desirable property that the base classifier may lack: one can verify that $g$'s prediction is constant within an $\ell_2$ ball around any input $x$, simply by estimating the probabilities with which $f$ classifies $\mathcal{N}(x, \sigma^2 I)$ as each class.
The higher the probability with which $f$ classifies $\mathcal{N}(x, \sigma^2 I)$ as the most probable class, the larger the $\ell_2$ radius around $x$ in which $g$ provably returns that class.

\citet{lecuyer2018certified} proposed randomized smoothing as a provable adversarial defense, and used it to train the first certifiably robust classifier for ImageNet.
Subsequently, \citet{li2018second} proved a stronger robustness guarantee.
However, both of these guarantees are loose, in the sense that the smoothed classifier $g$ is \emph{provably always} more robust than the guarantee indicates.
In this paper, we prove the first tight robustness guarantee for randomized smoothing.
Our analysis reveals that smoothing with Gaussian noise naturally induces certifiable robustness under the $\ell_2$ norm.
We suspect that other, as-yet-unknown noise distributions might induce robustness to other perturbation sets such as general $\ell_p$ norm balls.

\begin{table}[b]
\begin{center}
\caption{Approximate certified accuracy on ImageNet.  Each row shows a radius $r$, the best hyperparameter $\sigma$ for that radius, the approximate certified accuracy at radius $r$ of the corresponding smoothed classifier, and the standard accuracy of the corresponding smoothed classifier.  To give a sense of scale, a perturbation with $\ell_2$ radius 1.0 could change one pixel by 255, ten pixels by 80, 100 pixels by 25, or 1000 pixels by 8.  Random guessing on ImageNet would attain 0.1\% accuracy.} 
\label{table:imagenet-certified-accuracy}
\vspace{0.1in}
\begin{small}
\begin{sc}
\begin{tabular}{l l c c}
\toprule
\thead{$\ell_2$ radius} & \thead{best $\sigma$} & \thead{Cert. Acc (\%)} & \thead{Std. Acc(\%)} \\                                          
\midrule
0.5 & 0.25 & 49 & 67 \\
1.0 &  0.50 & 37 & 57 \\
2.0 & 0.50 & 19 & 57  \\
3.0 & 1.00 & 12 & 44\\
\bottomrule
\end{tabular}
\end{sc}
\end{small}
\end{center}
\end{table}

Randomized smoothing has one major drawback.
If $f$ is a neural network, it is not possible to \emph{exactly} compute the probabilities with which $f$ classifies $\mathcal{N}(x, \sigma^2 I)$ as each class.
Therefore, it is not possible to exactly evaluate $g$'s prediction at any input $x$, or to exactly compute the radius in which this prediction is certifiably robust.
Instead, we present Monte Carlo algorithms for both tasks that are guaranteed to succeed with arbitrarily high probability.

Despite this drawback, randomized smoothing enjoys several compelling advantages over other certifiably robust classifiers proposed in the literature: it makes no assumptions about the base classifier's architecture, it is simple to implement and understand, and, most importantly, it permits the use of arbitrarily large neural networks as the base classifier.  In contrast, other certified defenses do not currently scale to large networks.
Indeed, smoothing is the only certified adversarial defense which has been shown feasible on the full-resolution ImageNet classification task.

We use randomized smoothing to train state-of-the-art certifiably $\ell_2$-robust ImageNet classifiers; for example, one of them achieves 49\% provable top-1 accuracy under adversarial perturbations with $\ell_2$ norm less than 127/255 (Table \ref{table:imagenet-certified-accuracy}).
We also demonstrate that on smaller-scale datasets like CIFAR-10 and SHVN, where competing approaches to certified $\ell_2$ robustness are feasible, randomized smoothing can deliver better certified accuracies, both because it enables the use of larger networks and because it does not constrain the expressivity of the base classifier. 

\section{Related Work}

\begin{figure}[t]
\begin{center}
\includegraphics[width=110px]{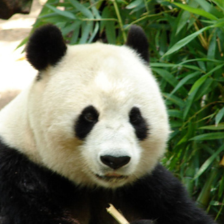}
\includegraphics[width=110px]{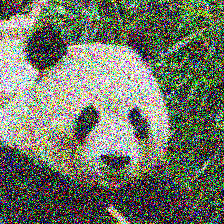}
\caption{The smoothed classifier's prediction at an input $x$ (left) is defined as the most likely prediction by the base classifier on random Gaussian corruptions of $x$ (right; $\sigma=0.5$).
Note that this Gaussian noise is much larger in magnitude than the adversarial perturbations to which $g$ is provably robust.
One interpretation of randomized smoothing is that these large random perturbations ``drown out'' small adversarial perturbations.}
\label{noisypanda}
\end{center}
\end{figure}

Many works have proposed classifiers intended to be robust to adversarial perturbations.
These approaches can be broadly divided into \emph{empirical} defenses, which empirically seem robust to known adversarial attacks, and \emph{certified} defenses, which are \emph{provably} robust to certain kinds of adversarial perturbations.

\paragraph{Empirical defenses}
The most successful empirical defense to date is \emph{adversarial training} \citep{goodfellow2015explaining,kurakin2017adversarial,madry2017towards}, in which adversarial examples are found during training (often using projected gradient descent) and added to the training set.  Unfortunately, it is typically impossible to tell whether a prediction by an empirically robust classifier is truly robust to adversarial perturbations; the most that can be said is that a specific attack was unable to find any.
In fact, many heuristic defenses proposed in the literature were later ``broken'' by stronger adversaries \citep{carlini2017adversarial, athalye2018obfuscated, uesato2018adversarial, athalye2018robustness}.
Aiming to escape this cat-and-mouse game, a growing body of work has focused on defenses with formal guarantees.

\paragraph{Certified defenses}
A classifier is said to be \emph{certifiably robust} if for any input $x$, one can easily obtain a guarantee that the classifier's prediction is constant within some set around $x$, often an $\ell_2$ or $\ell_\infty$ ball.
In most work in this area, the certifiably robust classifier is a neural network.
Some works propose algorithms for certifying the robustness of generically trained networks, while others \citep{wong2018provable, raghunathan2018certified} propose both a robust training method and a complementary certification mechanism.

Certification methods are either \emph{exact} (a.k.a ``complete'') or \emph{conservative} (a.k.a ``sound but incomplete'').
In the context of $\ell_p$ norm-bounded perturbations, exact methods take a classifier $g$, input $x$, and radius $r$, and report whether or not there exists a perturbation $\delta$ within $\|\delta\| \leq r$ for which $g(x) \neq g(x+\delta)$.  In contrast, conservative methods either certify that no such perturbation exists or decline to make a certification; they may decline even when it is true that no such perturbation exists.
Exact methods are usually based on Satisfiability Modulo Theories \citep{Katz2017, carlini2017provably, ehlers2017formal, huang2016safety} or mixed integer linear programming \citep{Cheng2017, lomuscio2017approach, dutta2017output, fischetti2017deep, bunel2017unified}.  Unfortunately, no exact methods have been shown to scale beyond moderate-sized (100,000 activations) networks \citep{tjeng2017evaluating}, and networks of that size can only be verified when they are trained in a manner that impairs their expressivity.

Conservative certification is more scalable.
Some conservative methods bound the \emph{global} Lipschitz constant of the neural network \citep{gouk2018regularisation, tsuzuku2018lipschitz, anil2019sorting, cisse2017parseval}, but these approaches tend to be very loose on expressive networks.
Others measure the \emph{local} smoothness of the network in the vicinity of a particular input $x$.
In theory, one could obtain a robustness guarantee via an upper bound on the local Lipschitz constant of the network \citep{hein2017formal}, but computing this quantity is intractable for general neural networks.
Instead, a panoply of practical solutions have been proposed in the literature \citep{wong2018provable, wang2018mixtrain, wang2018efficient, raghunathan2018certified, raghunathan2018semidefinite, wong2018scaling, dvijotham2018dual, dvijotham2018training, croce2018provable, gehr2018safety, mirman2018differentiable, singh2018fast, gowal2018effectiveness, weng2018toward, zhang2018efficient}.
Two themes stand out.  Some approaches cast verification as an optimization problem and import tools such as relaxation and duality from the optimization literature to provide conservative guarantees \citep{wong2018provable, wong2018scaling, raghunathan2018certified, raghunathan2018semidefinite, dvijotham2018dual, dvijotham2018training}.
Others step through the network layer by layer, maintaining at each layer an outer approximation of the set of activations reachable by a perturbed input \citep{mirman2018differentiable, singh2018fast, gowal2018effectiveness, weng2018toward, zhang2018efficient}.
None of these local certification methods have been shown to be feasible on networks that are large and expressive enough to solve modern machine learning problems like the ImageNet classification task.
Also, all either assume specific network architectures (e.g. ReLU activations or a layered feedforward structure) or require extensive customization for new network architectures.

\paragraph{Related work involving noise}
Prior works have proposed using a network's robustness to Gaussian noise as a proxy for its robustness to adversarial perturbations \citep{weng2018evaluating, ford2019adversarial}, and have suggested that Gaussian data augmentation could supplement or replace adversarial training \citep{zantedeschi2017efficient, kannan2018adversarial}.  \citet{smilkov2017smoothgrad} observed that averaging a classifier's input gradients over Gaussian corruptions of an image yields very interpretable saliency maps.
The robustness of neural networks to random noise has been analyzed both theoretically \citep{fawzi2016robustness, franceschi2018robustness} and empirically \citep{Dodge_2017}.
Finally, \citet{webb2018statistical} proposed a statistical technique for estimating the noise robustness of a classifier more efficiently than naive Monte Carlo simulation; we did not use this technique since it appears to lack formal high-probability guarantees.
While these works hypothesized relationships between a neural network's robustness to random noise and \emph{the same network's} robustness to adversarial perturbations, randomized smoothing instead uses a classifier's robustness to random noise \emph{to create a new classifier} robust to adversarial perturbations. 

\paragraph{Randomized smoothing}
Randomized smoothing has been studied previously for adversarial robustness.  Several works \citep{liu2018towards, cao2017mitigating} proposed similar techniques as heuristic defenses, but did not prove any guarantees.
\citet{lecuyer2018certified} used inequalities from the differential privacy literature to prove an $\ell_2$ and $\ell_1$ robustness guarantee for smoothing with Gaussian and Laplace noise, respectively.
Subsequently, \citet{li2018second} used tools from information theory to prove a stronger $\ell_2$ robustness guarantee for Gaussian noise.
However, all of these robustness guarantees are loose.
In contrast, we prove a tight robustness guarantee in $\ell_2$ norm for randomized smoothing with Gaussian noise.

\section{Randomized smoothing}
\label{section:randomizedsmoothing}

Consider a classification problem from $\mathbb{R}^d$ to classes $\mathcal{Y}$.  As discussed above, randomized smoothing is a method for constructing a new, ``smoothed'' classifier $g$ from an arbitrary base classifier $f$.
When queried at $x$, the smoothed classifier $g$ returns whichever class the base classifier $f$ is most likely to return when $x$ is perturbed by isotropic Gaussian noise:
\begin{align}
\label{g}
g(x) &= \argmax_{c \in \mathcal{Y}} \; \mathbb{P}(f(x+\varepsilon) = c) \\
\text{where } \; \varepsilon &\sim \mathcal{N}(0, \sigma^2 I) \nonumber
\end{align}
An equivalent definition is that $g(x)$ returns the class $c$ whose pre-image $\{x' \in \mathbb{R}^d: f(x') = c\}$ has the largest probability measure under the distribution $\mathcal{N}(x, \sigma^2 I)$.
The noise level $\sigma$ is a hyperparameter of the smoothed classifier $g$ which controls a robustness/accuracy tradeoff; it does not change with the input $x$.
We leave undefined the behavior of $g$ when the argmax is not unique.

We will first present our robustness guarantee for the smoothed classifier $g$.  Then, since it is not possible to exactly evaluate the prediction of $g$ at $x$ or to certify the robustness of $g$ around $x$, we will give Monte Carlo algorithms for both tasks that succeed with arbitrarily high probability.

\subsection{Robustness guarantee}

Suppose that when the base classifier $f$ classifies  $\mathcal{N}(x, \sigma^2 I)$, the most probable class $c_A$ is returned with probability  $p_A $, and the ``runner-up'' class  is returned with probability $p_B$.
Our main result is that smoothed classifier $g$ is robust around $x$ within the $\ell_2$ radius $R = \frac{\sigma}{2} (\Phi^{-1}(p_A) - \Phi^{-1}(p_B))$, where $\Phi^{-1}$ is the inverse of the standard Gaussian CDF.
This result also holds if we replace $p_A$ with a lower bound $\underline{p_A}$ and we replace $p_B$ with an upper bound $\overline{p_B}$.

\begin{thm}
\label{mainbound}
Let $f: \mathbb{R}^d \to \mathcal{Y}$ be any deterministic or random function, and let $\varepsilon \sim \mathcal{N}(0, \sigma^2 I)$.
Let $g$ be defined as in (\ref{g}).
Suppose $c_A \in \mathcal{Y}$ and $\underline{p_A}, \overline{p_B} \in [0, 1]$ satisfy:
\small
\begin{align}
\label{knownfacts}
\mathbb{P}(f(x+\varepsilon) = c_A) \ge \underline{p_A} \ge \overline{p_B }\ge \max_{c \neq c_A} \mathbb{P}(f(x+ \varepsilon) = c)
\end{align}
\normalsize
Then $g(x+\delta) = c_A$ for all $\|\delta\|_2 < R$, where
\begin{align}
R = \frac{\sigma}{2} (\Phi^{-1}(\underline{p_A}) - \Phi^{-1}(\overline{p_B})) \label{radius}
\end{align}
\end{thm}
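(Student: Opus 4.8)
The plan is to reduce the claim to a pair of pointwise probability estimates and then invoke the Neyman--Pearson lemma to identify the worst-case decision regions. Fix a perturbation $\delta$ with $\|\delta\|_2 < R$, and write $P = \mathcal{N}(x, \sigma^2 I)$ and $Q = \mathcal{N}(x+\delta, \sigma^2 I)$ for the noise distributions at $x$ and $x+\delta$. To conclude $g(x+\delta) = c_A$ it suffices to show that $Q$ assigns strictly more mass to $c_A$ than to any competitor, i.e. $\mathbb{P}(f(x+\delta+\varepsilon) = c_A) > \mathbb{P}(f(x+\delta+\varepsilon) = c)$ for every $c \neq c_A$. So I would establish (i) a lower bound on $\mathbb{P}(f(x+\delta+\varepsilon) = c_A)$ using only the hypothesis $\mathbb{P}(f(x+\varepsilon) = c_A) \geq \underline{p_A}$, and (ii) an upper bound on $\mathbb{P}(f(x+\delta+\varepsilon) = c)$ using only $\mathbb{P}(f(x+\varepsilon) = c) \leq \overline{p_B}$.

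The key tool is the Neyman--Pearson lemma, which I would state for randomized tests, i.e. measurable functions $h : \mathbb{R}^d \to [0,1]$, so that the argument also covers a random base classifier $f$: the relevant test for class $c$ is $h_c(z) = \mathbb{P}(f(z) = c)$, and then $\mathbb{P}(f(x+\varepsilon)=c) = \int h_c \, dP$. The lemma says that among all tests with a prescribed value of $\int h \, dP$, the one minimizing (resp. maximizing) $\int h \, dQ$ is the half-space where the likelihood ratio $dQ/dP$ is below (resp. above) a threshold. Computing the likelihood ratio of two isotropic Gaussians of equal covariance gives $\tfrac{dQ}{dP}(z) = \exp\!\big(\langle z - x, \delta\rangle/\sigma^2 - \|\delta\|^2/(2\sigma^2)\big)$, which is monotone in $\langle z - x, \delta\rangle$. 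Hence the extremal regions are exactly the half-spaces $\{z : \langle z - x, \delta\rangle \leq \beta\}$ and $\{z : \langle z - x, \delta\rangle \geq \beta\}$ perpendicular to $\delta$.

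It then remains to compute the Gaussian measures of these half-spaces, which is routine. Under $P$ the scalar $\langle Z - x, \delta\rangle / (\sigma\|\delta\|)$ is standard normal, so choosing the threshold so the lower half-space has $P$-measure exactly $\underline{p_A}$ fixes $\beta = \sigma\|\delta\|\,\Phi^{-1}(\underline{p_A})$; shifting the mean by $\delta$ and re-centering then yields
\[
\mathbb{P}(f(x+\delta+\varepsilon) = c_A) \;\geq\; \Phi\!\left(\Phi^{-1}(\underline{p_A}) - \frac{\|\delta\|}{\sigma}\right),
\]
and the symmetric computation for the upper half-space gives
\[
\mathbb{P}(f(x+\delta+\varepsilon) = c) \;\leq\; \Phi\!\left(\Phi^{-1}(\overline{p_B}) + \frac{\|\delta\|}{\sigma}\right).
\]
Since $\Phi$ is strictly increasing, the lower bound exceeds the upper bound precisely when $\Phi^{-1}(\underline{p_A}) - \|\delta\|/\sigma > \Phi^{-1}(\overline{p_B}) + \|\delta\|/\sigma$, which rearranges to $\|\delta\| < \tfrac{\sigma}{2}(\Phi^{-1}(\underline{p_A}) - \Phi^{-1}(\overline{p_B})) = R$, giving the claim.

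I expect the main obstacle to be the Neyman--Pearson step rather than the Gaussian integrals: one must state the lemma in randomized-test form so it applies to a possibly random $f$, verify that the likelihood-ratio sublevel sets really are the linear half-spaces claimed (including matching the threshold to the desired $P$-measure), and check that replacing the exact class probabilities by the one-sided bounds $\underline{p_A}$, $\overline{p_B}$ still yields valid estimates — this works because a test with $\int h_{c_A}\,dP \geq \underline{p_A}$ can be shrunk pointwise to one with $P$-measure exactly $\underline{p_A}$ without increasing its $Q$-measure, and dually for $\overline{p_B}$. The orientation of the half-spaces (small versus large likelihood ratio) and the sign bookkeeping in $\Phi^{-1}(1-\overline{p_B}) = -\Phi^{-1}(\overline{p_B})$ are where an error would most easily creep in.
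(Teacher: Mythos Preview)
Your proposal is correct and follows essentially the same approach as the paper: reduce to a Neyman--Pearson argument, identify the extremal sets as half-spaces orthogonal to $\delta$ via the Gaussian likelihood ratio, compute their $P$- and $Q$-measures explicitly, and compare. The only cosmetic difference is that you phrase Neyman--Pearson directly for $[0,1]$-valued randomized tests, whereas the paper states it for $\{0,1\}$-valued possibly random functions; both formulations handle a random base classifier $f$ and lead to the same half-space bounds $\Phi(\Phi^{-1}(\underline{p_A}) - \|\delta\|/\sigma)$ and $\Phi(\Phi^{-1}(\overline{p_B}) + \|\delta\|/\sigma)$.
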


We now make several observations about Theorem \ref{mainbound}\footnote{After the dissemination of this work, a more general result was published in \citet{alex2019certifiably, salman2019provably}: if $h: \mathbb{R}^d \to [0, 1]$ is a function and $\hat{h}$ is the ``smoothed'' version $\hat{h}(x) = \mathbb{E}_{\varepsilon \sim \mathcal{N}(0, \sigma^2 I)}[h(x+\varepsilon)]$, then the function $x \mapsto \Phi^{-1}(\hat{h}(x))$ is $1/\sigma$-Lipschitz. 
Theorem \ref{mainbound} can be proved by applying this result to the functions $f_c(x) = \mathbf{1}[f(x) = c]$ for each class $c$.
}
\begin{itemize}
\item Theorem \ref{mainbound} assumes nothing about $f$.
This is crucial since it is unclear which well-behavedness assumptions, if any, are satisfied by modern deep architectures.
\item The certified radius $R$ is large when: (1) the noise level $\sigma$ is high, (2) the probability of the top class $c_A$ is high, and (3) the probability of each other class is low.
\item The certified radius $R$ goes to $\infty$ as $\underline{p_A} \to 1$ and $\overline{p_B} \to 0$.
This should sound reasonable: the Gaussian distribution is supported on all of $\mathbb{R}^d$, so the only way that $f(x+\varepsilon) = c_A$ with probability 1 is if $f = c_A$ almost everywhere.
\end{itemize}
Both \citet{lecuyer2018certified} and \citet{li2018second} proved $\ell_2$ robustness guarantees for the same setting as Theorem \ref{mainbound}, but with different, smaller expressions for the certified radius.
However, our $\ell_2$ robustness guarantee is \emph{tight}: if (\ref{knownfacts}) is all that is known about $f$, then it is impossible to certify an $\ell_2$ ball with radius larger than $R$.
In fact, it is impossible to certify any superset of the $\ell_2$ ball with radius $R$:
\begin{thm}
\label{mainboundtight}
Assume $\underline{p_A} + \overline{p_B} \le 1$.
For any perturbation $\delta$ with $\|\delta\|_2 > R$, there exists a base classifier $f$ consistent with the class probabilities (\ref{knownfacts}) for which $g(x+\delta) \neq c_A$.
\end{thm}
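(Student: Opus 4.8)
The plan is to prove tightness by explicit construction: since \eqref{knownfacts} is all we are allowed to assume about the base classifier, I will exhibit a single $f$ that saturates those constraints yet has $g(x+\delta)\neq c_A$ whenever $\|\delta\|_2 > R$. The guiding intuition is that, because $\mathcal{N}(x,\sigma^2 I)$ is isotropic, the worst case for $g$ at the point $x+\delta$ is a classifier whose decision boundaries are hyperplanes orthogonal to $\delta$. This collapses the whole problem onto the single direction $u := \delta/\|\delta\|_2$, along which the relevant marginal of the noise is a one-dimensional Gaussian.

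Concretely, first I would define $f$ by the two parallel half-spaces
\begin{align*}
f(z) = c_A \text{ on } \{\langle z-x,u\rangle \le \sigma\,\Phi^{-1}(\underline{p_A})\}, \quad f(z) = c_B \text{ on } \{\langle z-x,u\rangle \ge -\sigma\,\Phi^{-1}(\overline{p_B})\},
\end{align*}
assigning the (possibly empty) slab between them to one or more auxiliary classes. Under $\mathcal{N}(x,\sigma^2 I)$ the projection $\langle \varepsilon, u\rangle$ is $\mathcal{N}(0,\sigma^2)$, so these two regions have probability measure exactly $\underline{p_A}$ and $\overline{p_B}$ respectively (the second computation uses $\Phi^{-1}(1-\overline{p_B})=-\Phi^{-1}(\overline{p_B})$). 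This is exactly where the hypothesis $\underline{p_A}+\overline{p_B}\le 1$ enters: the inequality $\Phi^{-1}(\underline{p_A}) \le -\Phi^{-1}(\overline{p_B})$ is precisely the condition that the two half-spaces be disjoint, so $f$ is well defined. I would then verify consistency with \eqref{knownfacts}: $c_A$ receives mass $\underline{p_A}$, $c_B$ receives mass $\overline{p_B}$, and the remaining mass $1-\underline{p_A}-\overline{p_B}$ is parceled among the auxiliary classes so that none exceeds $\overline{p_B}$.

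The heart of the argument is the evaluation at $x+\delta$. Under $\mathcal{N}(x+\delta,\sigma^2 I)$ the projection $\langle \varepsilon, u\rangle$ shifts to mean $\|\delta\|_2$, so
\begin{align*}
\mathbb{P}(f(x+\delta+\varepsilon)=c_A) &= \Phi\!\left(\Phi^{-1}(\underline{p_A}) - \tfrac{\|\delta\|_2}{\sigma}\right), \\
\mathbb{P}(f(x+\delta+\varepsilon)=c_B) &= \Phi\!\left(\tfrac{\|\delta\|_2}{\sigma} + \Phi^{-1}(\overline{p_B})\right).
\end{align*}
By monotonicity of $\Phi$, the second probability exceeds the first exactly when $2\|\delta\|_2/\sigma > \Phi^{-1}(\underline{p_A}) - \Phi^{-1}(\overline{p_B})$, i.e.\ when $\|\delta\|_2 > R$ by \eqref{radius}. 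Since $c_B$ then has strictly larger probability than $c_A$, the class $c_A$ cannot be the argmax, so $g(x+\delta)\neq c_A$ regardless of how ties among other classes are resolved.

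I expect the main obstacle to be bookkeeping rather than conceptual. The delicate case is the slack regime $\underline{p_A}+\overline{p_B}<1$, where the leftover probability in the middle slab must be distributed so that every competing class stays at or below $\overline{p_B}$; slicing the slab into sufficiently thin sub-slabs (each of arbitrarily small Gaussian measure) achieves this, while the degenerate endpoints $\overline{p_B}=0$ or $\underline{p_A}=1$ force $R=\infty$ and make the statement vacuous. A secondary point I should state carefully is the reduction to hyperplanes orthogonal to $\delta$: although I only need to \emph{exhibit} one bad classifier rather than prove this family is extremal, I must make explicit that isotropy of the Gaussian is what permits working entirely with the one-dimensional marginal along $u$.
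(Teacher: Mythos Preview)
Your construction is exactly the paper's: the half-spaces you define via the unit vector $u=\delta/\|\delta\|_2$ coincide with the sets $A$ and $B$ from the proof of Theorem~\ref{mainbound} (after using $\Phi^{-1}(1-\overline{p_B})=-\Phi^{-1}(\overline{p_B})$), and the probability computations under $\mathcal{N}(x+\delta,\sigma^2 I)$ are the same ones carried out there. You are in fact slightly more careful than the paper on one point: the paper simply writes ``other classes otherwise'' for the middle slab without checking that each auxiliary class receives mass at most $\overline{p_B}$, whereas you note that thin sub-slabs handle this (and that the edge case $\overline{p_B}=0$ is vacuous since then $R=\infty$).
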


Theorem \ref{mainboundtight} shows that Gaussian smoothing naturally induces $\ell_2$ robustness:  if we make no assumptions on the base classifier beyond the class probabilities (\ref{knownfacts}), then the set of perturbations to which a Gaussian-smoothed classifier is provably robust is \emph{exactly} an $\ell_2$ ball.

The complete proofs of Theorems \ref{mainbound} and \ref{mainboundtight} are in Appendix \ref{section:fullproof}.
We now sketch the proofs in the special case when there are only two classes.

\textbf{Theorem 1 (binary case).}
\textit{Suppose $\underline{p_A} \in (\frac{1}{2}, 1]$ satisfies $\mathbb{P}(f(x+\varepsilon) = c_A) \ge \underline{p_A}$.
Then $g(x+\delta) = c_A$ for all $\|\delta\|_2 < \sigma \Phi^{-1}(\underline{p_A})$.}

\begin{proof}[Proof sketch]
Fix a perturbation $\delta \in \mathbb{R}^d$.
To guarantee that $g(x+\delta) = c_A$, we need to show that $f$ classifies the translated Gaussian $\mathcal{N}(x+\delta, \sigma^2 I)$ as $c_A$ with probability $> \frac{1}{2}$.
However, all we know about $f$ is that $f$ classifies $\mathcal{N}(x, \sigma^2 I)$ as $c_A$ with probability $\ge \underline{p_A}$.
This raises the question: out of all possible base classifiers $f$ which classify $\mathcal{N}(x, \sigma^2I)$ as $c_A$ with probability $\ge \underline{p_A}$, which one $f^*$ classifies $\mathcal{N}(x+\delta, \sigma^2 I)$ as $c_A$ with the smallest probability?
One can show using an argument similar to the Neyman-Pearson lemma \citep{neyman1933}  that this ``worst-case'' $f^*$ is a linear classifier whose decision boundary is normal to the perturbation $\delta$ (Figure \ref{figure:worstcase}):
\begin{align}
f^*(x') = \begin{cases}
c_A &\mbox{ if }  \delta^T(x' - x) \le \sigma \|\delta\|_2 \Phi^{-1}(\underline{p_A}) \\
c_B &\mbox{ otherwise}
\end{cases}
\label{fstar}
\end{align}
This ``worst-case''  $f^*$ classifies $\mathcal{N}(x+\delta, \sigma^2 I)$ as $c_A$ with probability
$ \Phi \left( \Phi^{-1}(\underline{p_A}) - \frac{\|\delta\|_2}{\sigma} \right) $.
Therefore, to ensure that even the ``worst-case'' $f^*$ classifies $\mathcal{N}(x+\delta, \sigma^2 I)$ as $c_A$ with probability $>\frac{1}{2}$, we solve for those $\delta$ for which
$$  \Phi \left( \Phi^{-1}(\underline{p_A}) - \frac{\|\delta\|_2}{\sigma} \right) > \frac{1}{2} $$
which is equivalent to the condition $\|\delta\|_2 < \sigma \Phi^{-1}(\underline{p_A})$.
\end{proof}

Theorem \ref{mainboundtight} is a simple consequence: for any $\delta$ with $\|\delta\|_2 > R$, the base classifier $f^*$ defined in (\ref{fstar}) is consistent with (\ref{knownfacts}); yet if $f^*$ is the base classifier, then $g(x+\delta) = c_B$.

\begin{figure}[t]
\begin{center}
\begin{tikzpicture}[scale=0.9375]
\node at (2,2) {\includegraphics[width=3.75cm]{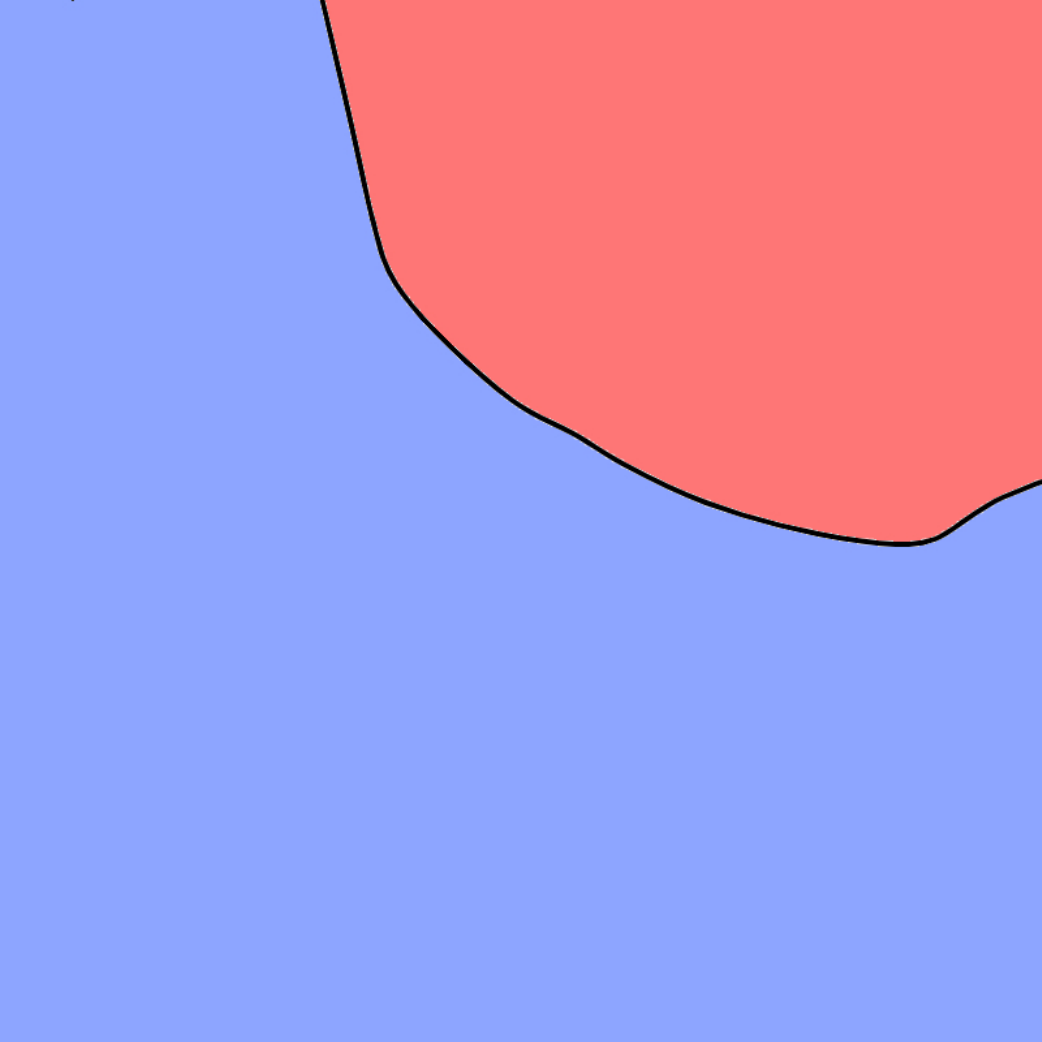}};
\draw (0,0) rectangle (4, 4);
%\fill[blue, fill opacity=0.6] (0,0) --(0,4) -- (2.0, 4) -- (4, 2.0) -- (4, 0);
%\fill[red, fill opacity=0.6] (2.0,4) --(4,4) -- (4, 2.0) -- (2.0, 4);
\fill[black] (2.5,2.5) circle (.05);
\draw[dashed] (2.5,2.5) circle (0.6);
\draw[dashed] (2.5,2.5) circle (0.9);
\draw[dashed] (2.5,2.5) circle (1.2);
\draw[dashed] (2.5,2.5) circle (1.5);
\fill[black] (1.5,1.5) circle (.05);
\draw[solid] (1.5,1.5) circle (0.6);
\draw[solid] (1.5,1.5) circle (0.9);
\draw[solid] (1.5,1.5) circle (1.2);
\draw[solid] (1.5,1.5) circle (1.5);
\node at (2.5, 2.75) {\small $x+\delta$};
\node at (1.5, 1.3) {\small $x$};
\end{tikzpicture}
\begin{tikzpicture}[scale=0.9375]
\node at (2,2) {\includegraphics[width=3.75cm]{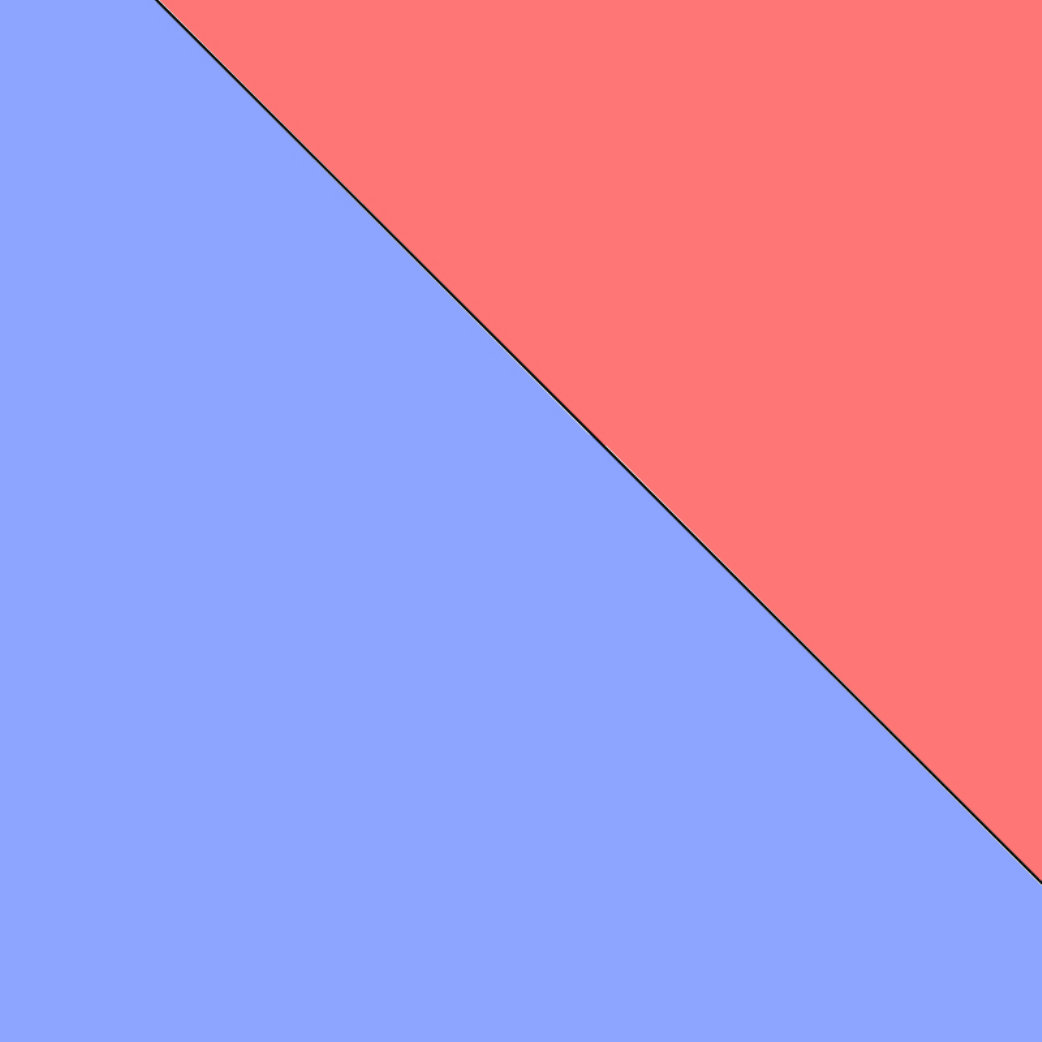}};
\draw (0,0) rectangle (4, 4);
%\fill[blue, fill opacity=0.6] (0,0) --(0,4) -- (0.6, 4) -- (4, 0.6) -- (4, 0);
%\fill[red, fill opacity=0.6] (0.6,4) --(4,4) -- (4, 0.6) -- (0.6, 4);
\fill[black] (2.5,2.5) circle (.05);
\draw[dashed] (2.5,2.5) circle (0.6);
\draw[dashed] (2.5,2.5) circle (0.9);
\draw[dashed] (2.5,2.5) circle (1.2);
\draw[dashed] (2.5,2.5) circle (1.5);
\fill[black] (1.5,1.5) circle (.05);
\draw[solid] (1.5,1.5) circle (0.6);
\draw[solid] (1.5,1.5) circle (0.9);
\draw[solid] (1.5,1.5) circle (1.2);
\draw[solid] (1.5,1.5) circle (1.5);
\node at (2.5, 2.75) {\small $x+\delta$};
\node at (1.5, 1.3) {\small $x$};
\end{tikzpicture}
\end{center}
\caption{Illustration of $f^*$ in two dimensions.
The concentric circles are the density contours of $\mathcal{N}(x, \sigma^2 I)$ and $\mathcal{N}(x+\delta, \sigma^2 I)$.
Out of all base classifiers $f$ which classify $\mathcal{N}(x, \sigma^2 I)$ as $c_A$ (blue) with probability $\ge \underline{p_A}$, such as both classifiers depicted above, the ``worst-case'' $f^*$ --- the one which classifies  $\mathcal{N}(x+\delta, \sigma^2 I)$ as $c_A$ with minimal probability --- is depicted on the right: a linear classifier with decision boundary normal to the perturbation $\delta$.}
\label{figure:worstcase}
\end{figure}

Figure \ref{figure:bounds} (left) plots our $\ell_2$ robustness guarantee against the guarantees derived in prior work.
Observe that our $R$ is much larger than that of \citet{lecuyer2018certified} and moderately larger than that of \citet{li2018second}.
Appendix \ref{section:derivation} derives the other two guarantees using this paper's notation.

\paragraph{Linear base classifier}
A two-class linear classifier $f(x) = \text{sign}(w^T x + b)$ is already certifiable: the distance from any input $x$ to the decision boundary is $|w^T x+ b|/\|w\|_2$, and no perturbation $\delta$ with $\ell_2$ norm less than this distance can possibly change $f$'s prediction.
In Appendix \ref{section:otherdeferred} we show that if $f$ is linear, then the smoothed classifier $g$ is identical to the base classifier $f$.
Moreover, we show that our bound (\ref{radius}) will certify the true robust radius $|w^T x + b|/\|w\|$, rather than a smaller, overconservative radius.
Therefore, when $f$ is linear, there always exists a perturbation $\delta$ just beyond the certified radius which changes $g$'s prediction.

\paragraph{Noise level can scale with image resolution}
Since our expression (\ref{radius}) for the certified radius does not depend explicitly on the data dimension $d$, one might worry that randomized smoothing is less effective for images of higher resolution --- certifying a fixed $\ell_2$ radius is ``less impressive'' for, say, a $224 \times 224$ image than for a $56 \times 56$ image.
However, as illustrated by Figure \ref{figure:noise_dimension_paper}, images in higher resolution can tolerate higher levels $\sigma$ of isotropic Gaussian noise before their class-distinguishing content gets destroyed.
As a consequence, in high resolution, smoothing can be performed with a larger $\sigma$, leading to larger certified radii.
See Appendix \ref{section:input-resolution} for a more rigorous version of this argument.

\begin{figure}[b]
\begin{center}
	\includegraphics[width=45px]{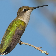}
	\includegraphics[width=45px]{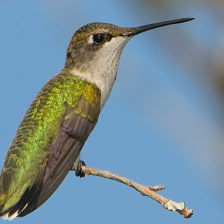}
	\includegraphics[width=45px]{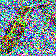}
	\includegraphics[width=45px]{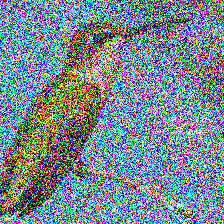}
\end{center}
\caption{Left to right: clean 56 x 56 image, clean 224 x 224 image, noisy 56 x 56 image ($\sigma=0.5)$, noisy 224 x 224 image ($\sigma=0.5$).}
\label{figure:noise_dimension_paper}
\end{figure}

\subsection{Practical algorithms}
\label{sec:practical-algorithms}

We now present practical Monte Carlo algorithms for evaluating $g(x)$ and certifying the robustness of $g$ around $x$.
More details can be found in Appendix \ref{section:practical-algorithms-appendix}.

\subsubsection{Prediction}
Evaluating the smoothed classifier's prediction $g(x)$ requires identifying the class $c_A$ with maximal weight in the categorical distribution $f(x+\varepsilon)$.
The procedure described in pseudocode as \textsc{Predict} draws $n$ samples of $f(x+\varepsilon)$ by running $n$ noise-corrupted copies of $x$ through the base classifier.
Let $\hat{c}_A$ be the class which appeared the largest number of times.
If $\hat{c}_A$ appeared much more often than any other class, then \textsc{Predict} returns $\hat{c}_A$.
Otherwise, it abstains from making a prediction.
We use the hypothesis test from \citet{hung2017rank} to calibrate the abstention threshold so as to bound by $\alpha$ the probability of returning an incorrect answer.
\textsc{Predict} satisfies the following guarantee:
\begin{proposition}
\label{thm:predict-correct}
With probability at least $1 - \alpha$ over the randomness in \textsc{Predict}, \textsc{Predict} will either abstain or return $g(x)$.
(Equivalently: the probability that \textsc{Predict} returns a class other than $g(x)$ is at most $\alpha$.)
\end{proposition}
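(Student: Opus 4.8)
The plan is to show that \textsc{Predict} returns a class different from $g(x)$ only when its internal hypothesis test commits a Type I error, and then to bound that error probability by $\alpha$. First I would fix notation: let $N_c$ be the number of the $n$ samples on which the base classifier returns class $c$, so that $(N_c)_{c \in \mathcal{Y}}$ is multinomial with parameters $n$ and $(p_c)_c$, where $p_c = \mathbb{P}(f(x+\varepsilon)=c)$, and write $c_A = g(x) = \argmax_c p_c$ for the true top class (assuming the argmax is unique, since $g$ is otherwise undefined). \textsc{Predict} returns a non-abstaining answer $\hat c_A$ --- the empirically most frequent class --- only when the two-sided binomial test comparing $\hat c_A$ against the runner-up $\hat c_B$ rejects the null $p=\tfrac12$ at level $\alpha$; writing $m = N_{\hat c_A} + N_{\hat c_B}$, this happens exactly when $2\,\mathbb{P}(\mathrm{Bin}(m,\tfrac12) \ge N_{\hat c_A}) \le \alpha$. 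Hence the event ``\textsc{Predict} returns a class $\neq c_A$'' is contained in the event that this test rejects while $\hat c_A \neq c_A$, i.e.\ while the empirical winner is not the population winner, and it suffices to bound the probability of the latter by $\alpha$.

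To see why a single comparison is controlled, I would condition on $N_{c_A} + N_{c'} = m$ for a fixed competitor $c' \neq c_A$. Among those $m$ samples landing in $\{c_A, c'\}$, the number landing in $c'$ is $\mathrm{Bin}(m, q)$ with $q = p_{c'}/(p_{c'}+p_{c_A}) \le \tfrac12$ since $p_{c'} \le p_{c_A}$. Letting $k^*(m)$ be the smallest $k$ for which $2\,\mathbb{P}(\mathrm{Bin}(m,\tfrac12)\ge k)\le\alpha$, the test rejects in favor of $c'$ precisely when $N_{c'}\ge k^*(m)$, so by stochastic monotonicity of the binomial tail in $q$,
\begin{align*}
\mathbb{P}\!\left(N_{c'}\ge k^*(m)\mid N_{c_A}+N_{c'}=m\right)\le \mathbb{P}\!\left(\mathrm{Bin}(m,\tfrac12)\ge k^*(m)\right)\le \tfrac{\alpha}{2}.
\end{align*}
Moreover, since the two-sided tail $2\,\mathbb{P}(\mathrm{Bin}(m,\tfrac12)\ge N_{\hat c_A})$ is non-decreasing in $m$ for fixed $N_{\hat c_A}$ (adding trials can only increase the count), and since $N_{\hat c_B}\ge N_{c_A}$ whenever $\hat c_A \neq c_A$, rejection of the winner-vs-runner-up test implies rejection of the winner-vs-$c_A$ test. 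Thus every error is captured by a comparison of the empirical winner against the true winner $c_A$, and each such comparison is controlled at level $\alpha/2$.

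The hard part will be the multiplicity, not this per-comparison bound. The competitor that \textsc{Predict} actually tests is selected from the data, so the error event is a disjoint union over all $|\mathcal{Y}|-1$ possible winners, and a naive union bound gives only the useless $(|\mathcal{Y}|-1)\,\alpha/2$; disjointness of the ``return $c'$'' events does not rescue us, because the containing events $\{$winner-vs-$c_A$ test rejects for $c'\}$ overlap. This data-dependent selection is exactly what the rank-verification framework of \citet{hung2017rank} is built to handle: because a declared empirical winner beats \emph{every} other class, including $c_A$, only the winner-versus-runner-up comparison can ever produce a false declaration, and their analysis shows that conditioning on the identities of the top two classes and on $m$ yields a valid conditional test whose Type I error does not accumulate over the selection. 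I would therefore invoke their guarantee to conclude that the probability of declaring any non-top class is at most $\alpha$, which is precisely the claim that \textsc{Predict} returns a class other than $g(x)$ with probability at most $\alpha$. The crux of the argument is thus entirely in ruling out the multiplicity inflation, which I would borrow wholesale from \citet{hung2017rank}.
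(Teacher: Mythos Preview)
Your proposal is correct and lands on the same endpoint as the paper: both defer the crux of the argument to Theorem~1 of \citet{hung2017rank}. The paper's version is more direct---it simply writes $\mathbb{P}(\textsc{Predict}\text{ returns }\neq c_A)\le \mathbb{P}(\text{non-abstain}\mid \hat c_A\neq c_A)$ and cites Hung--Fithian for the conditional non-abstention probability---whereas your per-comparison binomial bound and the monotonicity step (both correct) are intuition-building detours that, as you yourself note, do not close the multiplicity gap and are subsumed once you invoke the same reference.
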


The function \textsc{SampleUnderNoise}($f$, $x$, num, $\sigma$) in the pseudocode draws num samples of noise, $\varepsilon_1 \hdots \varepsilon_{\text{num}} \sim \mathcal{N}(0, \sigma^2 I)$, runs each $x + \varepsilon_i$ through the base classifier $f$, and returns a vector of class counts.
\textsc{BinomPValue}($n_A$, $n_A + n_B$, $p$) returns the p-value of the two-sided hypothesis test that $n_A \sim \text{Binomial}(n_A + n_B, p)$.

\begin{algorithm}[t]
   \caption{for certification and prediction}
   \label{algos}
\begin{algorithmic}
   \STATE \textit{\# evaluate $g$ at $x$}
   \STATE \textbf{function} \textsc{Predict}($f$, $\sigma$, $x$, $n$, $\alpha$)
   \STATE \quad \texttt{counts} $\leftarrow$ \textsc{SampleUnderNoise}($f$, $x$, $n$, $\sigma$)
   \STATE \quad $\hat{c}_A, \hat{c}_B \leftarrow$ top two indices in \texttt{counts}
   \STATE \quad $n_A, n_B \leftarrow $ \texttt{counts}[$\hat{c}_A$],  \texttt{counts}[$\hat{c}_B$]
   \STATE \quad \textbf{if} \textsc{BinomPValue}($n_A$, $n_A + n_B$, 0.5) $\le \alpha$ \textbf{return} $\hat{c}_A$
   \STATE \quad \textbf{else return} ABSTAIN
   \STATE
   \STATE \textit{\# certify the robustness of $g$ around $x$}
   \STATE \textbf{function} \textsc{Certify}($f$, $\sigma$, $x$, $n_0$, $n$, $\alpha$)
   \STATE \quad $\texttt{counts0} \leftarrow \textsc{SampleUnderNoise}(f, x, n_0, \sigma)$
   \STATE \quad $\hat{c}_A \leftarrow$ top index in \texttt{counts0}
   \STATE \quad $\texttt{counts} \leftarrow \textsc{SampleUnderNoise}(f, x, n, \sigma)$
   \STATE \quad $\underline{p_A} \leftarrow \textsc{LowerConfBound}$($\texttt{counts}[\hat{c}_A]$, $n$, $1 - \alpha$) 
   \STATE \quad \textbf{if} $\underline{p_A} > \frac{1}{2}$ \textbf{return} prediction $\hat{c}_A$ and radius $\sigma \, \Phi^{-1}(\underline{p_A})$
   \STATE \quad \textbf{else return} ABSTAIN
\end{algorithmic}
\end{algorithm}

Even if the true smoothed classifier $g$ is robust at radius $R$, \textsc{Predict} will be vulnerable in a certain sense to adversarial perturbations with $\ell_2$ norm slightly less than $R$.
By engineering a perturbation $\delta$ for which $f(x+\delta+\varepsilon)$ puts mass just over $\frac{1}{2}$ on class $c_A$ and mass just under $\frac{1}{2}$ on class $c_B$, an adversary can force \textsc{predict} to abstain at a high rate.
If this scenario is of concern, a variant of Theorem \ref{mainbound} could be proved to certify a radius in which $\mathbb{P}(f(x+\delta + \varepsilon) = c_A)$ is larger by some margin than $\max_{c \neq c_A} \mathbb{P}(f(x+\delta+\varepsilon) = c)$.

\subsubsection{Certification}
Evaluating \emph{and} certifying the robustness of $g$ around an input $x$ requires not only identifying the class $c_A$ with maximal weight in $f(x+\varepsilon)$, but also estimating a lower bound $\underline{p_A}$ on the probability that $f(x+\varepsilon) = c_A$ and an upper bound $\overline{p_B}$ on the probability that $f(x+\varepsilon)$ equals any other class.
Doing all three of these at the same time in a statistically correct manner requires some care.
One simple solution is presented in pseudocode as \textsc{Certify}: first, use a small number of samples from $f(x+\varepsilon)$ to take a guess at $c_A$; then use a larger number of samples to estimate $\underline{p_A}$; then simply take $\overline{p_B} = 1-\underline{p_A}$. 

\begin{proposition}
\label{thm:certify-correct}
With probability at least $1 - \alpha$ over the randomness in \textsc{Certify}, if  \textsc{Certify} returns a class $\hat{c}_A$ and a radius $R$ (i.e. does not abstain), then $g$ predicts $\hat{c}_A$ within radius $R$ around $x$:  $g(x + \delta) = \hat{c}_A \;\; \forall \; \|\delta\|_2 < R$.
\end{proposition}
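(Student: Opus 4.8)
The plan is to reduce the correctness of \textsc{Certify} to Theorem~\ref{mainbound} by checking that, with high probability, the quantities $\hat{c}_A$, $\underline{p_A}$, and $\overline{p_B} := 1 - \underline{p_A}$ produced by the algorithm satisfy the hypothesis (\ref{knownfacts}), and that the radius $\sigma\,\Phi^{-1}(\underline{p_A})$ returned by the algorithm coincides with the radius (\ref{radius}) guaranteed by that theorem. All of the probabilistic content will be packed into a single ``good event'' about the lower confidence bound; once we condition on that event, the rest of the argument is deterministic.

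First I would isolate the statistical guarantee of \textsc{LowerConfBound}. For any \emph{fixed} class $c$, the entry $\texttt{counts}[c]$ is distributed as $\mathrm{Binomial}(n, p_c)$ with $p_c = \mathbb{P}(f(x+\varepsilon) = c)$, so by construction $\mathbb{P}\bigl(\textsc{LowerConfBound}(\texttt{counts}[c], n, 1-\alpha) \le p_c\bigr) \ge 1 - \alpha$. The subtlety is that \textsc{Certify} applies this bound not to a fixed class but to $\hat{c}_A$, which is itself random. The resolution---and the step I expect to be the main obstacle---is that \texttt{counts0} (used to select $\hat{c}_A$) and \texttt{counts} (used to compute $\underline{p_A}$) come from two \emph{independent} invocations of \textsc{SampleUnderNoise}. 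Conditioning on $\hat{c}_A = c$ therefore leaves \texttt{counts} distributed exactly as a fresh binomial sample for class $c$, so the confidence guarantee survives the data-dependent choice of class; integrating over the law of $\hat{c}_A$ yields that the good event $E := \{\,\underline{p_A} \le \mathbb{P}(f(x+\varepsilon) = \hat{c}_A)\,\}$ satisfies $\mathbb{P}(E) \ge 1 - \alpha$. This independence is precisely why the pseudocode draws two separate sample batches rather than reusing one, and it is the point a careless proof would botch by ignoring the selection (multiple-comparisons) issue.

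Finally I would work deterministically on $E$, assuming \textsc{Certify} does not abstain, so that $\underline{p_A} > \tfrac12$. On $E$ we have $\mathbb{P}(f(x+\varepsilon) = \hat{c}_A) \ge \underline{p_A}$; since $\underline{p_A} > \tfrac12$ we get $\underline{p_A} \ge 1 - \underline{p_A} = \overline{p_B}$; and $\max_{c \neq \hat{c}_A} \mathbb{P}(f(x+\varepsilon) = c) \le 1 - \mathbb{P}(f(x+\varepsilon)=\hat{c}_A) \le 1 - \underline{p_A} = \overline{p_B}$, so the full chain (\ref{knownfacts}) holds with $c_A = \hat{c}_A$. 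Theorem~\ref{mainbound} then certifies robustness within $R = \tfrac{\sigma}{2}\bigl(\Phi^{-1}(\underline{p_A}) - \Phi^{-1}(\overline{p_B})\bigr)$, and using the symmetry $\Phi^{-1}(1-p) = -\Phi^{-1}(p)$ together with $\overline{p_B} = 1 - \underline{p_A}$ this collapses to $R = \tfrac{\sigma}{2}\bigl(\Phi^{-1}(\underline{p_A}) + \Phi^{-1}(\underline{p_A})\bigr) = \sigma\,\Phi^{-1}(\underline{p_A})$, exactly the radius returned by the algorithm. Hence on $E$, whenever \textsc{Certify} outputs $(\hat{c}_A, R)$ we have $g(x+\delta) = \hat{c}_A$ for all $\|\delta\|_2 < R$. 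The failure event---\textsc{Certify} returning an incorrect certificate---is therefore contained in $E^c$, which has probability at most $\alpha$, proving the proposition.
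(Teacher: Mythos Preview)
Your proposal is correct and follows the same route as the paper: establish that $\underline{p_A} \le \mathbb{P}(f(x+\varepsilon) = \hat{c}_A)$ with probability at least $1-\alpha$, then on that event invoke Theorem~\ref{mainbound} with $\overline{p_B} = 1 - \underline{p_A}$ and simplify the radius using $\Phi^{-1}(1-p) = -\Phi^{-1}(p)$. You are in fact more careful than the paper, which states the confidence guarantee for $\hat{c}_A$ directly without explicitly invoking the independence of \texttt{counts0} and \texttt{counts} to justify why the bound survives the data-dependent class selection; your treatment of that point is a genuine improvement in rigor.
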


The function \textsc{LowerConfBound}($k$, $n$, $1 - \alpha$) in the pseudocode returns a one-sided $(1 - \alpha)$ lower confidence interval for the Binomial parameter $p$ given a sample $k \sim \text{Binomial}(n, p)$.

\paragraph{Certifying large radii requires many samples}
Recall from Theorem \ref{mainbound} that $R$ approaches $\infty$ as $\underline{p_A}$ approaches 1.
Unfortunately, it turns out that $\underline{p_A}$ approaches 1 so slowly with $n$ that $R$ also approaches $\infty$ very slowly with $n$.
Consider the most favorable situation: $f(x) = c_A$ everywhere.
This means that $g$ is robust at radius $\infty$.
But after observing $n$ samples of $f(x+\varepsilon)$ which all equal $c_A$, the tightest (to our knowledge) lower bound would say that with probability least $1 - \alpha$,  $p_A \ge \alpha^{(1/n)}$.
Plugging $\underline{p_A} = \alpha^{(1/n)}$ and $\overline{p_B} = 1-\underline{p_A}$ into (\ref{radius})  yields an expression for the certified radius as a function of $n$:  $R = \sigma \, \Phi^{-1}(\alpha^{1/n})$.
Figure \ref{figure:bounds} (right) plots this function for $\alpha=0.001, \sigma = 1$.
Observe that certifying a radius of $4 \sigma$ with 99.9\% confidence would require $\approx 10^5$ samples.

\subsection{Training the base classifier}
\label{section:train-base-classifier}

Theorem \ref{mainbound} holds regardless of how the base classifier $f$ is trained.
However, in order for $g$ to classify the labeled example $(x, c)$ correctly and robustly, $f$ needs to consistently classify $\mathcal{N}(x, \sigma^2 I)$ as $c$.
In high dimension, the Gaussian distribution  $\mathcal{N}(x, \sigma^2 I)$ places almost no mass near its mode $x$.
As a consequence, when $\sigma$ is moderately high, the distribution of natural images has virtually disjoint support from the distribution of natural images corrupted by $\mathcal{N}(0, \sigma^2I)$; see Figure \ref{noisypanda} for a visual demonstration.
Therefore, if the base classifier $f$ is trained via standard supervised learning on the data distribution, it will see no noisy images during training, and hence will not necessarily learn to classify $\mathcal{N}(x, \sigma^2 I)$ with $x$'s true label.
Therefore, in this paper we follow \citet{lecuyer2018certified} and train the base classifier with Gaussian data augmentation at variance $\sigma^2$.
A justification for this procedure is provided in Appendix \ref{section:training-with-noise}.
However, we suspect that there may be room to improve upon this training scheme, perhaps by training the base classifier so as to maximize the smoothed classifier's certified accuracy at some tunable radius $r$.

\begin{figure}[t]
	\begin{center}
	\includegraphics[width=115px]{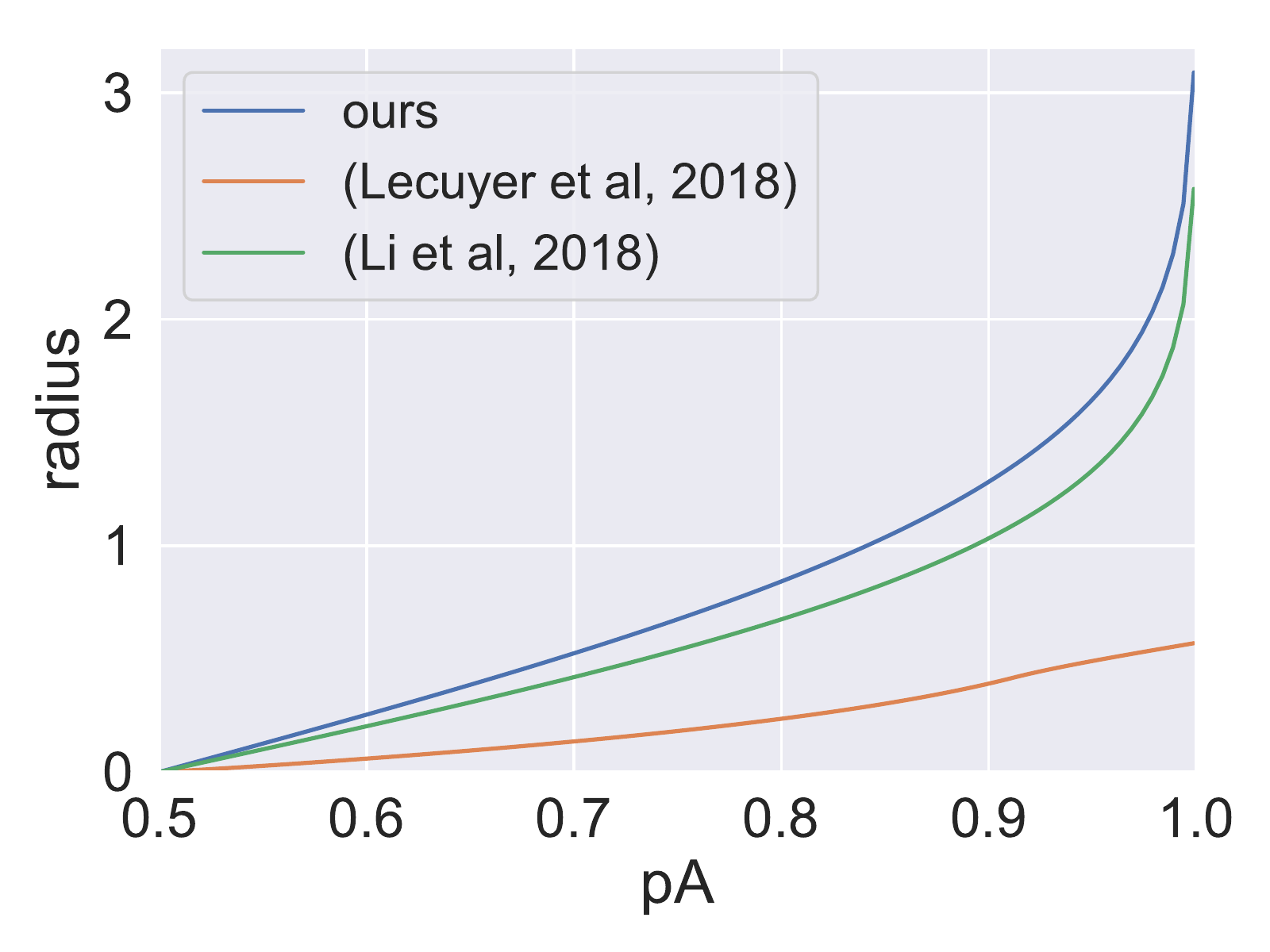}
	\includegraphics[width=115px]{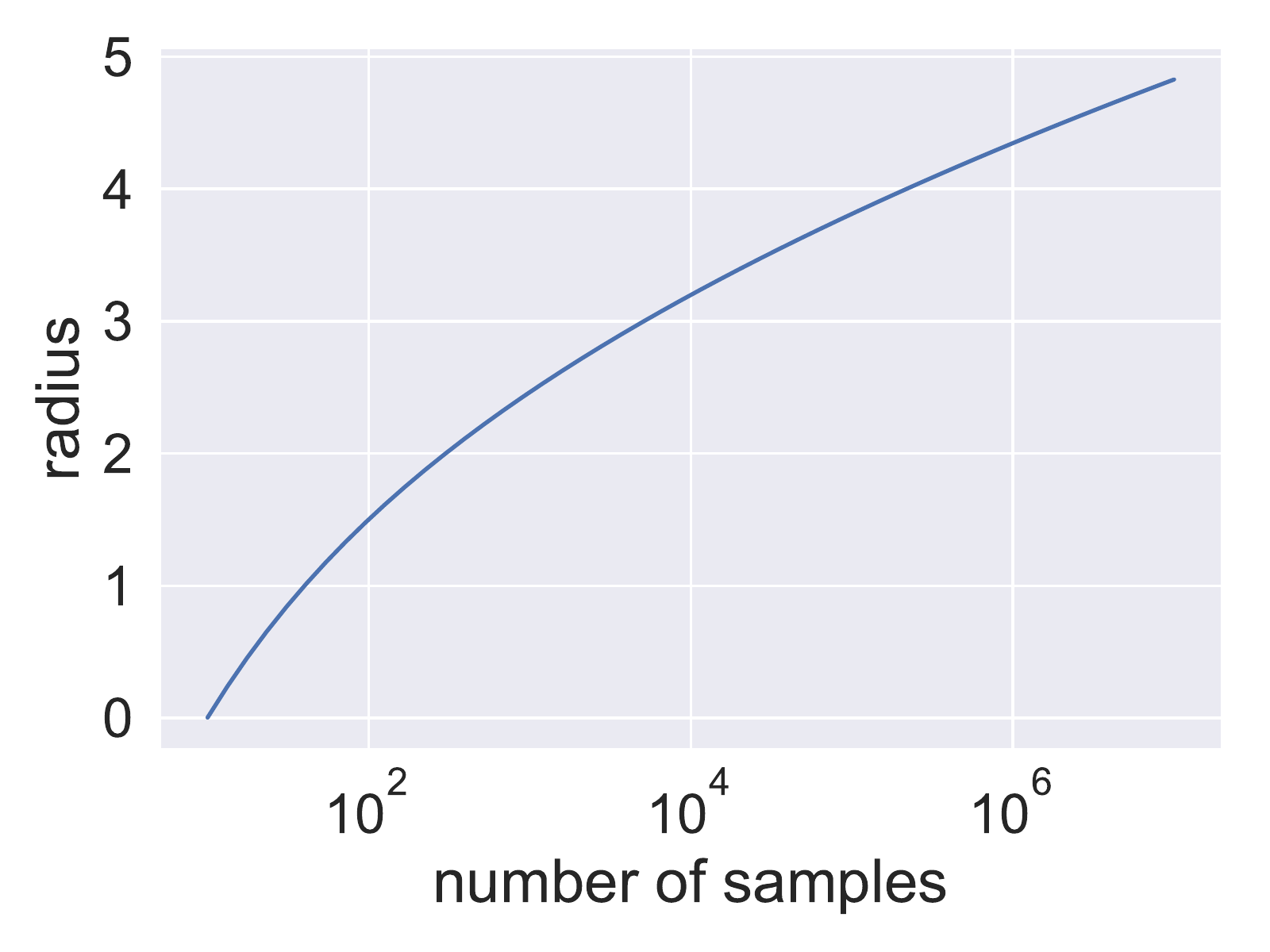}
	\caption{\textbf{Left}: Certified radius $R$ as a function of $\underline{p_A}$ (with $\overline{p_B} = 1 - \underline{p_A}$ and $\sigma = 1$) under all three randomized smoothing bounds.  \textbf{Right}: A plot of $R = \sigma \, \Phi^{-1}(\alpha^{1/n})$ for $\alpha = 0.001$ and $\sigma = 1$.  The radius we can certify with high probability grows slowly with the number of samples, even in the \textit{best} case where $f(x) = c_A$ everywhere.}
	\label{figure:bounds}
	\end{center}
\end{figure}

\section{Experiments}
\label{section:experiments}

In adversarially robust classification, one metric of interest is the \textit{certified test set accuracy} at radius $r$, defined as the fraction of the test set which $g$ classifies correctly with a prediction that is certifiably robust within an $\ell_2$ ball of radius $r$.
However, if $g$ is a randomized smoothing classifier, computing this quantity exactly is not possible, so we instead report the \textit{approximate certified test set accuracy}, defined as the fraction of the test set which \textsc{Certify} classifies correctly (without abstaining) and certifies robust with a radius $R \ge r$.
Appendix \ref{section:highprobability} shows how to convert the approximate certified accuracy into a lower bound on the true certified accuracy that holds with high probability over the randomness in \textsc{Certify}.
However Appendix \ref{section:additionalexperiments:highprobability} demonstrates that when $\alpha$ is small, the difference between these two quantities is negligible.  Therefore, in our experiments we omit the step for simplicity and report approximate certified accuracies.

In all experiments, unless otherwise stated, we ran \textsc{Certify} with $\alpha =0.001$, so there was at most a 0.1\% chance that \textsc{Certify} returned a radius in which $g$ was not truly robust.
Unless otherwise stated, when running \textsc{Certify} we used $n_0 =$ 100 Monte Carlo samples for selection and $n=$ 100,000 samples for estimation.

\begin{figure}[t]
\begin{center}
	\includegraphics[width=210px]{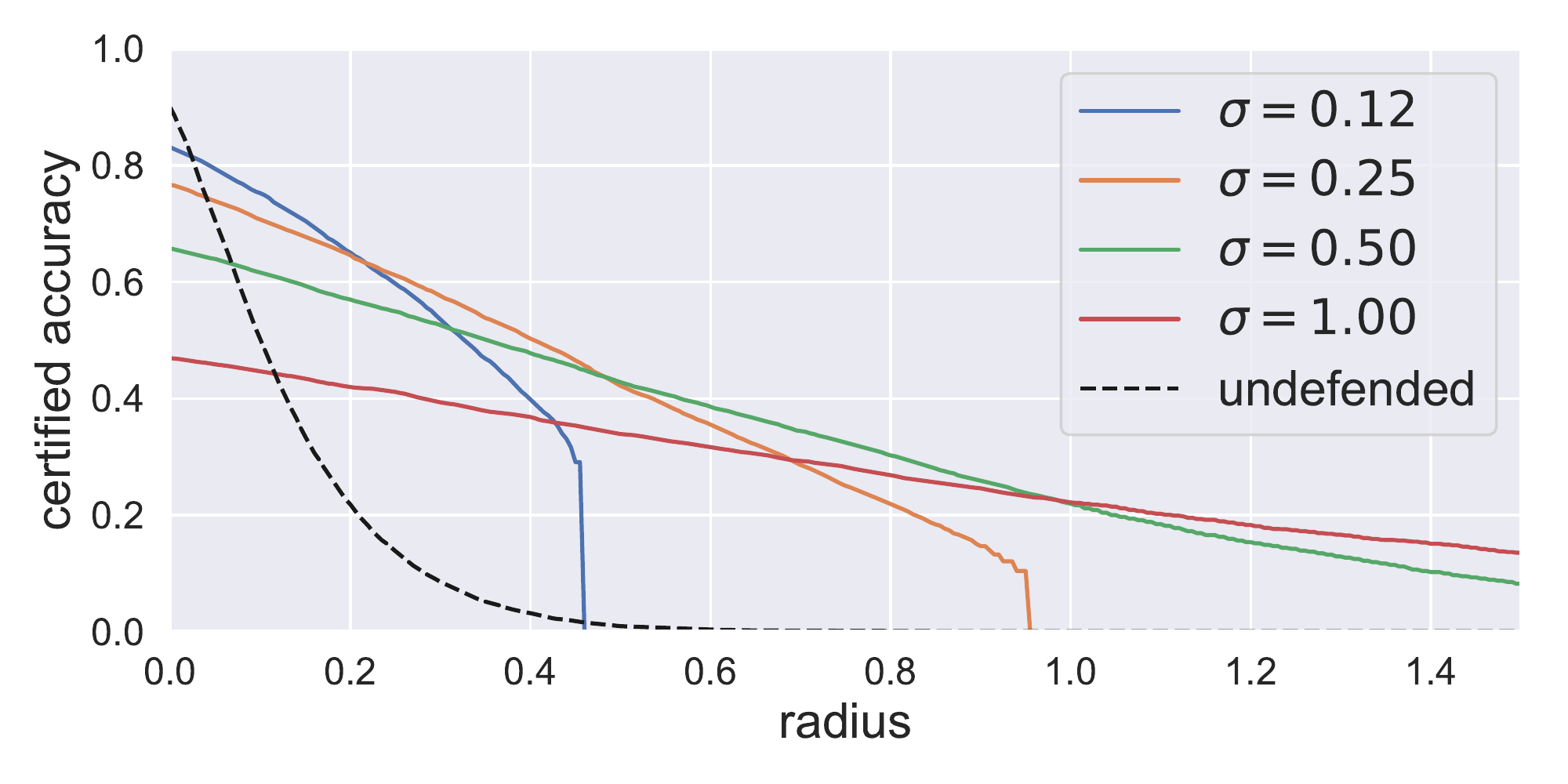}
	\includegraphics[width=210px]{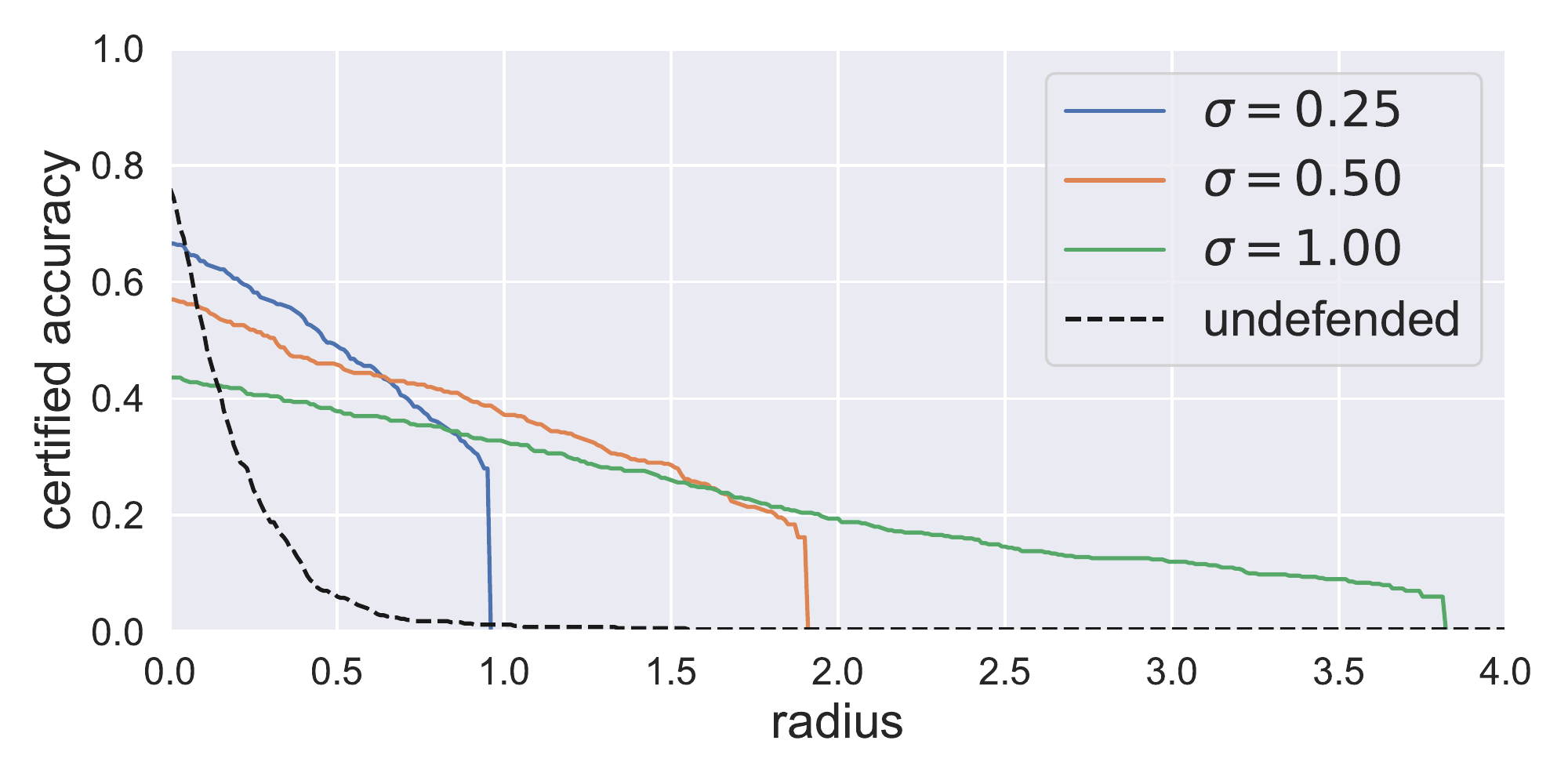}
\caption{Approximate certified accuracy attained by randomized smoothing on CIFAR-10 (\textbf{top}) and ImageNet (\textbf{bottom}).  The hyperparameter $\sigma$ controls a robustness/accuracy tradeoff.  The dashed black line is an upper bound on the empirical robust accuracy of an undefended classifier with the base classifier's architecture.}
\label{fig:certified-accuracy}
\end{center}
\end{figure}

\begin{figure}[t]
\begin{center}
\includegraphics[width=215px]{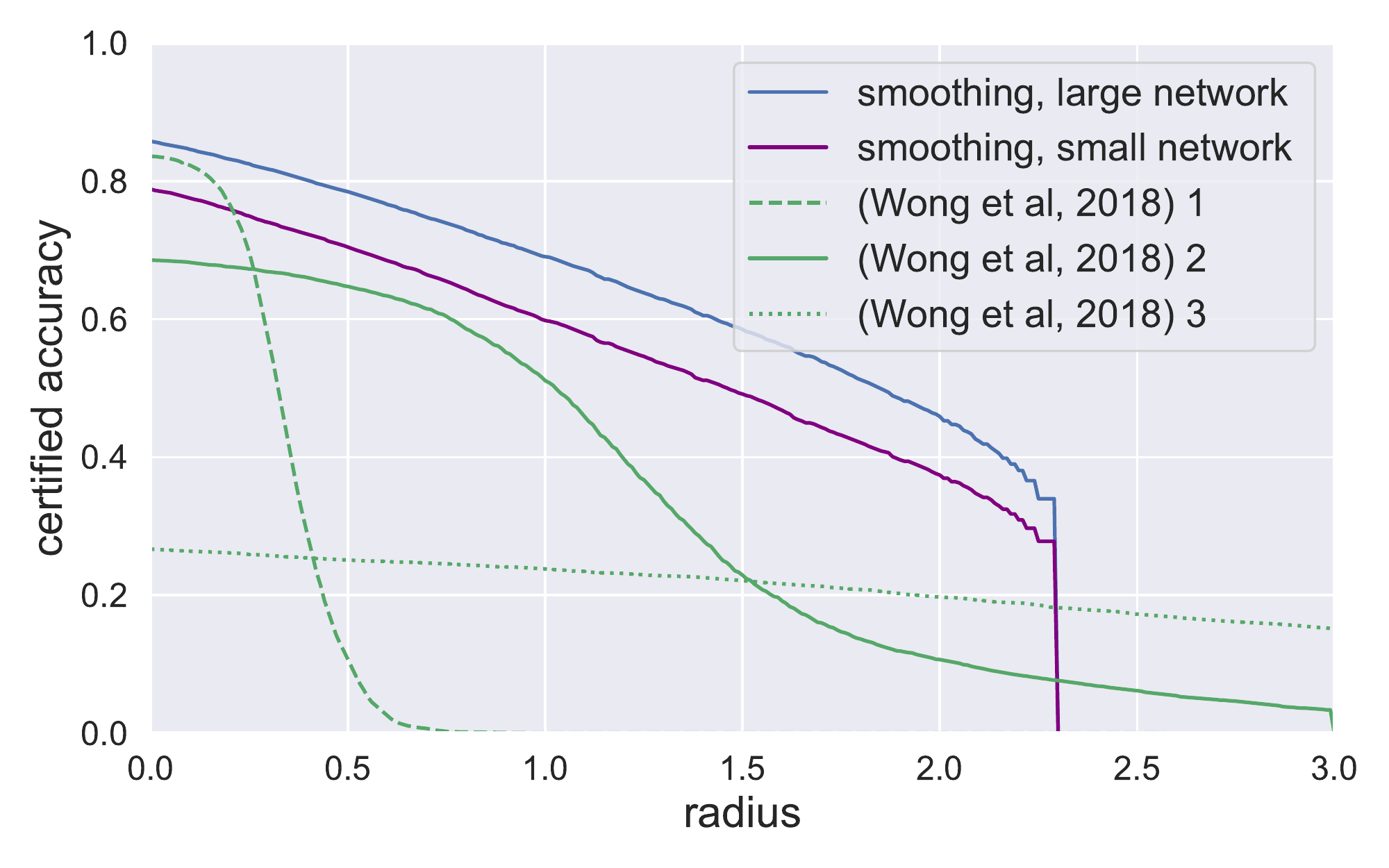}
\end{center}
\caption{Comparison betwen randomized smoothing and \citet{wong2018scaling}.  Each green line is a small resnet classifier trained and certified using the method of \citet{wong2018scaling} with a different setting of its hyperparameter $\epsilon$. 
The purple line is our method using the same small resnet architecture as the base classifier; the blue line is our method with a larger neural network as the base classifier.
\citet{wong2018scaling} gives deterministic robustness guarantees, whereas smoothing gives high-probability guaranatees; therefore, we plot here the certified accuracy of \citet{wong2018scaling} against the ``approximate'' certified accuracy of smoothing.}
\label{fig:compare-wong-fair}
\end{figure}

In the figures above that plot certified accuracy as a function of radius $r$, the certified accuracy always decreases gradually with $r$ until reaching some point where it plummets to zero.
This drop occurs because for each noise level $\sigma$ and number of samples $n$, there is a hard upper limit to the radius we can certify with high probability, achieved when all $n$ samples are classified by $f$ as the same class.

\begin{figure*}[t]
\begin{center}
\begin{subfigure}{0.33\textwidth}
	\includegraphics[width=\textwidth]{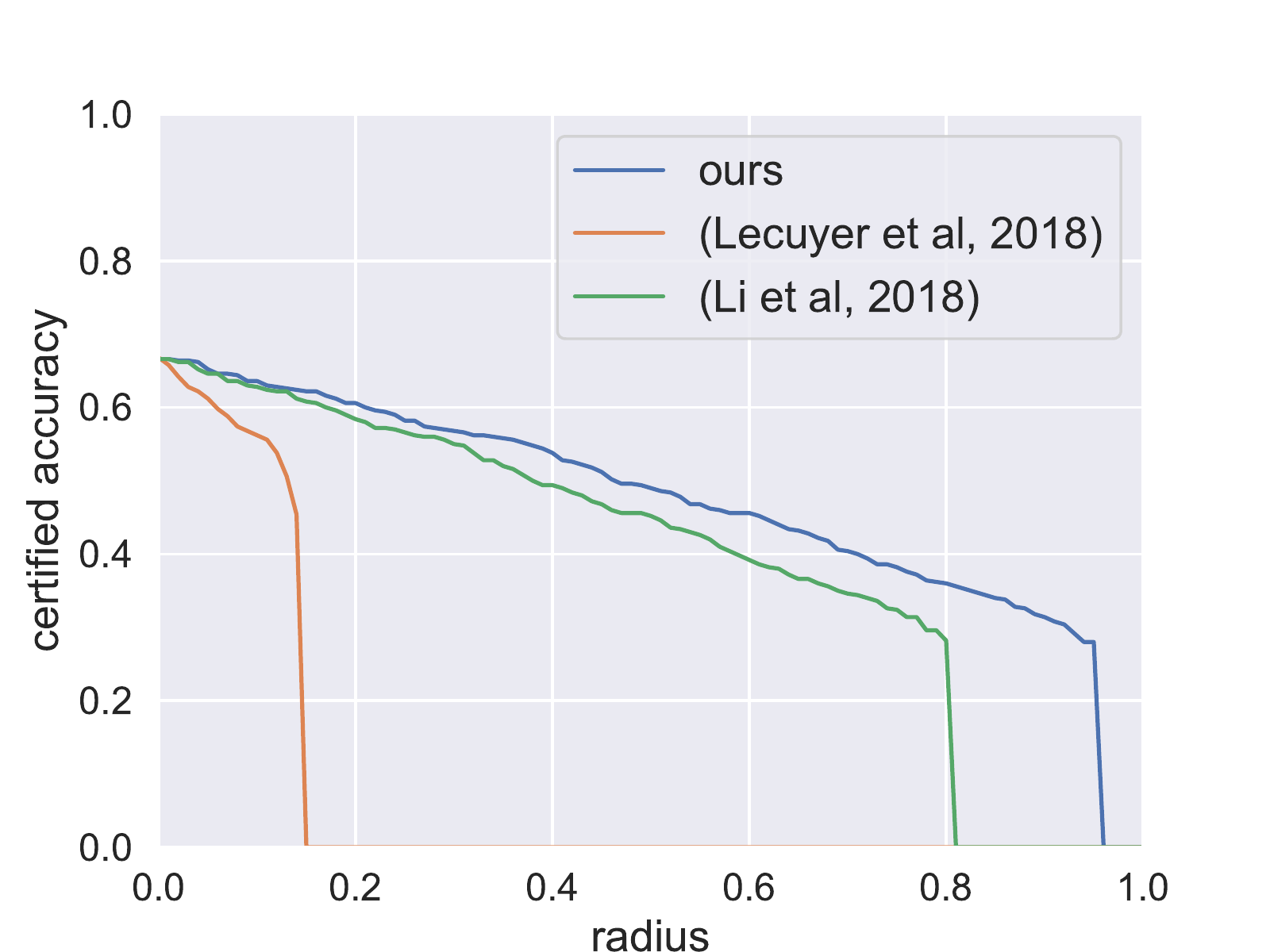}
\end{subfigure}
\begin{subfigure}{0.33\textwidth}
	\includegraphics[width=\textwidth]{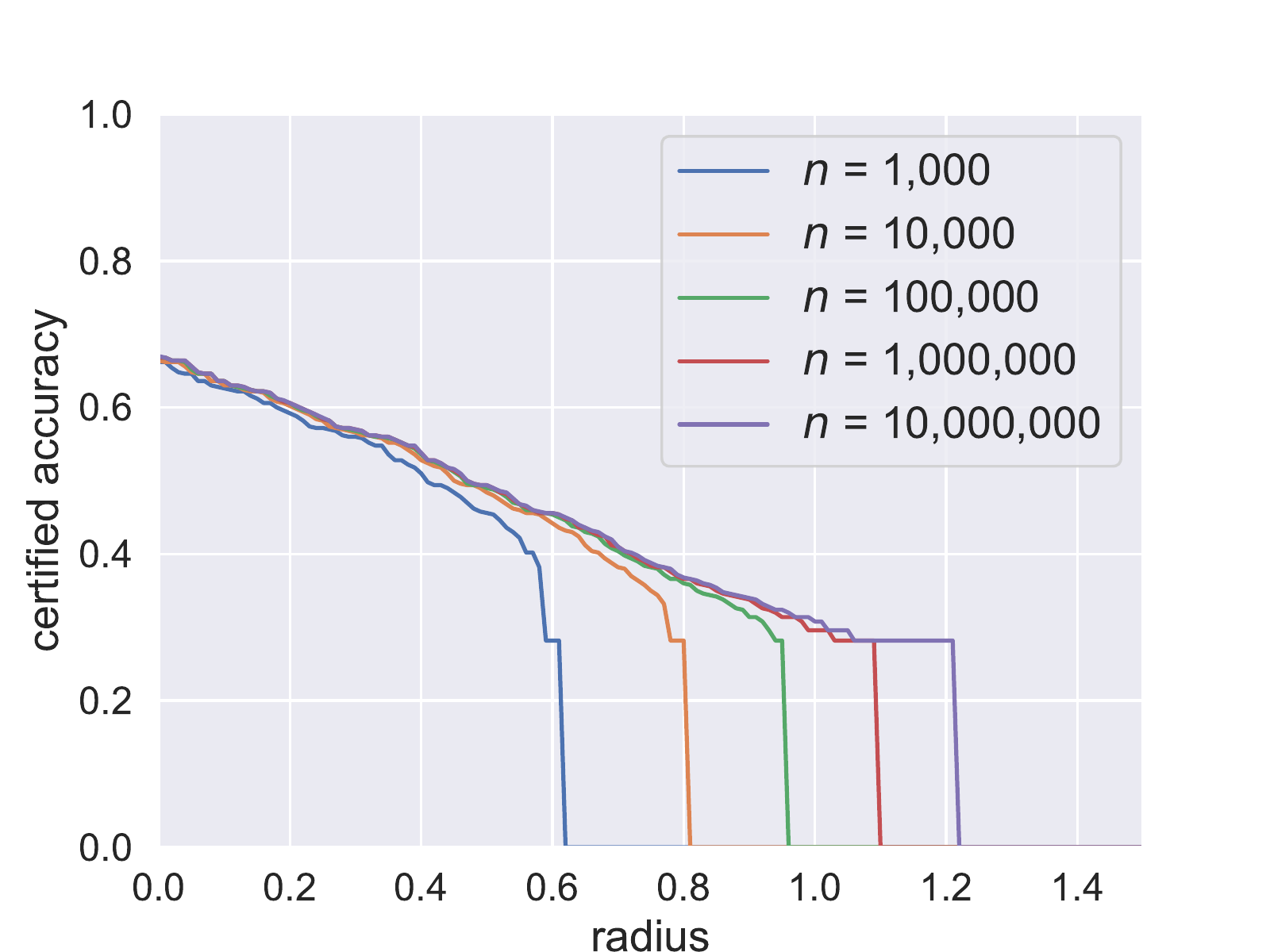}
\end{subfigure}
\begin{subfigure}{0.33\textwidth}
	\includegraphics[width=\textwidth]{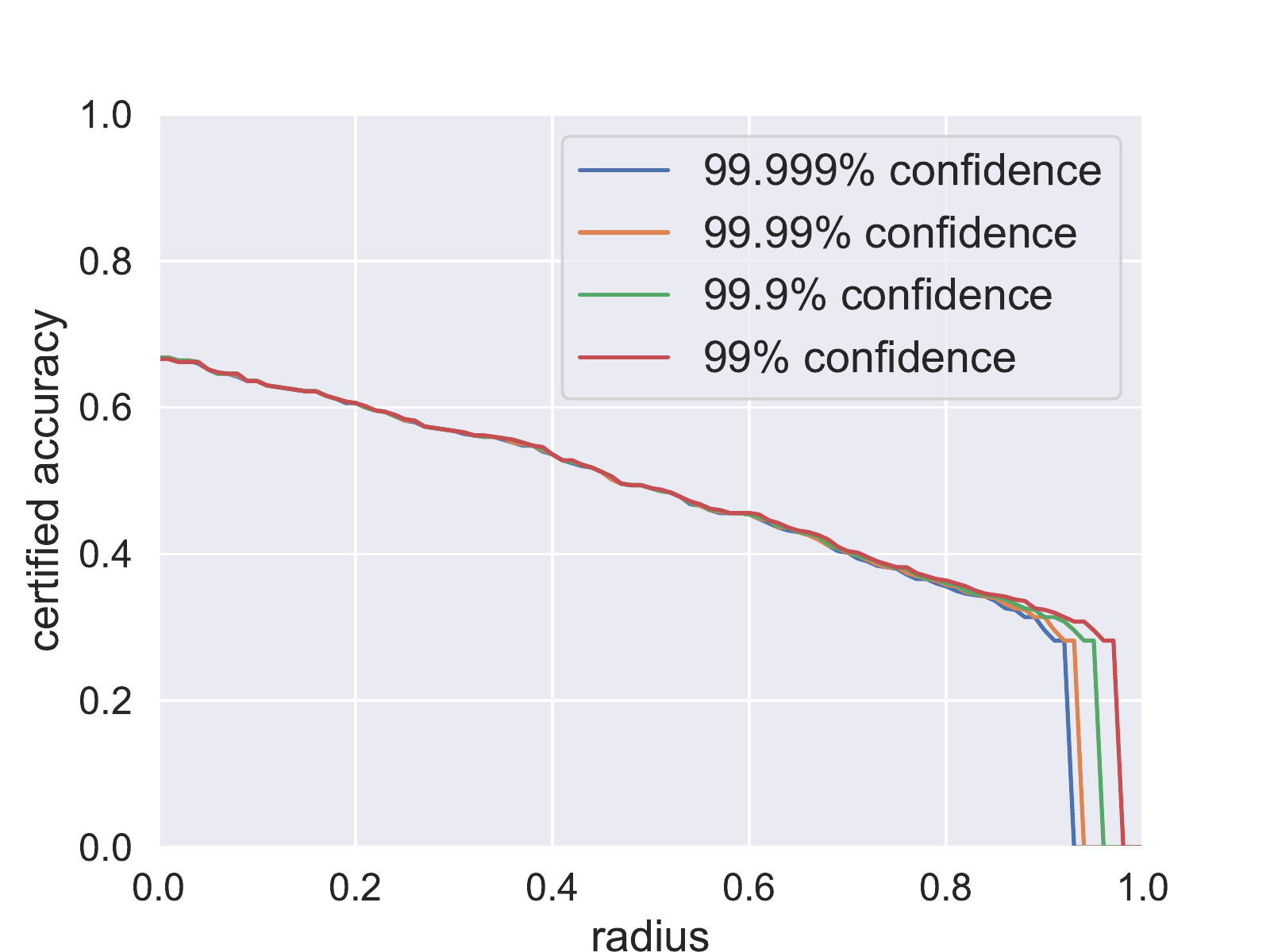}	
\end{subfigure}
\caption{Experiments with randomized smoothing on ImageNet with $\sigma=0.25$.  \textbf{Left}: certified accuracies obtained using our Theorem \ref{mainbound} versus those obtained using the robustness guarantees derived in prior work.  \textbf{Middle}: projections for the certified accuracy if the number of samples $n$ used by \textsc{Certify} had been larger or smaller.  \textbf{Right}: certified accuracy as the failure probability $\alpha$ of \textsc{Certify} is varied. }
\label{fig:ablations}
\end{center}
\end{figure*}

\paragraph{ImageNet and CIFAR-10 results}
We applied randomized smoothing to CIFAR-10 \citep{krizhevsky09learning} and ImageNet \citep{imagenetcvpr09}.
On each dataset we trained several smoothed classifiers, each with a different $\sigma$.
On CIFAR-10 our base classifier was a 110-layer residual network; certifying each example took 15 seconds on an NVIDIA RTX 2080 Ti.
On ImageNet our base classifier was a ResNet-50; certifying each example took 110 seconds.
We also trained a neural network with the base classifier's architecture on clean data, and subjected it to a DeepFool $\ell_2$ adversarial attack  \citep{moosavi2017deepfool}, in order to obtain an empirical upper bound on its robust accuracy.
We certified the full CIFAR-10 test set and a subsample of 500 examples from the ImageNet test set.

Figure \ref{fig:certified-accuracy} plots the certified accuracy attained by smoothing with each $\sigma$.
The dashed black line is the empirical upper bound on the robust accuracy of the base classifier architecture; observe that smoothing improves substantially upon the robustness of the undefended base classifier architecture.
We see that $\sigma$ controls a robustness/accuracy tradeoff.
When $\sigma$ is low, small radii can be certified with high accuracy, but large radii cannot be certified.
When $\sigma$ is high, larger radii can be certified, but smaller radii are certified at a lower accuracy.
This observation echoes the finding in  \citet{tsipras2018robustness} that adversarially trained networks with higher robust accuracy tend to have lower standard accuracy.
Tables of these results are in Appendix \ref{section:experimenttables}.

Figure \ref{fig:ablations} (\textbf{left}) plots the certified accuracy obtained using our Theorem \ref{mainbound} guarantee alongside the certified accuracy obtained using the analogous bounds of \citet{lecuyer2018certified} and \citet{li2018second}.  Since our expression for the certified radius $R$ is greater (and, in fact, tight), our bound delivers higher certified accuracies.
Figure \ref{fig:ablations} (\textbf{middle}) projects how the certified accuracy would have changed had \textsc{Certify} used more or fewer samples $n$ (under the assumption that the relative class proportions in \texttt{counts} would have remained constant).
Finally, Figure \ref{fig:ablations} (\textbf{right}) plots the certified accuracy as the confidence parameter $\alpha$ is varied.
Observe that the certified accuracy is not very sensitive to $\alpha$.

\paragraph{Comparison to baselines}
We compared randomized smoothing to three baseline approaches for certified $\ell_2$ robustness: the duality approach from \citet{wong2018scaling}, the Lipschitz approach from \citet{tsuzuku2018lipschitz}, and the approach from \citet{weng2018toward, zhang2018efficient}.
The strongest baseline was \citet{wong2018scaling}; we defer the comparison to the other two baselines to Appendix \ref{section:additionalexperiments}.

In Figure \ref{fig:compare-wong-fair}, we compare the largest publicly released model from \citet{wong2018scaling}, a small resnet, to two randomized smoothing classifiers: one which used the same small resnet architecture for its base classifier, and one which used a larger 110-layer resnet for its base classifier.
First, observe that smoothing with the large 110-layer resnet substantially outperforms the baseline (across all hyperparameter settings) at all radii.
Second, observe that smoothing with the small resnet also outperformed the method of \citet{wong2018scaling} at all but the smallest radii.
We attribute this latter result to the fact that neural networks trained using the method of \citet{wong2018scaling} are ``typically overregularized to the point that many filters/weights become identically zero,'' per that paper.
In contrast, the base classifier in randomized smoothing is a fully expressive neural network.

\paragraph{Prediction}
It is computationally expensive to certify the robustness of $g$ around a point $x$, since the value of $n$ in \textsc{Certify} must be very large.
However, it is far cheaper to evaluate $g$ at $x$ using \textsc{Predict}, since $n$ can be small.
For example, when we ran \textsc{Predict} on ImageNet ($\sigma = 0.25$) using $n=$ 100, making each prediction only took 0.15 seconds, and we attained a top-1 test accuracy of 65\% (Appendix \ref{section:experimenttables}).  

As discussed earlier, an adversary can potentially force \textsc{Predict} to abstain with high probability.
However, it is relatively rare for \textsc{Predict} to abstain on the actual data distribution.
On ImageNet  ($\sigma = 0.25$), \textsc{Predict} with failure probability $\alpha = 0.001$ abstained 12\% of the time when $n=$ 100, 4\% when $n=$ 1000, and 1\% when $n=$ 10,000.

\paragraph{Empirical tightness of bound}
When $f$ is linear, there always exists a class-changing perturbation just beyond the certified radius.
Since neural networks are not linear, we empirically assessed the tightness of our bound by subjecting an ImageNet smoothed classifier ($\sigma = 0.25$) to a projected gradient descent-style adversarial attack (Appendix  \ref{section:experimentdetails:attacks}).
For each example, we ran \textsc{Certify} with $\alpha = 0.01$, and, if the example was correctly classified and certified robust at radius $R$, we tried finding an adversarial example for $g$ within radius $1.5 R$ and within radius $2R$.
We succeeded 17\% of the time at radius $1.5R$ and 53\% of the time at radius $2R$.

\section{Conclusion}
Theorem \ref{mainboundtight} establishes that smoothing with Gaussian noise naturally confers adversarial robustness in $\ell_2$ norm: if we have no knowledge about the base classifier beyond the distribution of $f(x+\varepsilon)$, then the set of perturbations to which the smoothed classifier is provably robust is precisely an $\ell_2$ ball.
We suspect that smoothing with other noise distributions may lead to similarly natural robustness guarantees for other perturbation sets such as general $\ell_p$ norm balls.

Our strong empirical results suggest that randomized smoothing is a promising direction for future research into adversarially robust classification.
Many empirical approaches have been ``broken,'' and provable approaches based on certifying neural network classifiers have not been shown to scale to networks of modern size.
It seems to be computationally infeasible to reason in any sophisticated way about the decision boundaries of a large, expressive neural network.
Randomized smoothing circumvents this problem: the smoothed classifier is not itself a neural network, though it leverages the discriminative ability of a neural network base classifier.
To make the smoothed classifier robust, one need simply make the base classifier classify well under noise.
In this way, randomized smoothing reduces the unsolved problem of adversarially robust classification to the comparably solved domain of supervised learning.

\newpage

\section{Acknowledgements}

We thank Mateusz Kwa\'snicki for help with Lemma \ref{np-gaussian} in the appendix, Aaditya Ramdas for pointing us toward the work of \citet{hung2017rank}, and Siva Balakrishnan for helpful discussions regarding the confidence interval in Appendix \ref{section:highprobability}.
We thank Tolani Olarinre, Adarsh Prasad, Ben Cousins, Ramon Van Handel, Matthias Lecuyer, and Bai Li for useful conversations.
Finally, we are very grateful to Vaishnavh Nagarajan, Arun Sai Suggala, Shaojie Bai, Mikhail Khodak, Han Zhao, and Zachary Lipton for reviewing drafts of this work.
Jeremy Cohen is supported by a grant from the Bosch Center for AI.

\bibliography{paper}
\bibliographystyle{icml2019}

\newpage

\onecolumn

\appendix

\section{Proofs of Theorems \ref{mainbound} and \ref{mainboundtight}}
\label{section:fullproof}

Here we provide the complete proofs for Theorem \ref{mainbound} and Theorem \ref{mainboundtight}.
We fist prove the following lemma, which is essentially a restatement of the Neyman-Pearson lemma \citep{neyman1933} from statistical hypothesis testing.

\begin{lemma}[\textbf{Neyman-Pearson}]
\label{general-np}
Let $X$ and $Y$ be random variables in $\mathbb{R}^d$ with densities $\mu_X$ and $\mu_Y$.
Let $h: \mathbb{R}^d \to \{0, 1\}$ be a random or deterministic function.
Then:
\begin{enumerate}
\item If $S = \left\{z \in \mathbb{R}^d: \frac{\mu_Y(z)}{\mu_X(z)} \le t \right\}$ for some $t > 0$ and $\mathbb{P}(h(X) = 1) \ge \mathbb{P}(X \in S)$, then $\mathbb{P}(h(Y) = 1) \ge \mathbb{P}(Y \in S)$.
\item If $S = \left\{z \in \mathbb{R}^d: \frac{\mu_Y(z)}{\mu_X(z)} \ge t \right\}$ for some $t > 0$ and $\mathbb{P}(h(X) = 1) \le \mathbb{P}(X \in S)$, then  $\mathbb{P}(h(Y) = 1) \le \mathbb{P}(Y \in S)$.
\end{enumerate}
\end{lemma}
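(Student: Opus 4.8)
The plan is to first remove the randomness in $h$, then run the classical Neyman--Pearson comparison. Since $h$ may be random, I would define the \emph{acceptance function} $\phi(z) = \mathbb{P}(h(z) = 1)$, where the probability is over the internal coins of $h$, taken independent of $X$ and $Y$. By the law of total probability together with Tonelli's theorem (legitimate because the integrand is bounded by $1$), $\mathbb{P}(h(X) = 1) = \int_{\mathbb{R}^d} \phi(z)\,\mu_X(z)\,dz$ and $\mathbb{P}(h(Y) = 1) = \int_{\mathbb{R}^d} \phi(z)\,\mu_Y(z)\,dz$. This recasts both hypotheses and both conclusions as statements about integrals of the single bounded function $\phi \colon \mathbb{R}^d \to [0,1]$ against the two densities, with the indicator $\mathbf{1}_S$ playing the role of the ``ideal'' likelihood-ratio test.

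For part (1), the goal becomes: given $\int (\phi - \mathbf{1}_S)\,\mu_X \ge 0$, show $\int (\phi - \mathbf{1}_S)\,\mu_Y \ge 0$. The key fact I would isolate is a single pointwise inequality that holds for \emph{every} $z$:
\[
(\phi(z) - \mathbf{1}_S(z))\,(\mu_Y(z) - t\,\mu_X(z)) \ge 0.
\]
To verify it I split on $S$ and its complement. On $S$ we have $\mathbf{1}_S(z) = 1$, so $\phi(z) - 1 \le 0$, while the defining condition $\mu_Y/\mu_X \le t$ gives $\mu_Y - t\,\mu_X \le 0$; the two nonpositive factors have nonnegative product. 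On $S^c$ we have $\mathbf{1}_S(z) = 0$, so $\phi(z) \ge 0$, while $\mu_Y/\mu_X > t$ gives $\mu_Y - t\,\mu_X \ge 0$; again the product is nonnegative.

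Integrating this pointwise inequality yields $\int(\phi - \mathbf{1}_S)\,\mu_Y \ge t\int(\phi - \mathbf{1}_S)\,\mu_X$, and since $t > 0$ the right-hand side is nonnegative by hypothesis. Unwinding $\phi$ back into probabilities gives exactly $\mathbb{P}(h(Y) = 1) \ge \mathbb{P}(Y \in S)$. Part (2) is the mirror image: now $S$ is the region $\mu_Y/\mu_X \ge t$, the hypothesis reads $\int(\phi - \mathbf{1}_S)\,\mu_X \le 0$, and the signs in the pointwise inequality flip to $(\phi - \mathbf{1}_S)(\mu_Y - t\,\mu_X) \le 0$, which after integration delivers the reversed conclusion.

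I expect essentially all the conceptual content to live in the pointwise inequality, so the main obstacles are bookkeeping rather than ideas: getting the direction of each inequality right in the two regions (the step most prone to sign errors), and cleanly handling the set where $\mu_X(z) = 0$ so that the likelihood ratio is well defined. The latter is harmless in the intended application, where $\mu_X$ is a Gaussian density and hence strictly positive everywhere, so that set is empty; in the general statement one checks that it carries no mass affecting the argument.
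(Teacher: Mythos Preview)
Your proposal is correct and is essentially the same argument as the paper's: both reduce to the inequality $\int(\phi-\mathbf{1}_S)\mu_Y \ge t\int(\phi-\mathbf{1}_S)\mu_X$ obtained by comparing $\mu_Y$ to $t\mu_X$ separately on $S$ and $S^c$. Your packaging via the single pointwise inequality $(\phi-\mathbf{1}_S)(\mu_Y-t\mu_X)\ge 0$ is a slightly cleaner presentation of the same computation the paper carries out by explicitly splitting the integrals and using $h(0|z)=1-h(1|z)$ on $S$.
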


\begin{proof}
Without loss of generality, we assume that $h$ is random and write $h(1|x)$ for the probability that $h(x)=1$.

First we prove part 1.  We denote the complement of $S$ as $S^c$.
\begin{align*}
\mathbb{P}(h(Y) = 1) - \mathbb{P}(Y \in S) &= \int_{\mathbb{R}^d} h(1|z) \, \mu_Y(z) dz  - \int_{S} \mu_Y(z) dz \\
&= \left[ \int_{S^c} h(1|z) \mu_Y(z) dz  + \int_S h(1|z) \mu_Y(z) dz  \right] - \left[ \int_S h(1|z) \mu_Y(z) dz + \int_S h(0|z) \mu_Y(z) dz \right] \\
&= \int_{S^c} h(1|z) \mu_Y(z) dz  -  \int_S h(0|z) \mu_Y(z) dz \\
&\ge t \left[ \int_{S^c} h(1|z) \mu_X(z) dz - \int_S h(0|z) \mu_X(z) \right] \\
&= t \left[ \int_{S^c} h(1|z) \mu_X(z) dz + \int_{S} h(1|z) \mu_X(z) dz - \int_S h(1|z) \mu_X(z) dz  - \int_S h(0|z) \mu_X(z) \right] \\
&= t \left[ \int_{\mathbb{R}^d} h(1|z) \mu_X(z) dz - \int_S \mu_X(z) dz \right] \\
&= t \left[ \mathbb{P}(h(X) = 1) - \mathbb{P}(X \in S) \right] \\
&\ge 0
\end{align*}

The inequality in the middle is due to the fact that $\mu_Y(z) \le t \, \mu_X(z) \; \forall z \in S$ and $\mu_Y(z) > t \, \mu_X(z) \; \forall z \in S^c$. 
The inequality at the end is because both terms in the product are non-negative by assumption.

The proof for part 2 is virtually identical, except both ``$\ge$'' become ``$\le$.''
\end{proof}

\paragraph{Remark: connection to statistical hypothesis testing.}
Part 2 of Lemma \ref{general-np} is known in the field of statistical hypothesis testing as the Neyman-Pearson Lemma \citep{neyman1933}.
The hypothesis testing problem is this: we are given a sample that comes from one of two distributions over $\mathbb{R}^d$: either the null distribution $X$ or the alternative distribution $Y$.  We would like to identify which distribution the sample came from.
It is worse to say ``$Y$'' when the true answer is ``$X$'' than to say ``$X$'' when the true answer is ``$Y$.''
Therefore we seek a (potentially randomized) procedure $h: \mathbb{R}^d \to \{0, 1\}$ which returns ``$Y$'' when the sample really came from $X$ with probability no greater than some failure rate $\alpha$.
In particular, out of all such rules $h$, we would like the \textit{uniformly most powerful} one $h^*$, i.e. the rule which is most likely to correctly say ``$Y$'' when the sample really came from $Y$.
\citet{neyman1933} showed that $h^*$ is the rule which returns ``$Y$'' deterministically on the set $S^* = \{z \in \mathbb{R}^d: \frac{\mu_Y(z)}{\mu_X(z)} \ge t \}$ for whichever $t$ makes $\mathbb{P}(X \in S^*) = \alpha$.
In other words, to state this in a form that looks like Part 2 of Lemma \ref{general-np}: if $h$ is a different rule with $\mathbb{P}(h(X) = 1) \le \alpha$, then $h^*$ is more powerful than $h$, i.e. $\mathbb{P}(h(Y) = 1) \le \mathbb{P}(Y \in S^*)$.

Now we state the special case of Lemma \ref{general-np} for when $X$ and $Y$ are isotropic Gaussians.

\begin{lemma}[\textbf{Neyman-Pearson for Gaussians with different means}]
\label{np-gaussian}
Let $X \sim \mathcal{N}(x, \sigma^2 I)$ and $Y \sim \mathcal{N}(x+\delta, \sigma^2 I)$.
Let $h: \mathbb{R}^d \to \{0, 1\}$ be any deterministic or random function.
Then:
\end{lemma}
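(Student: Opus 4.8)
The plan is to obtain this Gaussian special case as an immediate corollary of the general Neyman--Pearson Lemma \ref{general-np}, once we understand the geometry of the likelihood-ratio sets for two isotropic Gaussians that differ only in their means. The only substantive work is a density computation showing that the sublevel and superlevel sets of the likelihood ratio are \emph{exactly} the half-spaces $\{z : \delta^T z \le \beta\}$ and $\{z : \delta^T z \ge \beta\}$ that I expect to appear in the statement.

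First I would write out the two densities, $\mu_X(z) = (2\pi\sigma^2)^{-d/2}\exp(-\|z-x\|_2^2/2\sigma^2)$ and $\mu_Y(z) = (2\pi\sigma^2)^{-d/2}\exp(-\|z-x-\delta\|_2^2/2\sigma^2)$, and form their ratio. The normalizing constants cancel, and expanding the squared norms via $\|z-x\|_2^2 - \|z-x-\delta\|_2^2 = 2\delta^T(z-x) - \|\delta\|_2^2$ gives $\frac{\mu_Y(z)}{\mu_X(z)} = \exp\!\left( \frac{1}{\sigma^2}\, \delta^T(z - x) - \frac{\|\delta\|_2^2}{2\sigma^2} \right)$, which is a strictly increasing function of the scalar $\delta^T z$ whenever $\delta \neq 0$. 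Consequently the set $\{z : \mu_Y(z)/\mu_X(z) \le t\}$ is precisely a half-space $\{z : \delta^T z \le \beta\}$, and $\{z : \mu_Y(z)/\mu_X(z) \ge t\}$ is precisely $\{z : \delta^T z \ge \beta\}$.

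Then I would verify that the correspondence $t \leftrightarrow \beta$ is onto in the right direction: solving the displayed identity for the threshold yields $\beta = \sigma^2 \log t + \delta^T x + \tfrac{1}{2}\|\delta\|_2^2$, so as $t$ ranges over $(0, \infty)$ the induced $\beta$ ranges over all of $\mathbb{R}$. Hence for the arbitrary $\beta$ given in the hypothesis there is always a valid $t > 0$ realizing the stated half-space as a likelihood-ratio set. With this identification, part 1 follows by applying part 1 of Lemma \ref{general-np} to $S = \{z : \delta^T z \le \beta\}$, and part 2 follows by applying part 2 of Lemma \ref{general-np} to $S = \{z : \delta^T z \ge \beta\}$; the probabilistic content is entirely inherited from that lemma.

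The only place demanding care is the degenerate case $\delta = 0$ together with the boundary behavior of the threshold map. When $\delta = 0$ we have $X = Y$, the half-spaces collapse to $\emptyset$ or $\mathbb{R}^d$, and both conclusions reduce directly to their hypotheses, so the lemma holds trivially; for $\delta \neq 0$ the strict monotonicity above both guarantees the bijection between $t$ and $\beta$ and removes any ambiguity about which side of the hyperplane each inequality selects. I expect this bookkeeping to be the main, though minor, obstacle.
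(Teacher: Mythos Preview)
Your proposal is correct and follows essentially the same approach as the paper: both compute the likelihood ratio $\mu_Y/\mu_X$ for the two Gaussians, observe that it has the form $\exp(a\,\delta^T z + b)$ with $a = 1/\sigma^2 > 0$, and conclude that its sublevel and superlevel sets are exactly the half-spaces $\{\delta^T z \le \beta\}$ and $\{\delta^T z \ge \beta\}$, so the result follows directly from Lemma~\ref{general-np}. Your treatment is slightly more thorough in that you explicitly dispose of the degenerate case $\delta = 0$, which the paper leaves implicit.
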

\begin{enumerate}
\item If $S = \left \{z \in \mathbb{R}^d: \delta^T z \le \beta \right \}$ for some $\beta$ and  $\mathbb{P}(h(X) = 1) \ge \mathbb{P}(X \in S)$, then $\mathbb{P}(h(Y) = 1) \ge \mathbb{P}(Y \in S)$
\item If $S = \left \{z \in \mathbb{R}^d: \delta^T z \ge \beta \right \}$ for some $\beta$ and $\mathbb{P}(h(X) = 1) \le \mathbb{P}(X \in S)$, then $\mathbb{P}(h(Y) = 1) \le \mathbb{P}(Y \in S)$
\end{enumerate}

\begin{proof}
This lemma is the special case of Lemma \ref{general-np} when $X$ and $Y$ are isotropic Gaussians with means $x$ and $x+\delta$.

By Lemma \ref{general-np} it suffices to simply show that for any $\beta$, there is some $t > 0$ for which:
\begin{align}
\{z: \delta^T z \le \beta \} =  \left\{z: \frac{\mu_Y(z)}{\mu_X(z)} \le t \right\} \quad \text{and} \quad \{z: \delta^T z \ge \beta \} =  \left\{z: \frac{\mu_Y(z)}{\mu_X(z)} \ge t \right\} 
\end{align}

The likelihood ratio for this choice of $X$ and $Y$ turns out to be:
\begin{align*}
\frac{\mu_Y(z)}{\mu_X(z)} &= \frac{\exp \left(-\frac{1}{2 \sigma^2} \sum_{i=1}^d (z_i - (x_i + \delta_i))^2) \right)}{ \exp \left(- \frac{1}{2 \sigma^2} \sum_{i=1}^d (z_i - x_i)^2 \right) } \\
&= \exp \left ( \frac{1}{2 \sigma^2} \sum_{i=1}^d 2 z_i \delta_i - \delta_i^2 - 2 x_i \delta_i \right) \\
&= \exp(a \delta^T z + b)
\end{align*}
where $a > 0$ and $b$ are constants w.r.t $z$, specifically $a = \frac{1}{\sigma^2}$ and $b = \frac{-(2\delta^T x + \|\delta\|^2)}{2 \sigma^2}$.

Therefore, given any $\beta$ we may take $t = \exp(a \beta + b)$, noticing that 
\begin{align*}
\delta^T z \le \beta &\iff \exp(a\delta^T z + b) \le t \\
\delta^T z \ge \beta &\iff \exp(a\delta^T z + b) \ge t
\end{align*}

\end{proof}

Finally, we prove Theorem \ref{mainbound} and Theorem \ref{mainboundtight}.

\textbf{Theorem \ref{mainbound} (restated).}
\textit{
Let $f: \mathbb{R}^d \to \mathcal{Y}$ be any deterministic or random function.  Let $\varepsilon \sim \mathcal{N}(0, \sigma^2 I)$.
Let $g(x) = \argmax_c \mathbb{P}(f(x+\varepsilon) = c)$.
Suppose that for a specific $x \in \mathbb{R}^d$, there exist $c_A \in \mathcal{Y}$ and $\underline{p_A}, \overline{p_B} \in [0, 1]$ such that:
\small
\begin{align}
\label{knownfacts2}
\mathbb{P}(f(x+\varepsilon) = c_A) \ge \underline{p_A} \ge \overline{p_B }\ge \max_{c \neq c_A} \mathbb{P}(f(x+ \varepsilon) = c)
\end{align}
\normalsize
Then $g(x+\delta) = c_A$ for all $\|\delta\|_2 < R$, where
\begin{align}
R = \frac{\sigma}{2} (\Phi^{-1}(\underline{p_A}) - \Phi^{-1}(\overline{p_B}))
\end{align}}

\begin{proof}
To show that $g(x+\delta) = c_A$, it follows from the definition of $g$ that we need to show that
\begin{align*}
\mathbb{P}(f(x+\delta+\varepsilon) = c_A) > \max_{c_B  \neq c_A} \mathbb{P}(f(x+\delta+\varepsilon) = c_B)
\end{align*}
We will prove that $\mathbb{P}(f(x+\delta+\varepsilon) = c_A)  > \mathbb{P}(f(x+\delta+\varepsilon) = c_B)$ for every class $c_B \neq c_A$.  Fix one such class $c_B$ without loss of generality.

For brevity, define the random variables
\begin{align*}
X &:= x + \varepsilon = \mathcal{N}(x, \sigma^2 I) \\
Y &:= x+\delta+ \varepsilon = \mathcal{N}(x + \delta, \sigma^2 I)
\end{align*}
In this notation, we know from (\ref{knownfacts2}) that
\begin{align}
\label{foo}
\mathbb{P}(f(X) = c_A) \ge \underline{p_A} \quad \text{and} \quad \mathbb{P}(f(X) = c_B) \le \overline{p_B}
\end{align}
and our goal is to show that
\begin{align}
\label{desired}
\mathbb{P}(f(Y) = c_A) > \mathbb{P}(f(Y) = c_B)
\end{align}

Define the half-spaces:
\begin{align*}
A &:= \{z:  \delta^T (z - x) \le \sigma \|\delta\| \Phi^{-1}(\underline{p_A}) \} \\
B &:= \{z: \delta^T (z - x) \ge \sigma \|\delta\| \Phi^{-1}(1 - \overline{p_B}) \}
\end{align*}
Algebra (deferred to the end) shows that $\mathbb{P}(X \in A) = \underline{p_A}$.
Therefore, by (\ref{foo}) we know that 
$ \mathbb{P}(f(X) = c_A) \ge \mathbb{P}(X \in A)$.
Hence we may apply Lemma \ref{np-gaussian} with $h(z) := \mathbf{1}[f(z) = c_A]$ to conclude:
\begin{align}
\mathbb{P}(f(Y) = c_A) &\ge \mathbb{P}(Y \in A) \label{baz1}
\end{align}
Similarly, algebra shows that $\mathbb{P}(X \in B) = \overline{p_B}$.
Therefore, by (\ref{foo}) we know that $ \mathbb{P}(f(X) = c_B) \le \mathbb{P}(X \in B)$.
Hence we may apply Lemma \ref{np-gaussian} with $h(z) := \mathbf{1}[f(z)= c_B]$ to conclude:
\begin{align}
\mathbb{P}(f(Y) = c_B) &\le \mathbb{P}(Y \in B) \label{baz2}
\end{align}

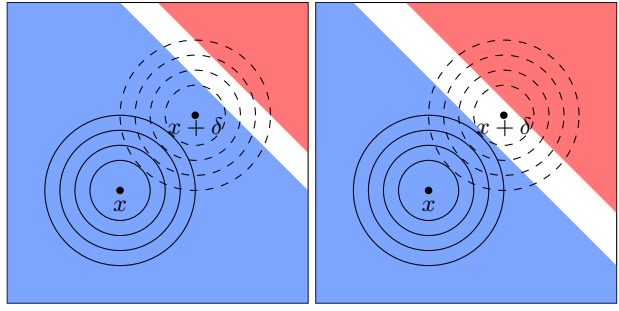
\begin{figure}[t]
\vskip 0.2in
\begin{center}
\begin{tikzpicture}[scale=1]
\draw (0,0) rectangle (4, 4);
\fill[blue, fill opacity=0.6] (0,0) --(0,4) -- (1.5, 4) -- (4, 1.5) -- (4, 0);
\fill[red, fill opacity=0.6] (2,4) --(4,4) -- (4, 2);
\fill[black] (2.5,2.5) circle (.05);
\draw[dashed] (2.5,2.5) circle (0.4);
\draw[dashed] (2.5,2.5) circle (0.6);
\draw[dashed] (2.5,2.5) circle (0.8);
\draw[dashed] (2.5,2.5) circle (1.0);
\fill[black] (1.5,1.5) circle (.05);
\draw[solid] (1.5,1.5) circle (0.4);
\draw[solid] (1.5,1.5) circle (0.6);
\draw[solid] (1.5,1.5) circle (0.8);
\draw[solid] (1.5,1.5) circle (1.0);
\node at (2.5, 2.35) {\small $x+\delta$};
\node at (1.5, 1.3) {\small $x$};
\end{tikzpicture}
\begin{tikzpicture}[scale=1]
\draw (0,0) rectangle (4, 4);
\fill[blue, fill opacity=0.6] (0,0) --(0,4) -- (0.5, 4) -- (4, 0.5) -- (4, 0);
\fill[red, fill opacity=0.6] (1.2, 4) --(4, 4) -- (4, 1.2);
\fill[black] (2.5,2.5) circle (.05);
\draw[dashed] (2.5,2.5) circle (0.4);
\draw[dashed] (2.5,2.5) circle (0.6);
\draw[dashed] (2.5,2.5) circle (0.8);
\draw[dashed] (2.5,2.5) circle (1.0);
\fill[black] (1.5,1.5) circle (.05);
\draw[solid] (1.5,1.5) circle (0.4);
\draw[solid] (1.5,1.5) circle (0.6);
\draw[solid] (1.5,1.5) circle (0.8);
\draw[solid] (1.5,1.5) circle (1.0);
\node at (2.5, 2.35) {\small $x+\delta$};
\node at (1.5, 1.3) {\small $x$};
\end{tikzpicture}

\caption{Illustration of the proof of Theorem \ref{mainbound}. 
The solid line concentric circles are the density level sets of $X:=x+\varepsilon$; the dashed line concentric circles are the level sets of  $Y:=x+\delta+\varepsilon$.
The set $A$ is in blue and the set $B$ is in red.
The figure on the left depicts a situation where $\mathbb{P}(Y \in A) > \mathbb{P}(Y \in B)$, and hence $g(x+\delta)$ may equal $c_A$. The figure on the right depicts a situation where $\mathbb{P}(Y \in A) < \mathbb{P}(Y \in B)$ and hence $g(x+\delta) \neq c_A$.}
\label{illustration}
\end{center}
\end{figure}

To guarantee (\ref{desired}), we see from (\ref{baz1},  \ref{baz2}) that it suffices to show that $\mathbb{P}(Y \in A) > \mathbb{P}(Y \in B)$, as this step completes the chain of inequalities
\begin{align}
\mathbb{P}(f(Y) = c_A) \ge  \mathbb{P}(Y \in A) > \mathbb{P}(Y \in B) \ge \mathbb{P}(f(Y) = c_B)
\end{align}

We can compute the following:
\begin{align}
\mathbb{P}(Y \in A) &= \Phi \left( \Phi^{-1}(\underline{p_A}) - \frac{\|\delta\|}{\sigma} \right) \label{eq:explicit-y-a} \\
\mathbb{P}(Y \in B) &= \Phi \left( \Phi^{-1}(\overline{p_B}) + \frac{\|\delta\|}{\sigma} \right) \label{eq:explicit-y-b}
\end{align}

Finally, algebra shows that $\mathbb{P}(Y \in A) > \mathbb{P}(Y \in B)$ if and only if:
\begin{align}
\|\delta\| < \frac{\sigma}{2}(\Phi^{-1}(\underline{p_A}) - \Phi^{-1}(\overline{p_B})) \label{yieldscondition}
\end{align}
which recovers the theorem statement.
\end{proof}

We now restate and prove Theorem \ref{mainboundtight}, which shows that the bound in Theorem \ref{mainbound} is tight.
The assumption below in Theorem \ref{mainboundtight} that $\underline{p_A} + \overline{p_B} \le 1$ is mild: given any $\underline{p_A}$ and $\overline{p_B}$ which do not satisfy this condition, one could have always redefined $\overline{p_B} \leftarrow 1 - \underline{p_A}$ to obtain a Theorem \ref{mainbound} guarantee with a larger certified radius, so there is no reason to invoke Theorem \ref{mainbound} unless $\underline{p_A} + \overline{p_B} \le 1$.

\textbf{Theorem 2 (restated).}  \textit{Assune $\underline{p_A} + \overline{p_B} \le 1$. 
For any perturbation $\delta \in \mathbb{R}^d$ with $\|\delta\|_2 > R$, there exists a base classifier $f^*$ consistent with the observed class probabilities (\ref{knownfacts2}) such that if $f^*$ is the base classifier for $g$, then $g(x+\delta) \neq c_A$.}

\begin{proof}
We re-use notation from the preceding proof.

Pick any class $c_B$ arbitrarily.  Define $A$ and $B$ as above, and consider the function
\begin{align*}
f^*(x) := \begin{cases}
c_A &\mbox{ if } x \in A \\
c_B &\mbox{ if } x \in B \\
\text{other classes} &\mbox{ otherwise}
\end{cases}
\end{align*}
This function is well-defined, since $A \cap B = \emptyset$ provided that $\underline{p_A} + \overline{p_B} \le 1$.

By construction, the function $f^*$ satisfies (\ref{knownfacts2}) with equalities, since
\begin{align*}
\mathbb{P}(f^*(x+\varepsilon) = c_A) = \mathbb{P}(X \in A) = \underline{p_A} \quad \quad \quad \mathbb{P}(f^*(x+\varepsilon) = c_B)  = \mathbb{P}(X \in B) = \overline{p_B}
\end{align*}
It follows from (\ref{eq:explicit-y-a}) and (\ref{eq:explicit-y-b}) that 
\begin{align*}
\mathbb{P}(Y \in A) < \mathbb{P}(Y \in B) \iff \|\delta\|_2 > R
\end{align*}
By assumption, $\|\delta\|_2 > R$, so $\mathbb{P}(Y \in A) < \mathbb{P}(Y \in B)$, or equivalently,
\begin{align*}
\mathbb{P}(f^*(x+\delta+\varepsilon) = c_A) < \mathbb{P}(f^*(x+\delta+\varepsilon) = c_B)
\end{align*}
Therefore, if $f^*$ is the base classifier for $g$, then $g(x+\delta) \neq c_A$.
\end{proof}

\subsubsection{Deferred Algebra}

\begin{claim} $\mathbb{P}(X \in A) = \underline{p_A}$
\end{claim}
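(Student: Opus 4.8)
The plan is to reduce this to a one-dimensional Gaussian tail computation by exploiting the rotational symmetry of the isotropic Gaussian. Recall that $X = x + \varepsilon$ with $\varepsilon \sim \mathcal{N}(0, \sigma^2 I)$, and that
\begin{align*}
A = \{z : \delta^T(z - x) \le \sigma \|\delta\| \, \Phi^{-1}(\underline{p_A})\}.
\end{align*}
Substituting $z = X$ and using $X - x = \varepsilon$, the event $X \in A$ is exactly the event $\delta^T \varepsilon \le \sigma \|\delta\| \, \Phi^{-1}(\underline{p_A})$. So the task is to compute the distribution of the scalar random variable $\delta^T \varepsilon$.

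First I would observe that $\delta^T \varepsilon$ is a fixed linear functional of a Gaussian vector, hence itself Gaussian. Its mean is $\delta^T \mathbb{E}[\varepsilon] = 0$, and its variance is $\delta^T (\sigma^2 I) \delta = \sigma^2 \|\delta\|^2$, so $\delta^T \varepsilon \sim \mathcal{N}(0, \sigma^2 \|\delta\|^2)$. Dividing through by the standard deviation $\sigma \|\delta\|$ (assuming $\delta \neq 0$; the case $\delta = 0$ is vacuous since we only care about $\|\delta\| > 0$, or can be handled by noting $A$ becomes all of $\mathbb{R}^d$ or a half-space boundary set of measure zero) yields a standard normal variable.

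From here the computation closes immediately:
\begin{align*}
\mathbb{P}(X \in A) = \mathbb{P}\!\left( \frac{\delta^T \varepsilon}{\sigma \|\delta\|} \le \Phi^{-1}(\underline{p_A}) \right) = \Phi\!\left(\Phi^{-1}(\underline{p_A})\right) = \underline{p_A},
\end{align*}
where the middle equality uses that the normalized variable is $\mathcal{N}(0,1)$ with CDF $\Phi$, and the last uses that $\Phi^{-1}$ is the inverse of $\Phi$. I do not expect any genuine obstacle here: the only point requiring care is correctly identifying the variance of the projection as $\sigma^2 \|\delta\|^2$ (so that the normalization constant is precisely $\sigma \|\delta\|$, matching the constant baked into the definition of $A$), which is exactly why the threshold in $A$ was chosen with that scaling. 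The companion identity $\mathbb{P}(X \in B) = \overline{p_B}$ follows by the identical argument applied to the half-space $B$, using the symmetry $\Phi^{-1}(1 - \overline{p_B}) = -\Phi^{-1}(\overline{p_B})$.
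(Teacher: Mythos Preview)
Your proof is correct and takes essentially the same approach as the paper: both reduce to a one-dimensional Gaussian computation by recognizing that $\delta^T(X-x) = \delta^T\varepsilon$ is distributed as $\sigma\|\delta\|$ times a standard normal, then apply $\Phi$ and cancel with $\Phi^{-1}$. Your write-up is slightly more explicit about the variance computation and the $\delta = 0$ edge case, but the argument is the same.
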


\begin{proof}
Recall that $X \sim \mathcal{N}(x, \sigma^2 I)$ and $A = \{z:  \delta^T (z - x) \le \sigma \|\delta\| \Phi^{-1}(\underline{p_A}) \}$.
\begin{align*}
\mathbb{P}(X \in A) &= \mathbb{P}(\delta^T (X - x) \le \sigma \|\delta\| \Phi^{-1}(\underline{p_A})) \\
&= \mathbb{P}(\delta^T \mathcal{N}(0, \sigma^2 I)  \le \sigma \|\delta\| \Phi^{-1}(\underline{p_A})) \\
&= \mathbb{P}(\sigma \|\delta\| Z \le \sigma \|\delta\| \Phi^{-1}(\underline{p_A}) ) \tag{$Z \sim \mathcal{N}(0, 1)$} \\
&= \Phi(\Phi^{-1}(\underline{p_A})) \\
&= \underline{p_A}
\end{align*}
\end{proof}

\begin{claim} $\mathbb{P}(X \in B) = \overline{p_B}$
\end{claim}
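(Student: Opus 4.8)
The plan is to mirror the preceding claim almost verbatim, the only differences being the direction of the inequality defining $B$ and the appearance of $1 - \overline{p_B}$ in place of $\underline{p_A}$. Recall that $X \sim \mathcal{N}(x, \sigma^2 I)$ and $B = \{z: \delta^T(z - x) \ge \sigma \|\delta\| \Phi^{-1}(1 - \overline{p_B})\}$. First I would substitute these two definitions to rewrite the target probability as $\mathbb{P}(X \in B) = \mathbb{P}(\delta^T(X - x) \ge \sigma \|\delta\| \Phi^{-1}(1 - \overline{p_B}))$, and observe that $X - x \sim \mathcal{N}(0, \sigma^2 I)$.

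The key observation is that the linear functional $\delta^T(X-x)$ is a one-dimensional Gaussian: a linear combination of the coordinates of an isotropic Gaussian $\mathcal{N}(0, \sigma^2 I)$ has mean $0$ and variance $\sigma^2 \|\delta\|^2$, so $\delta^T(X-x) = \sigma \|\delta\| Z$ in distribution with $Z \sim \mathcal{N}(0,1)$. Dividing both sides of the inequality inside the probability by the positive constant $\sigma \|\delta\|$ (implicitly $\delta \neq 0$, the degenerate case being irrelevant here) reduces the expression to $\mathbb{P}(Z \ge \Phi^{-1}(1 - \overline{p_B}))$. I would then apply the complementary-CDF identity $\mathbb{P}(Z \ge t) = 1 - \Phi(t)$ together with $\Phi(\Phi^{-1}(1 - \overline{p_B})) = 1 - \overline{p_B}$ to obtain $\mathbb{P}(X \in B) = 1 - (1 - \overline{p_B}) = \overline{p_B}$, as desired.

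Since each step is a one-line manipulation identical in spirit to the $\mathbb{P}(X \in A) = \underline{p_A}$ claim, there is no genuine obstacle. The only point requiring a moment of care is the $\ge$ direction: it is precisely why $B$ was defined with the offset $\Phi^{-1}(1 - \overline{p_B})$ rather than $\Phi^{-1}(\overline{p_B})$, so that passing to the upper tail introduces the extra $1 - (\cdot)$ which exactly cancels the $1-$ inside the argument, leaving $\overline{p_B}$.
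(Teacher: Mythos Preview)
Your proposal is correct and follows essentially the same approach as the paper: substitute the definitions of $X$ and $B$, reduce $\delta^T(X-x)$ to $\sigma\|\delta\|Z$ with $Z\sim\mathcal{N}(0,1)$, and then use $\mathbb{P}(Z \ge \Phi^{-1}(1-\overline{p_B})) = 1 - (1-\overline{p_B}) = \overline{p_B}$. The only difference is cosmetic---the paper writes out each line as a displayed chain of equalities, while you narrate the same steps in prose.
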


\begin{proof}
Recall that $X \sim \mathcal{N}(x, \sigma^2 I)$ and $B = \{z:  \delta^T (z - x) \le \sigma \|\delta\| \Phi^{-1}(1 - \overline{p_B}) \}$.
\begin{align*}
\mathbb{P}(X \in A) &= \mathbb{P}(\delta^T (X - x) \ge \sigma \|\delta\| \Phi^{-1}(1 - \overline{p_B})) \\
&= \mathbb{P}(\delta^T \mathcal{N}(0, \sigma^2 I)  \ge \sigma \|\delta\| \Phi^{-1}(1 - \overline{p_B}))  \\
&= \mathbb{P}(\sigma \|\delta\| Z \ge \sigma \|\delta\| \Phi^{-1}(1 - \overline{p_B})) \tag{$Z \sim \mathcal{N}(0, 1)$} \\
&= \mathbb{P}(Z \ge \Phi^{-1}(1 - \overline{p_B})) \\
&= 1 -  \Phi( \Phi^{-1}(1 - \overline{p_B})) \\
&= \overline{p_B}
\end{align*}
\end{proof}

\begin{claim} $\mathbb{P}(Y \in A) = \Phi \left( \Phi^{-1}(\underline{p_A}) - \frac{\|\delta\|}{\sigma} \right) $
\end{claim}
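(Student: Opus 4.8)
The plan is to compute the probability that the shifted Gaussian $Y \sim \mathcal{N}(x+\delta, \sigma^2 I)$ lands in the half-space $A = \{z : \delta^T(z-x) \le \sigma\|\delta\|\,\Phi^{-1}(\underline{p_A})\}$, exactly as we did for $X$ in the first deferred claim, but now tracking the nonzero mean shift. First I would substitute $Y = x + \delta + \varepsilon$ with $\varepsilon \sim \mathcal{N}(0, \sigma^2 I)$ directly into the defining inequality of $A$, so that the event becomes $\delta^T(\delta + \varepsilon) \le \sigma\|\delta\|\,\Phi^{-1}(\underline{p_A})$, i.e. $\delta^T\varepsilon \le \sigma\|\delta\|\,\Phi^{-1}(\underline{p_A}) - \|\delta\|^2$.

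The key step is to recognize that the scalar random variable $\delta^T\varepsilon$ is a one-dimensional Gaussian. Since $\varepsilon \sim \mathcal{N}(0, \sigma^2 I)$, the projection $\delta^T\varepsilon$ has mean $0$ and variance $\sigma^2\|\delta\|^2$, so $\delta^T\varepsilon = \sigma\|\delta\|\,Z$ where $Z \sim \mathcal{N}(0,1)$. I would then rewrite the event as $\sigma\|\delta\|\,Z \le \sigma\|\delta\|\,\Phi^{-1}(\underline{p_A}) - \|\delta\|^2$ and divide through by the positive quantity $\sigma\|\delta\|$ (assuming $\delta \neq 0$, which is implied since $\|\delta\|_2 > R \ge 0$) to obtain the clean condition $Z \le \Phi^{-1}(\underline{p_A}) - \|\delta\|/\sigma$.

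Finally, applying the standard Gaussian CDF $\Phi$ to both sides gives
\begin{align*}
\mathbb{P}(Y \in A) = \mathbb{P}\!\left(Z \le \Phi^{-1}(\underline{p_A}) - \frac{\|\delta\|}{\sigma}\right) = \Phi\!\left(\Phi^{-1}(\underline{p_A}) - \frac{\|\delta\|}{\sigma}\right),
\end{align*}
which is exactly the claimed identity. The computation is almost entirely mechanical and parallels the preceding two claims; there is no genuine obstacle. The only point requiring a modicum of care is the bookkeeping of the mean shift: unlike the claim for $X$, here the term $\delta^T\delta = \|\delta\|^2$ appears and must be carried over to the right-hand side before normalizing, which is precisely what produces the $-\|\delta\|/\sigma$ correction distinguishing $\mathbb{P}(Y \in A)$ from $\mathbb{P}(X \in A) = \underline{p_A}$.
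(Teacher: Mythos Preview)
Your proof is correct and follows essentially the same approach as the paper: substitute $Y = x + \delta + \varepsilon$, reduce the half-space condition to a one-dimensional Gaussian event via $\delta^T\varepsilon \sim \mathcal{N}(0, \sigma^2\|\delta\|^2)$, and normalize to read off the CDF value. The only minor slip is contextual: this claim is invoked in the proof of Theorem~\ref{mainbound} where $\|\delta\|_2 < R$ (not $> R$), but the division by $\sigma\|\delta\|$ is harmless since the $\delta = 0$ case is trivial anyway.
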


\begin{proof}
Recall that $Y \sim \mathcal{N}(x+\delta, \sigma^2 I)$ and $A = \{z:  \delta^T (z - x) \le \sigma \|\delta\| \Phi^{-1}(\underline{p_A}) \}$.
\begin{align*}
\mathbb{P}(Y \in A) &= \mathbb{P}(\delta^T(Y-x) \le \sigma \|\delta\| \Phi^{-1}(\underline{p_A})) \\
&= \mathbb{P}(\delta^T \mathcal{N}(0, \sigma^2 I) + \|\delta\|^2 \le \sigma \|\delta\| \Phi^{-1}(\underline{p_A})) \\
&= \mathbb{P}(\sigma \|\delta\| Z \le \sigma \|\delta\| \Phi^{-1}(\underline{p_A}) - \|\delta\|^2 ) \tag{$Z \sim \mathcal{N}(0, 1)$} \\
&= \mathbb{P} \left(Z \le \Phi^{-1}(\underline{p_A}) - \frac{\|\delta\|}{\sigma} \right) \\
&= \Phi \left( \Phi^{-1}(\underline{p_A}) - \frac{\|\delta\|}{\sigma} \right)
\end{align*}
\end{proof}

\begin{claim} $\mathbb{P}(Y \in B) = \Phi \left( \Phi^{-1}(\overline{p_B}) + \frac{\|\delta\|}{\sigma} \right) $
\end{claim}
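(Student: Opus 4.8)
The plan is to mirror the three preceding algebra claims, substituting the definition of $Y$ into the event $\{Y \in B\}$ and reducing everything to a single standard-normal tail probability. Recall $Y \sim \mathcal{N}(x+\delta, \sigma^2 I)$, so I write $Y = x + \delta + \varepsilon$ with $\varepsilon \sim \mathcal{N}(0, \sigma^2 I)$, and $B = \{z : \delta^T(z-x) \ge \sigma \|\delta\| \, \Phi^{-1}(1 - \overline{p_B})\}$. First I would expand $\delta^T(Y - x) = \delta^T(\delta + \varepsilon) = \|\delta\|^2 + \delta^T \varepsilon$, so that the defining inequality becomes $\|\delta\|^2 + \delta^T \varepsilon \ge \sigma \|\delta\| \, \Phi^{-1}(1 - \overline{p_B})$.

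Next I would observe that $\delta^T \varepsilon$ is a one-dimensional Gaussian with mean $0$ and variance $\sigma^2 \|\delta\|^2$, hence equal in distribution to $\sigma \|\delta\| Z$ with $Z \sim \mathcal{N}(0,1)$. Dividing the inequality through by $\sigma\|\delta\|$ and moving the $\|\delta\|^2$ term to the right yields the equivalent event $Z \ge \Phi^{-1}(1 - \overline{p_B}) - \frac{\|\delta\|}{\sigma}$, so that
\begin{align*}
\mathbb{P}(Y \in B) = \mathbb{P}\!\left(Z \ge \Phi^{-1}(1 - \overline{p_B}) - \tfrac{\|\delta\|}{\sigma}\right).
\end{align*}

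The only non-mechanical ingredients are the two standard-normal symmetry identities $\Phi^{-1}(1 - p) = -\Phi^{-1}(p)$ and $1 - \Phi(t) = \Phi(-t)$, which I would apply in sequence. The first rewrites the threshold as $-\Phi^{-1}(\overline{p_B}) - \frac{\|\delta\|}{\sigma}$; converting the upper tail to a CDF value via $\mathbb{P}(Z \ge s) = 1 - \Phi(s)$ and then applying the second identity with $s = -\Phi^{-1}(\overline{p_B}) - \frac{\|\delta\|}{\sigma}$ collapses everything to $\Phi\!\left(\Phi^{-1}(\overline{p_B}) + \frac{\|\delta\|}{\sigma}\right)$, as claimed. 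This computation is entirely routine and structurally identical to the $\mathbb{P}(X \in B)$ and $\mathbb{P}(Y \in A)$ claims already established; there is no genuine obstacle, though the one point demanding care is tracking the sign flip induced by the $1 - \overline{p_B}$ appearing inside $B$ (rather than $\overline{p_B}$), which is precisely what the symmetry identity $\Phi^{-1}(1-p) = -\Phi^{-1}(p)$ resolves.
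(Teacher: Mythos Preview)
Your proposal is correct and essentially identical to the paper's own proof: both substitute $Y = x+\delta+\varepsilon$, reduce $\delta^T(Y-x)$ to $\|\delta\|^2 + \sigma\|\delta\|Z$ with $Z\sim\mathcal{N}(0,1)$, divide through to isolate $Z$, and then apply the standard-normal symmetry identities to convert the upper-tail probability at threshold $\Phi^{-1}(1-\overline{p_B}) - \|\delta\|/\sigma$ into $\Phi(\Phi^{-1}(\overline{p_B}) + \|\delta\|/\sigma)$. The only cosmetic difference is that the paper collapses the symmetry step into a single line $\mathbb{P}(Z \ge s) = \mathbb{P}(Z \le -s)$ rather than naming the two identities separately.
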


\begin{proof}
Recall that $Y \sim \mathcal{N}(x+\delta, \sigma^2 I)$ and $B = \{z:  \delta^T (z - x) \ge \sigma \|\delta\| \Phi^{-1}(1 - \overline{p_B}) \}$.
\begin{align*}
\mathbb{P}(Y \in B) &= \mathbb{P}(\delta^T(Y-x) \ge \sigma \|\delta\| \Phi^{-1}(1 - \overline{p_B})) \\
&= \mathbb{P}(\delta^T \mathcal{N}(0, \sigma^2 I) + \|\delta\|^2 \ge \sigma \|\delta\| \Phi^{-1}(1 - \overline{p_B})) \\
&= \mathbb{P}(\sigma \| \delta \| Z + \|\delta\|^2 \ge \sigma \|\delta\| \Phi^{-1}(1 - \overline{p_B})) \tag{$Z \sim \mathcal{N}(0, 1)$}  \\
&= \mathbb{P} \left(Z \ge \Phi^{-1}(1 - \overline{p_B}) - \frac{\|\delta\|}{\sigma} \right) \\
&= \mathbb{P} \left(Z \le \Phi^{-1}( \overline{p_B}) +\frac{\|\delta\|}{\sigma} \right) \\
&= \Phi \left( \Phi^{-1}(\overline{p_B}) + \frac{\|\delta\|}{\sigma} \right)
\end{align*}
\end{proof}

\newpage
\section{Smoothing a two-class linear classifier}
\label{section:otherdeferred}
In this appendix, we analyze what happens when the base classifier $f$ is a two-class linear classifier $f(x) = \text{sign}(w^T x + b)$.
To match the definition of $g$, we take $\text{sign}(\cdot)$ to be undefined when its argument is zero.

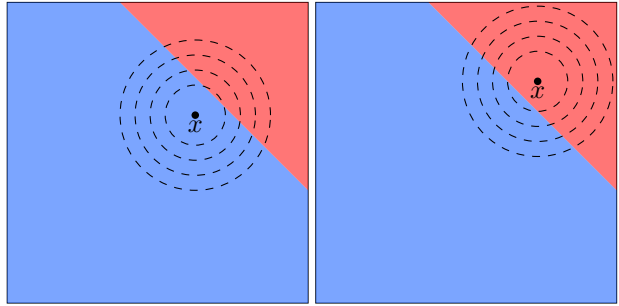
\begin{figure}[h]
\vskip 0.2in
\begin{center}
\begin{tikzpicture}[scale=1]
\draw (0,0) rectangle (4, 4);
\fill[blue, fill opacity=0.6] (0,0) --(0,4) -- (1.5, 4) -- (4, 1.5) -- (4, 0);
\fill[red, fill opacity=0.6] (1.5,4) --(4,4) -- (4, 1.5);
\fill[black] (2.5,2.5) circle (.05);
\draw[dashed] (2.5,2.5) circle (0.4);
\draw[dashed] (2.5,2.5) circle (0.6);
\draw[dashed] (2.5,2.5) circle (0.8);
\draw[dashed] (2.5,2.5) circle (1.0);
\node at (2.5, 2.35) {$x$};
\end{tikzpicture}
\begin{tikzpicture}[scale=1]
\draw (0,0) rectangle (4, 4);
\fill[blue, fill opacity=0.6] (0,0) --(0,4) -- (1.5, 4) -- (4, 1.5) -- (4, 0);
\fill[red, fill opacity=0.6] (1.5,4) --(4,4) -- (4, 1.5);
\fill[black] (2.95,2.95) circle (.05);
\draw[dashed] (2.95,2.95) circle (0.4);
\draw[dashed] (2.95,2.95) circle (0.6); d
\draw[dashed] (2.95,2.95) circle (0.8);
\draw[dashed] (2.95,2.95) circle (1.0);
\node at (2.95, 2.80) {$x$};
\end{tikzpicture}
\caption{Illustration of Proposition \ref{prop:g-is-f}. A binary linear classifier $f(x) = \text{sign}(w^T x + b)$ partitions $\mathbb{R}^d$ into two half-spaces, drawn here in blue and red.
An isotropic Gaussian $\mathcal{N}(x, \sigma^2 I)$ will put more mass on whichever half-space its center $x$ lies in: in the figure on the left, $x$ is in the blue half-space and $\mathcal{N}(x, \sigma^2 I)$ puts more mass on the blue than on red.   In the figure on the right, $x$ is in the red half-space and $\mathcal{N}(x, \sigma^2 I)$ puts more mass on red than on blue. Since the smoothed classifier's prediction $g(x)$ is defined to be whichever half-space  $\mathcal{N}(x, \sigma^2 I)$ puts more mass in, and the base classifier's prediction $f(x)$ is defined to be whichever half-space $x$ is in, we have that $g(x) = f(x)$ for all $x$.}
\label{illustration}
\end{center}
\end{figure}

Our first result is that when $f$ is a two-class linear classifier, the smoothed classifier $g$ is identical to the base classifier $f$. 
\begin{proposition}
If $f$ is a two-class linear classifier $f(x) = \text{sign}(w^T x + b)$, and $g$ is the smoothed version of $f$ with any $\sigma$, then $g(x) = f(x)$ for any $x$ (where $f$ is defined).
\label{prop:g-is-f}
\end{proposition}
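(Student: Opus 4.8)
The plan is to compute directly the two class probabilities that define $g(x)$ and show that the larger of them always corresponds to the class $f(x)$. Write the two decision regions of $f$ as the open half-spaces $\{x' : w^T x' + b > 0\}$ and $\{x' : w^T x' + b < 0\}$, and label the associated classes $+1$ and $-1$. By the definition (\ref{g}), $g(x)$ returns whichever of these two classes carries more mass under $\mathcal{N}(x, \sigma^2 I)$, so it suffices to compare $\mathbb{P}(f(x + \varepsilon) = +1)$ against $\mathbb{P}(f(x + \varepsilon) = -1)$.

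First I would reduce the $d$-dimensional Gaussian computation to a one-dimensional one. Since $\varepsilon \sim \mathcal{N}(0, \sigma^2 I)$, the scalar $w^T \varepsilon$ is distributed as $\mathcal{N}(0, \sigma^2 \|w\|^2)$. Using the symmetry of the Gaussian together with $\mathbb{P}(Z > -t) = \Phi(t)$ for $Z \sim \mathcal{N}(0,1)$, this gives
\[
\mathbb{P}(f(x+\varepsilon) = +1) = \mathbb{P}\!\left(w^T \varepsilon > -(w^T x + b)\right) = \Phi\!\left( \frac{w^T x + b}{\sigma \|w\|} \right),
\]
and the analogous calculation yields $\mathbb{P}(f(x+\varepsilon) = -1) = \Phi\!\left(-(w^T x + b)/(\sigma \|w\|)\right)$. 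These two probabilities sum to $1$, because the boundary hyperplane $\{x' : w^T x' + b = 0\}$ has zero measure under $\mathcal{N}(x, \sigma^2 I)$.

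Now I would invoke monotonicity of $\Phi$ together with $\Phi(0) = \frac{1}{2}$. The probability of class $+1$ exceeds $\frac{1}{2}$ exactly when $w^T x + b > 0$, i.e.\ exactly when $f(x) = +1$; symmetrically, the probability of class $-1$ exceeds $\frac{1}{2}$ exactly when $f(x) = -1$. Since the two probabilities sum to one, the class carrying the strictly larger probability — and hence the $\argmax$ defining $g(x)$ — is precisely $f(x)$, which proves $g(x) = f(x)$. The lone edge case is $w^T x + b = 0$: there both probabilities equal $\frac{1}{2}$, so the $\argmax$ is not unique and $g$ is left undefined, which matches the ``where $f$ is defined'' caveat since $\text{sign}(0)$ is itself undefined.

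There is no serious obstacle here: the whole argument rests on the one-line observation that a spherical Gaussian centered at $x$ places more than half its mass on whichever side of the hyperplane $x$ itself lies, exactly the geometric picture drawn for this proposition. The only points requiring care are tracking the sign conventions so that the argument of $\Phi$ comes out as $(w^T x + b)/(\sigma \|w\|)$, and disposing of the measure-zero boundary so that the ``$\argmax$ not unique'' behavior of $g$ aligns with the undefined behavior of $\text{sign}$.
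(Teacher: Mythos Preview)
Your proof is correct and follows essentially the same approach as the paper's: both compute $\mathbb{P}(f(x+\varepsilon)=+1)=\Phi\!\big((w^T x + b)/(\sigma\|w\|)\big)$ via the one-dimensional projection $w^T\varepsilon\sim\mathcal{N}(0,\sigma^2\|w\|^2)$ and then use monotonicity of $\Phi$ to conclude this exceeds $\tfrac12$ exactly when $w^T x + b>0$. Your version is slightly more explicit in writing out both class probabilities and in disposing of the boundary case, but the argument is the same.
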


\begin{proof}
From the definition of $g$,
\begin{align*}
g(x) = 1 &\iff  \mathbb{P}_{\varepsilon}(f(x+\varepsilon) = 1) > \frac{1}{2} \tag{$\varepsilon \sim \mathcal{N}(0, \sigma^2 I)$} \\
&\iff \mathbb{P}_\varepsilon \left(\text{sign}(w^T(x+\varepsilon) + b) = 1\right)  > \frac{1}{2} \\
&\iff \mathbb{P}_\varepsilon \left( w^T x + w^T \varepsilon + b \ge 0 \right) > \frac{1}{2}  \\
&\iff \mathbb{P} \left( \sigma \|w\| Z \ge - w^T x - b \right) > \frac{1}{2} \tag{$Z \sim \mathcal{N}(0, 1)$} \\
&\iff  \mathbb{P} \left(Z \le \frac{w^T x + b}{\sigma \|w\|} \right) > \frac{1}{2} \\
&\iff \frac{w^T x + b}{\sigma \|w\|} > 0 \\ 
&\iff w^T x + b > 0 \\
&\iff f(x) = 1
\end{align*}
\end{proof}
A similar calculation shows that $g(x) = -1 \iff f(x)= -1$.

A two-class linear classifier  $f(x) = \text{sign}(w^T x + b)$ is already certifiable: the distance from any point $x$ to the decision boundary is $(w^T x + b) / \|w\|_2$, and no distance with $\ell_2$ norm strictly less than this distance can possibly change $f$'s prediction.
Let $g$ be a smoothed version of $f$.
By Proposition \ref{prop:g-is-f}, $g$ is identical to $f$, so it follows that $g$ is truly robust around any input $x$ within the $\ell_2$ radius $(w^T x + b) / \|w\|_2$.
We now show that Theorem \ref{mainbound} will certify this radius, rather than a smaller, over-conservative radius.
\begin{proposition}
If $f$ is a two-class linear classifier  $f(x) = \text{sign}(w^T x + b)$, and $g$ is the smoothed version of $f$ with any $\sigma$, then invoking Theorem \ref{mainbound} at any $x$ (where $f$ is defined) with $\underline{p_A} = p_A$ and $\overline{p_B} = p_B$ will yield the certified radius $R = \frac{|w^Tx + b|}{\|w\|}$.
\label{prop:binary-linear-radius}
\end{proposition}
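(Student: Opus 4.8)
The plan is to reduce everything to a single closed-form computation by exploiting the two-class structure together with Proposition~\ref{prop:g-is-f}. Since $f$ is binary, the class probabilities satisfy $p_A + p_B = 1$, so setting $\overline{p_B} = p_B = 1 - p_A$ and $\underline{p_A} = p_A$ and invoking the symmetry identity $\Phi^{-1}(1-p) = -\Phi^{-1}(p)$ of the standard Gaussian quantile function, the radius (\ref{radius}) collapses from a difference of two terms into $R = \frac{\sigma}{2}\left(\Phi^{-1}(p_A) - \Phi^{-1}(1 - p_A)\right) = \sigma\,\Phi^{-1}(p_A)$. Thus it suffices to evaluate $p_A = \mathbb{P}(f(x+\varepsilon) = c_A)$ in closed form and show that $\sigma\,\Phi^{-1}(p_A) = |w^Tx+b|/\|w\|$.

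To that end, I would assume without loss of generality that $f(x) = 1$, i.e.\ $w^Tx + b > 0$; the case $f(x) = -1$ follows by an identical argument with signs reversed. By Proposition~\ref{prop:g-is-f} the top class is $c_A = f(x) = 1$. The computation of $p_A$ is precisely the chain of equivalences already carried out in the proof of Proposition~\ref{prop:g-is-f}: writing $w^T\varepsilon = \sigma\|w\|Z$ with $Z \sim \mathcal{N}(0,1)$, one obtains
\begin{align*}
p_A = \mathbb{P}\left(w^Tx + w^T\varepsilon + b > 0\right) = \mathbb{P}\left(Z \le \frac{w^Tx+b}{\sigma\|w\|}\right) = \Phi\!\left(\frac{w^Tx+b}{\sigma\|w\|}\right),
\end{align*}
so that $\Phi^{-1}(p_A) = \frac{w^Tx+b}{\sigma\|w\|}$.

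Substituting this into the collapsed radius gives $R = \sigma\,\Phi^{-1}(p_A) = \frac{w^Tx+b}{\|w\|}$, and since $w^Tx+b > 0$ this equals $\frac{|w^Tx+b|}{\|w\|}$, the true distance from $x$ to the decision hyperplane. There is no genuine obstacle here; the only points requiring care are the sign bookkeeping in the $f(x) = -1$ case and the correct use of the Gaussian symmetry $\Phi^{-1}(1-p) = -\Phi^{-1}(p)$, which is exactly what lets the generically two-sided bound (\ref{radius}) recover the exact one-sided margin of a linear classifier.
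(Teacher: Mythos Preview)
Your proposal is correct and follows essentially the same approach as the paper: both collapse the two-term radius \eqref{radius} to $\sigma\,\Phi^{-1}(p_A)$ via $p_B = 1 - p_A$, invoke Proposition~\ref{prop:g-is-f} to identify $c_A$, and compute $p_A$ in closed form using the one-dimensional Gaussian projection $w^T\varepsilon \sim \mathcal{N}(0,\sigma^2\|w\|^2)$. The only cosmetic difference is that the paper writes out both sign cases explicitly rather than appealing to symmetry.
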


\begin{proof}
In binary classification, $p_A = 1 - p_B$, so Theorem \ref{mainbound} returns $R = \sigma \Phi^{-1}(\underline{p_A})$.

We have:
\begin{align*}
p_A &= \mathbb{P}_{\varepsilon}(f(x+\varepsilon) = g(x)) \\
&= \mathbb{P}_{\varepsilon}(\text{sign}(w^T(x+\varepsilon) + b) = \text{sign}(w^Tx + b)) \tag{By Proposition \ref{prop:g-is-f}, $g(x) = f(x)$} \\
&= \mathbb{P}_{\varepsilon}(\text{sign}(w^T x + \sigma \|w\| Z + b) = \text{sign}(w^T x + b) )
\end{align*}
There are two cases: if $w^T x + b > 0$, then
\begin{align*}
p_A &= \mathbb{P}_{\varepsilon} ( w^T x + \sigma \|w\| Z + b  > 0) \\
&= \mathbb{P}_{\varepsilon}  \left (Z > \frac{- w^T x - b}{\sigma \|w\|} \right) \\
&=  \mathbb{P}_{\varepsilon}  \left (Z < \frac{w^T x+ b}{\sigma \|w\|} \right) \\
&= \Phi \left (\frac{w^T x+ b}{\sigma \|w\|} \right)
\end{align*}
On the other hand, if $w^T x + b < 0$, then
\begin{align*}
p_A &= \mathbb{P}_\varepsilon( w^T x + \sigma \|w\| Z + b < 0 ) \\
&= \mathbb{P}_\varepsilon \left( Z < \frac{-w^T x - b}{\sigma \|w\|}  \right) \\
&= \Phi \left( \frac{-w^T x - b}{\sigma \|w\|}  \right)
\end{align*}
In either case, we have:
\begin{align*}
p_A = \Phi \left(\frac{|w^T x + b|}{\sigma \|w\|}  \right)
\end{align*}

Therefore, the bound in Theorem 1 returns a radius of
\begin{align*}
R &= \sigma \Phi^{-1}(p_A) \\
&= \frac{|w^Tx + b|}{\|w\|}
\end{align*}
\end{proof}

The previous two propositions imply that when $f$ is a two-class linear classifier, the Theorem \ref{mainbound} bound is ``tight'' in the sense that there always exists a class-changing perturbation just beyond the certified radius.\footnote{Note that this is a different sense of ``tight'' than the sense in which Theorem \ref{mainboundtight} proves that Theorem \ref{mainbound} is tight.  
Theorem \ref{mainboundtight} proves that for any fixed perturbation $\delta$ outside the radius certified by Theorem \ref{mainbound}, there exists a base classifier $f$ for which $g(x+\delta) \neq g(x)$.
In contrast, Proposition \ref{prop:binary-linear-tight} proves that for any fixed binary linear base classifier $f$, there exists a perturbation $\delta$ just outside the radius certified by Theorem \ref{mainbound} for which $g(x+\delta) \neq g(x)$.}

\begin{proposition}
Let $f$ be a two-class linear classifier  $f(x) = \text{sign}(w^T x + b)$, let $g$ be the smoothed version of $f$ for some $\sigma$, let $x$ be any point (where $f$ is defined), and let $R$ be the radius certified around $x$ by Theorem \ref{mainbound}.  Then for any radius $r >  R$, there exists a perturbation $\delta$ with $\|\delta\|_2 = r$ for which $g(x+\delta) \neq g(x)$. 
\label{prop:binary-linear-tight}
\end{proposition}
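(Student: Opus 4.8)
The plan is to exhibit a single explicit perturbation. By Proposition \ref{prop:g-is-f} we have $g = f$, and by Proposition \ref{prop:binary-linear-radius} the certified radius is $R = |w^T x + b|/\|w\|$, which is precisely the Euclidean distance from $x$ to the decision hyperplane $\{z : w^T z + b = 0\}$. The natural candidate for a class-changing perturbation is therefore one that moves $x$ straight toward and past this hyperplane, i.e. a scalar multiple of the unit normal $w/\|w\|$, scaled so that its norm is exactly $r$.

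First I would split into cases according to the sign of $w^T x + b$; the case $w^T x + b = 0$ does not arise, since $f$ (and hence $R$) is only defined where $w^T x + b \neq 0$. Suppose $w^T x + b > 0$, so that $g(x) = 1$. I set $\delta = -r\,w/\|w\|$, which has $\|\delta\|_2 = r$ by construction. A one-line computation then gives $w^T(x+\delta) + b = (w^T x + b) - r\|w\|$.

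The second step is to invoke $r > R$ to force the sign to flip. Because $w^T x + b > 0$ we have $R = (w^T x + b)/\|w\|$, so $r > R$ yields $r\|w\| > w^T x + b$, whence $w^T(x+\delta) + b < 0$ and $g(x+\delta) = f(x+\delta) = -1 \neq g(x)$. The case $w^T x + b < 0$ is handled symmetrically: I would instead take $\delta = +r\,w/\|w\|$, and the identical arithmetic shows $w^T(x+\delta) + b > 0$, flipping the prediction in the other direction.

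There is no substantial obstacle here. The only point that requires a moment's thought is selecting the perturbation direction, namely $\mp w/\|w\|$ (normal to the boundary and pointing across it), and scaling it to have norm \emph{exactly} $r$ rather than at most $r$, as demanded by the statement. Everything else reduces to the single linear computation above, and the strict inequality $r > R$ is exactly what guarantees that the decision boundary is crossed strictly rather than merely reached.
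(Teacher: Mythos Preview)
Your proposal is correct and follows essentially the same approach as the paper: reduce to $f$ via Proposition \ref{prop:g-is-f}, use Proposition \ref{prop:binary-linear-radius} to identify $R = |w^T x + b|/\|w\|$, and then take $\delta = \mp r\, w/\|w\|$ according to the sign of $w^T x + b$ to push $x$ across the hyperplane. The paper's computation and case split are identical to yours.
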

\begin{proof}
By Proposition \ref{prop:g-is-f} it suffices to show that there exists some perturbation $\delta$ with $\|\delta\|_2 = r$ for which $f(x+\delta) \neq f(x)$.

By Proposition \ref{prop:binary-linear-radius}, we know that $R  = \frac{|w^T x + b|}{\|w\|_2}$.

If $w^T x + b > 0$, consider the perturbation $\delta = -\frac{w}{ \| w \|_2 } r$.
This perturbation satisfies $ \|\delta\|_2 = r$ and
\begin{align*}
w^T (x+\delta) + b &= w^T x + b + w^T \delta \\
&= w^T x + b - \|w\|_2 r \\
&< w^T x + b - \|w\|_2 R \\
&= w^T x + b - |w^T x + b| \\
&= w^T x + b - (w^T x + b) \\
&= 0
\end{align*}
implying that $f(x+\delta) = -1$.

Likewise, if $w^T x + b < 0$, then consider the perturbation $\delta = \frac{w}{ \| w \|_2 } r$.
This perturbation satisfies $\|\delta\|_2 = r$ and $f(x+\delta) =-1 $.

\end{proof}

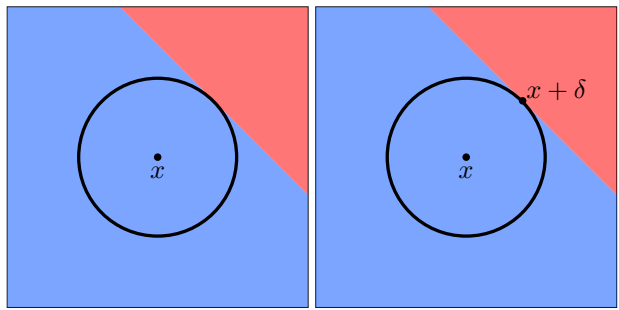
\begin{figure}[h]
\vskip 0.2in
\begin{center}
\begin{tikzpicture}[scale=1]
\draw (0,0) rectangle (4, 4);
\fill[blue, fill opacity=0.6] (0,0) --(0,4) -- (1.5, 4) -- (4, 1.5) -- (4, 0);
\fill[red, fill opacity=0.6] (1.5,4) --(4,4) -- (4, 1.5);
\fill[black] (2.0, 2.0) circle (.05);
\draw[solid, line width=1.25] (2.0, 2.0) circle (1.05);
\node at (2.0, 1.80) {$x$};
\end{tikzpicture}
\begin{tikzpicture}[scale=1]
\draw (0,0) rectangle (4, 4);
\fill[blue, fill opacity=0.6] (0,0) --(0,4) -- (1.5, 4) -- (4, 1.5) -- (4, 0);
\fill[red, fill opacity=0.6] (1.5,4) --(4,4) -- (4, 1.5);
\fill[black] (2.0, 2.0) circle (.05);
\fill[black] (2.75, 2.75) circle (.05);
\draw[solid, line width=1.25] (2.0, 2.0) circle (1.05);
\node at (2.0, 1.80) {$x$};
\node at (3.2, 2.9) {$x+\delta$};
\end{tikzpicture}
\caption{\textbf{Left}: Illustration of of Proposition \ref{prop:binary-linear-radius}. The red/blue half-spaces are the decision regions of both the base classifier $f$ and the smoothed classifier $g$.  (Since the base classifier is binary linear, $g = f$ everywhere.)  The black circle is the robustness radius $R$ certified by Theorem \ref{mainbound}. \textbf{Right}: Illustration of Proposition \ref{prop:binary-linear-tight}.  For any $r > R$, there exists a perturbation $\delta$ with $\|\delta\|_2 = r$ for which $g(x+\delta) \neq g(x)$. }
\label{illustration}
\end{center}
\end{figure}

This special property of two-class linear classifiers is not true in general.
In fact, it is possible to construct situations where $g$'s prediction around some point $x_0$ is robust at radius $\infty$, yet Theorem \ref{mainbound} only certifies a radius of $\tau$, where $\tau$ is arbitrarily close to zero.

\begin{proposition}
For any $\tau > 0$, there exists a base classifier $f$ and an input $x_0$ for which the corresponding smoothed classifier $g$ is robust around $x_0$ at radius $\infty$, yet Theorem \ref{mainbound} only certifies a radius of  $\tau$ around $x_0$.
\label{prop:not-tight}
\end{proposition}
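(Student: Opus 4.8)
The plan is to construct a base classifier whose $c_A$-region is the \emph{exterior} of a Euclidean ball centered at $x_0$, chosen so that under $\mathcal{N}(x_0, \sigma^2 I)$ the class $c_A$ wins by an arbitrarily thin margin, while a symmetry/unimodality argument guarantees that $c_A$ nonetheless wins under \emph{every} translated Gaussian. Concretely, fix any $x_0$ and let $f$ be the two-class classifier
\begin{align*}
f(z) = \begin{cases} c_A & \text{if } \|z - x_0\|_2 \ge r \\ c_B & \text{if } \|z - x_0\|_2 < r \end{cases}
\end{align*}
where the radius $r > 0$ is to be determined. Writing $A = \{z : \|z - x_0\|_2 \ge r\}$ for the $c_A$-region and $B$ for its complement (the open ball), the quantity $\mathbb{P}(\mathcal{N}(x_0, \sigma^2 I) \in A) = \mathbb{P}(\chi^2_d \ge r^2/\sigma^2)$ is a continuous function of $r$ that decreases from $1$ to $0$; hence by the intermediate value theorem I can pick $r$ so that this probability equals exactly $\Phi(\tau/\sigma)$, which lies in $(\frac{1}{2}, 1)$ since $\tau > 0$.

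First I would verify the Theorem \ref{mainbound} side. At $x_0$ the exact class probabilities are $p_A = \Phi(\tau/\sigma)$ and $p_B = 1 - p_A$, and these are the tightest bounds one could ever supply, so the largest radius Theorem \ref{mainbound} can return at $x_0$ is the binary-case value $R = \sigma \, \Phi^{-1}(p_A) = \sigma \, \Phi^{-1}(\Phi(\tau/\sigma)) = \tau$. Thus Theorem \ref{mainbound} certifies only radius $\tau$, which is arbitrarily small.

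Next I would show that $g$ is in fact robust at radius $\infty$, i.e. $g(\mu) = c_A$ for every $\mu \in \mathbb{R}^d$. This reduces to showing $\mathbb{P}(\mathcal{N}(\mu, \sigma^2 I) \in A) > \frac{1}{2}$ for all $\mu$, equivalently that the ball probability $\mathbb{P}(\mathcal{N}(\mu, \sigma^2 I) \in B)$ is maximized over $\mu$ at $\mu = x_0$. This is exactly the content of Anderson's inequality: the convolution of the symmetric log-concave Gaussian density with the indicator of the symmetric convex body $B$ is itself a symmetric log-concave, hence unimodal, function of the center, so it is maximized when the Gaussian is centered at the ball's center $x_0$. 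Equivalently, one may invoke the standard fact that the CDF of a noncentral $\chi^2_d$ at a fixed threshold is monotonically decreasing in the noncentrality parameter $\|\mu - x_0\|_2^2/\sigma^2$. Either way, $\mathbb{P}(\mathcal{N}(\mu, \sigma^2 I) \in A) \ge \mathbb{P}(\mathcal{N}(x_0, \sigma^2 I) \in A) = \Phi(\tau/\sigma) > \frac{1}{2}$ for every $\mu$, so $g \equiv c_A$ and is robust at radius $\infty$.

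The only nontrivial ingredient is this monotonicity step, so the main obstacle is simply establishing (or citing) that a centered isotropic Gaussian captures the most mass in a ball centered at the same point; everything else is a direct computation. Once that fact is in hand, combining the two computations exhibits a classifier that is genuinely robust everywhere yet for which Theorem \ref{mainbound} certifies a radius of only $\tau$, completing the proof.
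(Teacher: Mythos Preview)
Your proposal is correct and follows essentially the same idea as the paper's proof: make the $c_B$-region a centered ball so that, by a symmetry/unimodality argument, the Gaussian mass on $c_B$ is maximized precisely at the center, forcing $g\equiv c_A$ while leaving $p_A$ at $x_0$ only barely above $\tfrac12$. The paper carries this out in one dimension (the ball is an interval $[-t,t]$, and it proves from scratch the $d=1$ special case of Anderson's inequality as a separate lemma), whereas you work in arbitrary dimension and invoke Anderson's inequality / noncentral $\chi^2$ monotonicity directly. Your route is slightly more general and saves the ad hoc interval lemma at the cost of citing a heavier classical result; otherwise the two arguments coincide.
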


\begin{proof}
Let $t = -\Phi^{-1}(\frac{1}{2} \Phi(\tau))$ and consider the following base classifier:
\begin{align*}
f(x) = \begin{cases}
1 &\mbox{ if } x < -t \\
-1 &\mbox{ if } -t \le x \le t  \\
1 &\mbox{ if } x > t
\end{cases}
\end{align*}
Let $g$ be the smoothed version of $f$ with $\sigma = 1$.
We will show that $g(x) = 1$ everywhere, implying that $g$'s prediction is robust around $x_0 = 0 $ with radius $\infty$.
Yet Theorem \ref{mainbound} only certifies a radius of $\tau$ around $x_0$.

Let $Z \sim \mathcal{N}(0, 1)$.
For any $x$, we have:
\begin{align*}
\mathbb{P}(f(x+\varepsilon) = -1) &= \mathbb{P}(-t \le x+\varepsilon \le t) \\
 &= \mathbb{P}[-t -x \le Z \le t-x] \\
&\le \mathbb{P}[-t \le Z \le t] \tag{apply Lemma \ref{lemma:gaussian-interval} below with $\ell = -t-x$} \\
&=  1 - 2\Phi(-t) \\
&= 1 - \Phi(\tau) \\
&< \frac{1}{2}.
\end{align*}

Therefore, $g(x) = 1$ for all $x$.

Meanwhile, at $x_0 =0$, we have:
\begin{align*}
\mathbb{P}(f(x_0+\varepsilon) = 1) &= \mathbb{P}(f(\varepsilon) = 1) \\
&= \mathbb{P}(Z < -t \text{ or } Z > t)\\
&= 2\Phi(-t)  \\
&= \Phi(\tau),
\end{align*}
so by Theorem \ref{mainbound}, the certified radius around $x_0$ is $R= \tau$.

\end{proof}

The proof of Proposition \ref{prop:not-tight} employed the following lemma, which formalizes the visually obvious fact that out of all intervals of some fixed width $2t$, the interval with maximal mass under the standard normal distribution $Z$ is the interval $[-t, t]$.
\begin{lemma}
Let $Z \sim \mathcal{N}(0, 1)$.  For any $\ell \in \mathbb{R}$, $t > 0$, we have $\mathbb{P}(\ell \le Z \le \ell + 2t) \le \mathbb{P}(-t \le Z \le t)$.
\label{lemma:gaussian-interval}
\end{lemma}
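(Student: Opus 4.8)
The plan is to fix $t > 0$ and regard the left endpoint as the free variable: define $F(\ell) := \mathbb{P}(\ell \le Z \le \ell + 2t) = \Phi(\ell + 2t) - \Phi(\ell)$, so that the claim is exactly the statement that $F$ attains its maximum over $\ell \in \mathbb{R}$ at $\ell = -t$, where $F(-t) = \Phi(t) - \Phi(-t) = \mathbb{P}(-t \le Z \le t)$. Thus it suffices to show $F(\ell) \le F(-t)$ for all $\ell$, and I would prove this by a one-variable monotonicity argument.

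First I would differentiate. Writing $\phi = \Phi'$ for the standard normal density, the fundamental theorem of calculus gives $F'(\ell) = \phi(\ell + 2t) - \phi(\ell)$. The key structural fact about $\phi$ is that it is even and strictly decreasing on $[0, \infty)$, so $\phi(a) > \phi(b)$ if and only if $|a| < |b|$. Applying this with $a = \ell + 2t$ and $b = \ell$, the sign of $F'(\ell)$ is governed by the comparison of $|\ell + 2t|$ and $|\ell|$, which I would resolve via the identity $(\ell + 2t)^2 - \ell^2 = 4t(\ell + t)$. Since $t > 0$, this quantity is negative exactly when $\ell < -t$ and positive exactly when $\ell > -t$. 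Hence $F'(\ell) > 0$ for $\ell < -t$ and $F'(\ell) < 0$ for $\ell > -t$, with the unique critical point at $\ell = -t$.

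From here the conclusion is immediate: $F$ is continuously differentiable, strictly increasing on $(-\infty, -t)$ and strictly decreasing on $(-t, \infty)$, so $\ell = -t$ is the global maximizer, giving $F(\ell) \le F(-t) = \mathbb{P}(-t \le Z \le t)$ for every $\ell$, as required. I do not expect a genuine obstacle here; the only step meriting a word of care is the passage from ``$F'$ changes sign exactly once, from positive to negative'' to ``$F$ has a global maximum at that point,'' which is routine given that $F$ is $C^1$. As an alternative that avoids even invoking the global-maximum bookkeeping, one could first note the reflection symmetry $F(\ell) = F(-2t - \ell)$ (from the evenness of $\phi$), reduce without loss of generality to $\ell \ge -t$, and then show $F' \le 0$ on $[-t, \infty)$ by the same sign computation; I would mention this symmetry reduction but present the direct monotonicity argument as the cleaner route.
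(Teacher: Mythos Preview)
Your proof is correct but takes a different route from the paper. The paper argues by case analysis: it splits according to whether the interval $[\ell,\ell+2t]$ lies entirely on one side of the origin or straddles it, uses the evenness of $\phi$ to reflect into nonnegative intervals, and then compares the relevant integrals term-by-term using the monotonicity of $\phi$ on $[0,\infty)$. Your argument instead treats $F(\ell)=\Phi(\ell+2t)-\Phi(\ell)$ as a single smooth function of $\ell$, computes $F'(\ell)=\phi(\ell+2t)-\phi(\ell)$, and determines its sign via the identity $(\ell+2t)^2-\ell^2=4t(\ell+t)$ together with the fact that $\phi(a)>\phi(b)\iff|a|<|b|$; this yields the unique critical point $\ell=-t$ with $F$ increasing before and decreasing after, hence a global maximum there. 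Both proofs rest on the same two properties of $\phi$ (evenness and monotone decrease on $[0,\infty)$), but your calculus argument is shorter and avoids the case split entirely; the paper's integral-comparison approach is slightly more hands-on but requires no differentiation. The symmetry $F(\ell)=F(-2t-\ell)$ you mention is also correct and gives a clean alternative reduction.
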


\begin{proof}
Let $\phi$ be the PDF of the standard normal distribution.
Since $\phi$ is symmetric about the origin (i.e. $\phi(x) = \phi(-x) \; \forall x$), 
\begin{align*}
\mathbb{P}(-t \le Z \le t) = 2 \int_0^{t} \phi(x) dx.
\end{align*}

There are two cases to consider:

\textbf{Case 1:} The interval $[\ell, \ell + 2t]$ is entirely positive, i.e. $\ell \ge 0$, or $[\ell, \ell + 2t]$ is entirely negative, i.e. $\ell + 2t \le 0$.

First, we use the fact that $\phi$ is symmetric about the origin to rewrite $\mathbb{P}(\ell \le Z \le \ell + 2t)$ as the probability that $Z$ falls in a non-negative interval $[a, a+2t]$ for some $a$.

Specifically, if $\ell \ge 0$, then let $a = \ell$.  Else, if $\ell + 2t \le 0$, then let $a = -(\ell + 2t)$. 
We therefore have:
\begin{align*}
\mathbb{P}(\ell \le Z \le \ell + 2t) = \mathbb{P}(a \le Z \le a + 2t).
\end{align*}

Therefore:
\begin{align*}
\mathbb{P}(-t \le Z \le t) - \mathbb{P}(\ell \le Z \le \ell + 2t) &= \int_0^t \phi(x) dx - \int_a^{a+t} \phi(x) dx + \int_0^t \phi(x) dx - \int_{a+t}^{a+2t} \phi(x) dx \\
&= \int_a^{a+t} \phi(x-a) dx - \int_a^{a+t} \phi(x) dx + \int_{a+t}^{a+2t} \phi(x-a-t) dx - \int_{a+t}^{a+2t} \phi(x) dx \\
&= \int_a^{a+t} \left[ \phi(x-a)  - \phi(x) \right] dx + \int_{a+t}^{a+2t} \left[ \phi(x-a-t) - \phi(x) \right] dx \\
&\ge \int_a^{a+t} 0 \; dx + \int_{a+t}^{a+2t} 0 \; dx \\
&= 0
\end{align*}
where the inequality is because $\phi$ is monotonically decreasing on $[0, \infty)$.

\textbf{Case 2:} $I$ is partly positive, partly negative, i.e. $\ell < 0 < \ell + 2t$.

First, we use the fact that $\phi$ is symmetric about the origin to rewrite $\mathbb{P}(\ell \le Z \le \ell + 2t)$ as the sum of the probabilities that $Z$ falls in two non-negative intervals $[0, a]$ and $[0, b]$ for some $a, b$.

Specifically, let $a = \min(-\ell, \ell + 2t)$ and $b = \max(-\ell, \ell + 2t)$.
We therefore have:
\begin{align*}
\mathbb{P}(\ell \le Z \le \ell + 2t) = \mathbb{P}(0 \le Z \le  a) + \mathbb{P}(0 \le Z \le b).
\end{align*}

Note that by construction, $a+b = 2t$, and $0 \le a \le t$ and $t \le b \le 2t$.

We have:
\begin{align*}
\mathbb{P}(-t \le Z \le t) - \mathbb{P}(\ell \le Z \le \ell + 2t) &= \left( \int_0^t \phi(x) dx - \int_0^{a} \phi(x) dx \right] - \left[ \int_{0}^{b} \phi(x) dx - \int_0^t \phi(x) dx  \right) \\
&= \int_a^t \phi(x) dx - \int_t^b \phi(x) dx \\
&= \int_a^t \phi(x) dx - \int_t^{2t - a} \phi(x) dx \\
&= \int_a^t \phi(x) dx - \int_a^t \phi(x - a + t) dx \\
&= \int_a^t (\phi(x) - \phi(x-a+t)) dx  \\
& \ge \int_a^t 0 \, dx \\
&= 0
\end{align*}

where the inequality is again because $\phi$ is monotonically decreasing on $[0, \infty)$.
\end{proof}

\newpage
\section{Practical algorithms}
\label{section:practical-algorithms-appendix}

In this appendix, we elaborate on the prediction and certification algorithms described in Section \ref{sec:practical-algorithms}.
The pseudocode in Section \ref{sec:practical-algorithms} makes use of several helper functions: 
\begin{itemize}
\item 
 \textsc{SampleUnderNoise}($f$, $x$, num, $\sigma$) works as follows:
\begin{enumerate}
\item Draw num samples of noise, $\varepsilon_1 \hdots \varepsilon_{\text{num}} \sim \mathcal{N}(0, \sigma^2 I)$.
\item Run the noisy images through the base classifier $f$ to obtain the predictions $f(x+\varepsilon_1), \hdots, f(x+\varepsilon_{\text{num}})$.
\item Return the counts for each class, where the count for class $c$ is defined as $\sum_{i=1}^{\text{num}} \mathbf{1}[f(x+\varepsilon_i) = c]$.
\end{enumerate}
\item  \textsc{BinomPValue}($n_A$, $n_A + n_B$, $p$) returns the p-value of the two-sided hypothesis test that $n_A \sim \text{Binomial}(n_A + n_B, p)$.
Using \texttt{scipy.stats.binom\_test}, this can be implemented as: \texttt{binom\_test(nA, nA + nB, p)}.

\item
\textsc{LowerConfBound}($k$, $n$, $1 - \alpha$) returns a one-sided $(1 - \alpha)$ lower confidence interval for the Binomial parameter $p$ given that $k \sim \text{Binomial}(n, p)$.
In other words, it returns some number $\underline{p}$ for which $\underline{p} \le p$ with probability at least $1 - \alpha$ over the sampling of $k \sim \text{Binomial}(n, p)$.
Following \citet{lecuyer2018certified}, we chose to use the Clopper-Pearson confidence interval, which inverts the Binomial CDF \citep{clopper1934}.
Using \texttt{statsmodels.stats.proportion.proportion\_confint}, this can be implemented as 
\begin{verbatim}
proportion_confint(k, n, alpha=2*alpha, method="beta")[0]
\end{verbatim}

\end{itemize}

\subsection{Prediction}

The randomized algorithm given in pseudocode as \textsc{Predict} leverages the hypothesis test given in  \citet{hung2017rank} for identifying the top category of a multinomial distribution.
\textsc{Predict} has one tunable hyperparameter, $\alpha$.
When $\alpha$ is small, \textsc{Predict} abstains frequently but rarely returns the wrong class.
When $\alpha$ is large, \textsc{Predict} usually makes a prediction, but may often return the wrong class.

We now prove that with high probability, \textsc{Predict} will either return $g(x)$ or abstain.

\textbf{Proposition \ref{thm:predict-correct} (restated).}
\textit{
With probability at least $1 - \alpha$ over the randomness in \textsc{Predict}, \textsc{Predict} will either abstain or return $g(x)$.
(Equivalently: the probability that \textsc{Predict} returns a class other than $g(x)$ is at most $\alpha$.)
}

\begin{proof}
For notational convenience, define $p_c = \mathbb{P}(f(x+\varepsilon) = c)$.  Let $c_A = \max_c p_c$.
Notice that by definition, $g(x) = c_A$.

We can describe the randomized procedure \textsc{Predict} as follows:
\begin{enumerate}
\item Sample a vector of class counts $\{n_c\}_{c \in \mathcal{Y}}$ from  $\text{Multinomial}(\{p_c\}_{c \in \mathcal{Y}}, n)$.
\item Let $\hat{c}_A = \argmax_c n_c$ be the class whose count is largest. Let $n_A$ and $n_B$ be the largest count and the second-largest count, respectively.
\item If the p-value of the two-sided hypothesis test that $n_A$ is drawn from $\text{Binom}\left(n_A + n_B, \frac{1}{2} \right)$ is less than $\alpha$, then return $\hat{c}_A$.
Else, abstain.
\end{enumerate}

The quantities $c_A$ and the $p_c$'s are fixed but unknown, while the quantities $\hat{c}_A$, the $n_c$'s, $n_A$, and $n_B$ are random.

We'd like to prove that the probability that \textsc{Predict} returns a class other than $c_A$ is at most $\alpha$.
 \textsc{Predict} returns a class other than $c_A$ if and only if (1) $\hat{c}_A \neq c_A$ and (2) \textsc{Predict} does not abstain.

% \textsc{Predict} returns a class other than $c_A$ if and only if $\hat{c}_A \neq c_A$ and the two-sided hypothesis test that $n_A$ is drawn from $\text{Binom}\left(n_A + n_B, \frac{1}{2} \right)$ rejects the null at level $\alpha$.

We have:
\begin{align*}
\mathbb{P}(\textsc{Predict} \textrm{ returns class } \neq c_A) &=  \mathbb{P}(\hat{c}_A \neq c_A, \textsc{Predict} \textrm{ does not abstain}) \\
&= \mathbb{P}(\hat{c}_A \neq c_A) \; \mathbb{P}(\textsc{Predict} \textrm{ does not abstain} | \hat{c}_A \neq c_A) \\
&\le  \mathbb{P}(\textsc{Predict} \textrm{ does not abstain} | \hat{c}_A \neq c_A)
\end{align*}

Recall that \textsc{Predict} does not abstain if and only if the p-value of the two-sided hypothesis test that $n_A$ is drawn from $\textrm{Binom}(n_A + n_B, \frac{1}{2})$ is less than $\alpha$.
Theorem 1 in \citet{hung2017rank} proves that the conditional probability that this event occurs given that $\hat{c}_A \neq c_A$ is exactly $\alpha$.
That is,
\begin{align*}
 \mathbb{P}(\textsc{Predict} \textrm{ does not abstain} | \hat{c}_A \neq c_A) = \alpha
\end{align*}
Therefore, we have:
\begin{align*}
\mathbb{P}(\textsc{Predict} \textrm{ returns class } \neq c_A) \le \alpha
\end{align*}

\end{proof}

\subsection{Certification}

The certification task is: given some input $x$ and a randomized smoothing classifier described by $(f, \sigma)$, return both (1) the prediction $g(x)$ and (2) a radius $R$ in which this prediction is certified robust.
This task requires identifying the class $c_A$ with maximal weight in $f(x+\varepsilon)$, estimating a lower bound $\underline{p_A}$ on $p_A := \mathbb{P}(f(x+\varepsilon) = c_A)$ and estimating an upper bound $\overline{p_B}$ on $p_B :=\max_{c \neq c_A} \mathbb{P}( f(x+\varepsilon) = c)$ (Figure \ref{page1illustration}).

Suppose for simplicity that we already knew $c_A$ and needed to obtain $\underline{p_A}$.
We could collect $n$ samples of $f(x+\varepsilon)$, count how many times $f(x+\varepsilon) = c_A$, and use a Binomial confidence interval to obtain a lower bound on $p_A$ that holds with probability at least $1 - \alpha$ over the $n$ samples.

However, estimating $\underline{p_A}$ and $\overline{p_B}$ while simultaneously identifying the top class $c_A$ is a little bit tricky, statistically speaking.
We propose a simple two-step procedure.
First, use $n_0$ samples from $f(x+\varepsilon)$ to take a guess $\hat{c}_A$ at the identity of the top class $c_A$.
In practice we observed that $f(x+\varepsilon)$ tends to put most of its weight on the top class, so $n_0$ can be set very small.
Second, use $n$ samples from $f(x+\varepsilon)$ to obtain some $\underline{p_A}$ and $\overline{p_B}$ for which $\underline{p_A} \le p_A$ and $\overline{p_B} \ge p_B$ with probability at least $ 1- \alpha$.
We observed that it is much more typical for the mass of $f(x+\varepsilon)$ not allocated to $c_A$ to be allocated entirely to one runner-up class than to be allocated uniformly over all remaining classes.
Therefore, the quantity $1 - \underline{p_A}$ is a reasonably tight upper bound on $p_B$.
Hence, we simply set $\overline{p_B} = 1 - \underline{p_A}$, so our bound becomes 
\begin{align*}
R &= \frac{\sigma}{2} ( \Phi^{-1}(\underline{p_A}) - \Phi^{-1}(1 - \underline{p_A})) \\
&= \frac{\sigma}{2} ( \Phi^{-1}(\underline{p_A}) + \Phi^{-1}(\underline{p_A})) \\
&= \sigma \Phi^{-1}(\underline{p_A})
\end{align*}

The full procedure is described in pseudocode as \textsc{Certify}.
If $\underline{p_A} < \frac{1}{2}$, we abstain from making a certification; this can occur especially if $\hat{c}_A \neq g(x)$, i.e. if we misidentify the top class using the first $n_0$ samples of $f(x+\varepsilon)$.

\textbf{Proposition \ref{thm:certify-correct} (restated).}
\textit{
With probability at least $1 - \alpha$ over the randomness in \textsc{Certify}, if  \textsc{Certify} returns a class $\hat{c}_A$ and a radius $R$ (i.e. does not abstain), then we have the robustness guarantee
\begin{align*}
g(x + \delta) = \hat{c}_A \quad \text{whenever} \quad \|\delta\|_2 < R
\end{align*}
}
\begin{proof}
From the contract of \textsc{LowerConfBound}, we know that with probability at least $1 - \alpha$ over the sampling of $\varepsilon_1 \hdots \varepsilon_n$, we have $\underline{p_A} \le \mathbb{P}[f(x+\varepsilon) = \hat{c}_A]$.
Notice that \textsc{Certify} returns a class and radius only if $\underline{p_A} > \frac{1}{2}$ (otherwise it abstains).
If $\underline{p_A} \le \mathbb{P}[f(x+\varepsilon) = \hat{c}_A]$ and $\frac{1}{2} < \underline{p_A}$, then we can invoke Theorem \ref{mainbound} with $\overline{p_B} = 1 - \underline{p_A}$ to obtain the desired guarantee.
\end{proof}

\newpage
\section{Estimating the certified test-set accuracy}
\label{section:highprobability}

In this appendix, we show how to convert the ``approximate certified test accuracy'' considered in the main paper into a lower bound on the true certified test accuracy that holds with high probability over the randomness in \textsc{Certify}.

Consider a classifier $g$,  a test set $S=\{(x_1, c_1) \hdots (x_m, c_m)\}$, and a radius $r$.
For each example $i \in [m]$, let $z_i$ indicate whether $g$'s prediction at $x_i$ is both correct and robust at radius $r$, i.e.
\begin{align*}
z_i &= \mathbf{1}[g(x_i + \delta) = c_i  \;\; \forall \|\delta\|_2 <r]
\end{align*}

The certified test set accuracy of $g$ at radius $r$ is defined as $\frac{1}{m} \sum_{i=1}^m z_i$.
If $g$ is a randomized smoothing classifier, we cannot compute this quantity exactly, but we can estimate a lower bound that holds with arbitrarily high probability over the randomness in \textsc{Certify}.
In particular, suppose that we run \textsc{Certify} with failure rate $\alpha$ on each example $x_i$ in the test set.
Let the Bernoulli random variable $Y_i$ denote the event that on example $i$, \textsc{Certify} returns the correct label $c_A = c_i$ and a 
certified radius $R$ which is greater than $r$.
Let $Y = \sum_{i=1}^m Y_i$.
In the main paper, we referred to $Y/m$ as the ``approximate certified accuracy.''
It is ``approximate'' because $Y_i=1$ does not mean that $z_i = 1$.
Rather, from Proposition \ref{thm:certify-correct}, we know the following: if $z_i = 0$, then $\mathbb{P}(Y_i = 1) \le \alpha$.
We now show how to exploit this fact to construct a one-sided confidence interval for the unobserved quantity $\frac{1}{m} \sum_{i=1}^m z_i$ using the observed quantities $Y$ and $m$.

\begin{thm}
For any $\rho > 0$, with probability at least $1 - \rho$ over the randomness in \textsc{Certify},
\begin{align}
\frac{1}{m} \sum_{i=1}^m z_i \ge \frac{1}{1 - \alpha} \left(\frac{Y}{m} - \alpha   - \sqrt{\frac{2 \alpha(1-\alpha) \log(1/\rho)}{m}} - \frac{\log(1/\rho)}{3m} \right)
\end{align}
\end{thm}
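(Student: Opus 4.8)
The plan is to convert the \emph{observed} statistic $Y$ into a high-probability lower bound on the \emph{unobserved} deterministic quantity $\sum_{i=1}^m z_i$ by first isolating a purely deterministic inequality relating the two and then controlling the residual randomness with a Bernstein-type concentration bound. The central observation is that $Y$ can only overcount relative to $\sum_i z_i$ on the ``bad'' examples (those with $z_i = 0$), and on each such example an overcount is exactly a failure of \textsc{Certify}, which Proposition \ref{thm:certify-correct} guarantees happens with probability at most $\alpha$.

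Concretely, first I would let $I_0 = \{i : z_i = 0\}$, which is a fixed, nonrandom set since the $z_i$ depend only on $g$, the test set, and $r$, and define $W = \sum_{i \in I_0} Y_i$. Splitting $Y = \sum_{i : z_i = 1} Y_i + W$ and bounding $\sum_{i : z_i = 1} Y_i \le \sum_i z_i$ (each $Y_i \le 1$) yields the deterministic inequality $\sum_i z_i \ge Y - W$. Next, for each $i \in I_0$, Proposition \ref{thm:certify-correct} gives $p_i := \mathbb{P}(Y_i = 1) \le \alpha$: if $z_i = 0$ then $g$ is not correct-and-robust at radius $r$ around $x_i$, so \textsc{Certify} can only set $Y_i = 1$ by erroneously returning a class/radius pair that violates its own guarantee. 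Because \textsc{Certify} draws independent fresh noise on each example, the variables $\{Y_i\}_{i \in I_0}$ are independent Bernoullis, so $\mathbb{E}[W] = \sum_{i \in I_0} p_i \le \alpha |I_0| = \alpha(m - \sum_i z_i)$, and, assuming $\alpha \le \tfrac{1}{2}$, $\mathrm{Var}(W) = \sum_{i \in I_0} p_i(1-p_i) \le m\,\alpha(1-\alpha)$.

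I would then apply a one-sided Bernstein/Bennett inequality to $W$ with variance proxy $v = m\,\alpha(1-\alpha)$ and boundedness constant $1$ to obtain, with probability at least $1 - \rho$,
\[
W \le \mathbb{E}[W] + \sqrt{2 v \log(1/\rho)} + \tfrac{1}{3}\log(1/\rho) \le \alpha\big(m - \textstyle\sum_i z_i\big) + \sqrt{2 m \alpha(1-\alpha)\log(1/\rho)} + \tfrac{1}{3}\log(1/\rho).
\]
Combining with $\sum_i z_i \ge Y - W$, collecting the $\alpha \sum_i z_i$ term on the left, dividing by $(1-\alpha)m$, and simplifying $\sqrt{2 m \alpha(1-\alpha)\log(1/\rho)}/m = \sqrt{2\alpha(1-\alpha)\log(1/\rho)/m}$ then recovers exactly the claimed bound.

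The main obstacle is obtaining the precise constant $\tfrac{1}{3}$ on the linear $\log(1/\rho)/m$ term: the crude Bernstein tail $\exp\!\big(-\tfrac{t^2}{2(v + t/3)}\big)$, when inverted, produces a scale factor of $\tfrac{2}{3}$ rather than $\tfrac{1}{3}$, so the sharper inversion must be routed through the sub-gamma (Bennett) form, for which $\mathbb{P}\big(W - \mathbb{E}[W] \ge \sqrt{2 v u} + \tfrac{1}{3} u\big) \le e^{-u}$ holds (the clean inversion $\sqrt{2vu} + \tfrac13 u$ being exact for the sub-gamma rate function). A secondary subtlety is that $\sum_i z_i$ appears on \emph{both} sides of the post-concentration inequality, inside $\mathbb{E}[W]$ and as the target quantity, so the final algebra must isolate these terms before the division by $(1-\alpha)$; the variance bound also silently invokes $\alpha \le \tfrac{1}{2}$, which is harmless in the regime of interest.
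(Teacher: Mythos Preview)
Your proposal is correct and follows essentially the same approach as the paper's proof: isolate $W=Y_{\text{bad}}$ over the bad indices, use Proposition~\ref{thm:certify-correct} to bound each $p_i\le\alpha$, apply the sub-gamma Bernstein inequality with the $\tfrac{1}{3}$ constant, then substitute $|I_0|=m-\sum_i z_i$ and rearrange. The only cosmetic difference is that you pass to the loose variance bound $m\,\alpha(1-\alpha)$ before applying Bernstein, whereas the paper applies Bernstein with $m_{\text{bad}}\,\alpha(1-\alpha)$ and relaxes $m_{\text{bad}}\le m$ afterward; the result is identical.
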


\begin{proof}
Let $m_{\text{good}} = \sum_{i=1}^m z_i$ and $m_{\text{bad}} = \sum_{i=1}^m (1 - z_i)$ be the number of test examples on which $z_i = 1$ or $z_i =0$, respectively.
We model $Y_i \sim \text{Bernoulli}(p_i)$, where $p_i$ is in general unknown.
Let $Y_{\text{good}} = \sum_{i: z_i = 1} Y_i$ and $Y_{\text{bad}} = \sum_{i: z_i = 0} Y_i$.
The quantity of interest, the certified accuracy $\frac{1}{m} \sum_{i=1}^m z_i$, is equal to $m_{\text{good}}/m$.
However, we only observe $Y = Y_{\text{good}} + Y_{\text{bad}}$.

Note that if $z_i = 0$, then $p_i \le \alpha$, so we have $\mathbb{E}[Y_i] = p_i \le \alpha$ and assuming $\alpha \le \frac{1}{2}$, we have $\text{Var}[Y_i] = p_i(1-p_i) \le \alpha ( 1 - \alpha)$.

Since $Y_\text{bad}$ is a sum of $m_{\text{bad}}$ independent random variables each bounded between zero and one, with $\mathbb{E}[Y_{\text{bad}}] \le \alpha m_{\text{bad}} $ and $\text{Var}(Y_{\text{bad}}) \le m_{\text{bad}} \alpha (1-\alpha) $, Bernstein's inequality \citep{blanchard2007} guarantees that with probability at least $1 - \rho$ over the randomness in \textsc{Certify},
\begin{align*}
Y_{\text{bad}} \le  \alpha m_{\text{bad}}  +  \sqrt{2 m_{\text{bad}} \alpha(1-\alpha) \log(1/\rho)} + \frac{\log(1/\rho)}{3}
\end{align*}
From now on, we manipulate this inequality --- remember that it holds with probability at least $1 - \rho$.

Since $Y = Y_{\text{good}} + Y_{\text{bad}}$, may write
\begin{align*}
Y_{\text{good}} \ge Y - \alpha m_{\text{bad}} - \sqrt{2 m_{\text{bad}} \alpha(1-\alpha) \log(1/\rho)} - \frac{\log(1/\rho)}{3}
\end{align*}
Since $m_{\text{good}} \ge Y_{\text{good}}$, we may write
\begin{align*}
m_{\text{good}} \ge Y - \alpha m_{\text{bad}} - \sqrt{2 m_{\text{bad}} \alpha(1-\alpha) \log(1/\rho)} - \frac{\log(1/\rho)}{3}
\end{align*}
Since $m_{\text{good}} + m_{\text{bad}} = m$, we may write
\begin{align*}
m_{\text{good}} \ge  \frac{1}{1 - \alpha} \left( Y - \alpha \, m -   \sqrt{2 m_{\text{bad}} \alpha(1-\alpha) \log(1/\rho)} - \frac{\log(1/\rho)}{3} \right)
\end{align*}
Finally, in order to make this confidence interval depend only on observables, we use $m_{\text{bad}} \le m$ to write
\begin{align*}
m_{\text{good}} \ge  \frac{1}{1 - \alpha} \left( Y - \alpha \, m  - \sqrt{2 m \alpha(1-\alpha) \log(1/\rho)} - \frac{\log(1/\rho)}{3} \right)
\end{align*}
Dividing both sides of the inequality by $m$ recovers the theorem statement.

\end{proof}

\newpage

\section{ImageNet and CIFAR-10 Results}
\label{section:experimenttables}

\subsection{Certification}

Tables \ref{table:imagenet_certify_results} and \ref{table:cifar_certify_results} show the approximate certified top-1 test set accuracy of randomized smoothing on ImageNet and CIFAR-10 with various noise levels $\sigma$.
By ``approximate certified accuracy,'' we mean that we ran \textsc{Certify} on a subsample of the test set, and for each $r$ we report the fraction of examples on which \textsc{Certify} (a) did not abstain, (b) returned the correct class, and (c) returned a radius $R$ greater than $r$.
There is some probability (at most $\alpha$) that any example's certification is inaccurate.
We used $\alpha = 0.001$ and $n = 100000$.
On CIFAR-10 our base classifier was a 110-layer residual network and we certified the full test set; on ImageNet our base classifier was a ResNet-50 and we certified a subsample of 500 points.
Note that the certified accuracy at $r=0$ is just the standard accuracy of the smoothed classifier.
See Appendix \ref{section:experimentdetails} for more experimental details.

\begin{table}[h]
\begin{center}
\begin{small}
\begin{sc}
\begin{tabular}{l | c c c c c c c}
& $r = 0.0$& $r = 0.5$& $r = 1.0$& $r = 1.5$& $r = 2.0$& $r = 2.5$& $r = 3.0$\\
\midrule
$\sigma = 0.25$ & \textbf{0.67} & \textbf{0.49} & 0.00 & 0.00 & 0.00 & 0.00 & 0.00\\
$\sigma = 0.50$ & 0.57 & 0.46 & \textbf{0.37} & \textbf{0.29} & 0.00 & 0.00 & 0.00\\
$\sigma = 1.00$ & 0.44 & 0.38 & 0.33 & 0.26 & \textbf{0.19} & \textbf{0.15} & \textbf{0.12}\\
\bottomrule
\end{tabular}
\end{sc}
\end{small}
\end{center}
\caption{Approximate certified test accuracy on ImageNet.  Each row is a setting of the hyperparameter $\sigma$, each column is an $\ell_2$ radius.  The entry of the best $\sigma$ for each radius is bolded. For comparison, random guessing would attain 0.001 accuracy.}
\label{table:imagenet_certify_results}
\end{table}

\begin{table}[h]
\begin{center}
\begin{small}
\begin{sc}
\begin{tabular}{l | c c c c c c c}
& $r = 0.0$& $r = 0.25$& $r = 0.5$& $r = 0.75$& $r = 1.0$& $r = 1.25$& $r = 1.5$\\
\midrule
$\sigma = 0.12$ & \textbf{0.83} & 0.60 & 0.00 & 0.00 & 0.00 & 0.00 & 0.00\\
$\sigma = 0.25$ & 0.77 & \textbf{0.61} & 0.42 & 0.25 & 0.00 & 0.00 & 0.00\\
$\sigma = 0.50$ & 0.66 & 0.55 & \textbf{0.43} & \textbf{0.32} & 0.22 & 0.14 & 0.08\\
$\sigma = 1.00$ & 0.47 & 0.41 & 0.34 & 0.28 & \textbf{0.22} & \textbf{0.17} & \textbf{0.14}\\
\bottomrule
\end{tabular}
\end{sc}
\end{small}
\end{center}
\caption{Approximate certified test accuracy on CIFAR-10.  Each row is a setting of the hyperparameter $\sigma$, each column is an $\ell_2$ radius.  The entry of the best $\sigma$ for each radius is bolded. For comparison, random guessing would attain 0.1 accuracy.}
\label{table:cifar_certify_results}
\end{table}

\subsection{Prediction}

Table \ref{table:imagenet_predict_results} shows the performance of \textsc{Predict} as the number of Monte Carlo samples $n$ is varied between 100 and 10,000.
Suppose that for some test example $(x,c )$, \textsc{Predict} returns the label $\hat{c}_A$.
We say that this prediction was \emph{correct} if $\hat{c}_A = c$ and we say that this prediction was \emph{accurate} if $\hat{c}_A = g(x)$.
For example, a prediction could be correct but inaccurate if $g$ is wrong at $x$, yet \textsc{Predict} accidentally returns the correct class.
Ideally, we'd like \textsc{Predict} to be both correct and accurate.

With $n= $ 100 Monte Carlo samples and a failure rate of $\alpha = 0.001$, \textsc{Predict} is cheap to evaluate (0.15 seconds on our hardware) yet it attains relatively high top-1 accuracy of 65\% on the ImageNet test set, and only abstains 12\% of the time.
When we use $n= $ 10,000 Monte Carlo samples, \textsc{Predict} takes longer to evaluate (15 seconds), yet only abstains 4\% of the time.
Interestingly, we observe from Table \ref{table:imagenet_predict_results} that most of the abstentions when $n=100$ were for examples on which $g$ was wrong, so in practice we would lose little accuracy by taking $n$ to be as small as 100.

\begin{table}[h]
\begin{center}
\begin{small}
\begin{sc}
\begin{tabular}{lrrrrr}
\toprule
{} &  correct, accurate &  correct, inaccurate &  incorrect, accurate &  incorrect, inaccurate &  abstain \\
n     &                    &                      &                      &                        &          \\
\midrule
100   &               0.65 &                 0.00 &                 0.23 &                   0.00 &     0.12 \\
1000  &               0.68 &                 0.00 &                 0.28 &                   0.00 &     0.04 \\
10000 &               0.69 &                 0.00 &                 0.30 &                   0.00 &     0.01 \\
\bottomrule
\end{tabular}
\end{sc}
\end{small}
\end{center}
\caption{Performance of \textsc{Precict} as $n$ is varied.  The dataset was ImageNet and $\sigma = 0.25$, $\alpha = 0.001$.   Each column shows the fraction of test examples which ended up in one of five categories; the prediction at $x$ is ``correct'' if \textsc{Predict} returned the true label, while the prediction is ``accurate'' if \textsc{Predict} returned $g(x)$.  Computing $g(x)$ exactly is not possible, so in order to determine whether \textsc{Predict} was accurate, we took the gold standard to be the top class over $n=$100,000 Monte Carlo samples.}
\label{table:imagenet_predict_results}
\end{table}

\newpage
\section{Training with Noise}
\label{section:training-with-noise}

As mentioned in section \ref{section:train-base-classifier}, in the experiments for this paper, we followed  \citet{lecuyer2018certified} and trained the base classifier by minimizing the cross-entropy loss with Gaussian data augmentation.
We now provide some justification for this idea.

Let $\{(x_1, c_1), \hdots, (x_n, c_n)\}$ be a training dataset.
We assume that the base classifier takes the form $f(x) = \argmax_{c \in \mathcal{Y}} f_c (x)$, where each $f_c$ is the scoring function for class $c$.

Suppose that our goal is to maximize the sum of of the log-probabilities that $f$ will classify each $x_i+\varepsilon$ as $c_i$:
\begin{align}
 \sum_{i=1}^n \log \mathbb{P}_{\varepsilon}(f(x_i+\varepsilon) = c_i) &= \sum_{i=1}^n \log \mathbb{E}_{\varepsilon} \; \mathbf{1} \left[ \argmax_c f_c(x_i + \varepsilon) = c_i \right] \label{eq:train-objective}
\end{align}

Recall that the softmax function can be interpreted as a continuous, differentiable approximation to $\argmax$:
\begin{align*}
\mathbf{1} \left[ \argmax_c f_c(x_i + \varepsilon) = c_i \right] \approx \frac{\exp( f_{c_i}(x_i + \varepsilon) )}{ \sum_{c \in \mathcal{Y}} \exp( f_c(x_i + \varepsilon))} 
\end{align*}
Therefore, our objective is approximately equal to:
\begin{align}
\sum_{i=1}^n \log \mathbb{E}_{\varepsilon} \left[ \frac{\exp( f_{c_i}(x_i + \varepsilon) )}{ \sum_{c \in \mathcal{Y}} \exp( f_c(x_i + \varepsilon))}  \right] \label{eq:approx-train-objective}
\end{align}
By Jensen's inequality and the concavity of $\log$, this quantity is lower-bounded by:
\begin{align*}
\sum_{i=1}^n \mathbb{E}_{\varepsilon} \left[ \log \frac{\exp( f_{c_i}(x_i + \varepsilon) )}{ \sum_{c \in \mathcal{Y}} \exp( f_c(x_i + \varepsilon))}  \right]
\end{align*}
which is the negative of the cross-entropy loss under Gaussian data augmentation.

Therefore, minimizing the cross-entropy loss under Gaussian data augmentation will maximize (\ref{eq:approx-train-objective}), which will approximately maximize (\ref{eq:train-objective}).

\newpage
\section{Noise Level can Scale with Input Resolution}
\label{section:input-resolution}

Since our robustness guarantee \eqref{radius} in Theorem \ref{mainbound} does not explicitly depend on the data dimension $d$, one might worry that randomized smoothing is less effective for images in high resolution --- certifying a fixed $\ell_2$ radius is ``less impressive'' for, say, $224 \times 224$ image than for a $56 \times 56$ image.
However, it turns out that in high resolution, images can be corrupted with larger levels of isotropic Gaussian noise while still preserving their content.
This fact is made clear by Figure \ref{figure:noise_dimension}, which shows an image at high and low resolution corrupted by Gaussian noise with the same variance.full
The class (``hummingbird'') is easy to discern from the high-resolution noisy image, but not from the low-resolution noisy image.
As a consequence, in high resolution one can take $\sigma$ to be larger while still being able to obtain a base classifier that classifies noisy images accurately.
Since our Theorem \ref{mainbound} robustness guarantee scales linearly with $\sigma$, this means that in high resolution one can certify larger radii.

\begin{figure}[h]
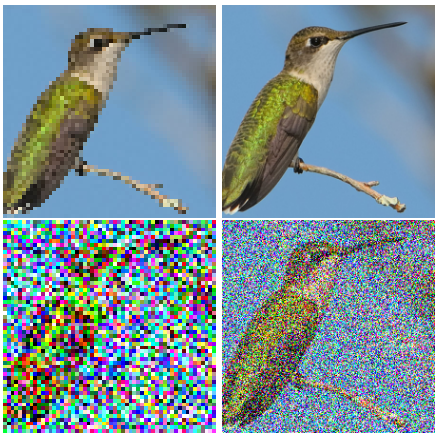

\begin{center}
	\includegraphics[width=80px]{figures/hummingbird_small_clean}
	\includegraphics[width=80px]{figures/hummingbird_big_clean} \\
	\includegraphics[width=80px]{figures/hummingbird_small_noisy}
	\includegraphics[width=80px]{figures/hummingbird_big_noisy}
\end{center}
\caption{\textbf{Top}: An ImageNet image from class  ``hummingbird'' in resolutions 56x56 (left) and 224x224 (right). \textbf{Bottom}: the same images corrupted by isotropic Gaussian noise at $\sigma=0.5$. On noiseless images the class is easy to distinguish no matter the resolution, but on noisy data the class is much easier to distinguish when the resolution is high.}
\label{figure:noise_dimension}
\end{figure}

The argument above can be made rigorous, though we first need to decide what it means for two images to be high- and low-resolution versions of each other.
Here we present one solution:

Let $\mathcal{X}$ denote the space of ``high-resolution'' images in dimension $2k \times 2k \times 3$, and let  $\mathcal{X}'$ denote the space of ``low-resolution'' images in dimension $k \times k \times 3$.
Let $\textsc{AvgPool}: \mathcal{X} \to \mathcal{X}'$ be the function which takes as input an image $x$ in dimension $2k \times 2k \times 3$, averages together every 2x2 square of pixels, and outputs an image in dimension $k \times k \times 3$. 

Equipped with these definitions, we can say that $(x, x') \in \mathcal{X} \times \mathcal{X}' $ are a high/low resolution image pair if $x' = \textsc{AvgPool}(x)$.

\begin{proposition}
Given any smoothing classifier $g': \mathcal{X}' \to \mathcal{Y}$, one can construct a smoothing classifier $g: \mathcal{X} \to \mathcal{Y}$ with the following property:
for any $x \in \mathcal{X}$ and $x' = \textsc{AvgPool}(x)$, $g$ predicts the same class at $x$ that $g'$ predicts at $x'$, but is certifiably robust at twice the radius.
\label{prop:resolution}
\end{proposition}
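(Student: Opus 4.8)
The plan is to exploit the fact that \textsc{AvgPool} is a \emph{linear} map and that pushing an isotropic Gaussian through it produces another isotropic Gaussian, only with smaller variance. Write $A$ for the linear operator on $\mathbb{R}^{2k \times 2k \times 3}$ implementing \textsc{AvgPool}. Since $g'$ is by assumption a smoothing classifier, it is the smoothing of some base classifier $f' : \mathcal{X}' \to \mathcal{Y}$ at some noise level $\sigma'$. I would define the high-resolution base classifier by composition, $f := f' \circ \textsc{AvgPool}$, and take $g$ to be \emph{its} smoothing at the \emph{doubled} noise level $\sigma := 2\sigma'$. The proposition then splits into a prediction claim ($g(x) = g'(\textsc{AvgPool}(x))$) and a robustness claim (certified radius doubles).

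The key computation, and the crux of the whole argument, is to identify the distribution of $A\varepsilon$ when $\varepsilon \sim \mathcal{N}(0, \sigma^2 I)$. Each row of $A$ has exactly four nonzero entries, all equal to $\tfrac14$, and the supports of distinct rows are disjoint, since each output pixel averages its own $2 \times 2$ block. Hence $A A^\top = \tfrac14 I$, so $A\varepsilon \sim \mathcal{N}\big(0, (\sigma/2)^2 I\big)$; equivalently, each low-resolution noise coordinate is an average of four independent $\mathcal{N}(0,\sigma^2)$ variables and therefore has variance $\sigma^2/4$, with independence across coordinates following from the disjointness of the blocks. In words, \textsc{AvgPool} rescales isotropic noise by a factor of $1/2$ in standard deviation.

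With this in hand the prediction claim follows by a change of variables inside the definition of $g$. Writing $x' = \textsc{AvgPool}(x) = Ax$ and using $A\varepsilon \sim \mathcal{N}(0,\sigma'^2 I)$ when $\sigma = 2\sigma'$,
\begin{align*}
g(x) &= \argmax_c \mathbb{P}_\varepsilon\big(f'(Ax + A\varepsilon) = c\big) \\
&= \argmax_c \mathbb{P}_{\eta \sim \mathcal{N}(0,\sigma'^2 I)}\big(f'(x' + \eta) = c\big) = g'(x').
\end{align*}
Crucially, this same change of variables shows that for \emph{every} class $c$ the high-resolution class probability $\mathbb{P}(f(x+\varepsilon)=c)$ equals the low-resolution class probability $\mathbb{P}(f'(x'+\eta)=c)$; in particular $g$ at $x$ and $g'$ at $x'$ admit identical $\underline{p_A}$ and $\overline{p_B}$.

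The robustness claim is then immediate from Theorem \ref{mainbound}, since the certified radius depends on the data only through $\sigma$ and the shared class probabilities. The radius certified for $g$ is $R = \tfrac{\sigma}{2}\big(\Phi^{-1}(\underline{p_A}) - \Phi^{-1}(\overline{p_B})\big) = 2 \cdot \tfrac{\sigma'}{2}\big(\Phi^{-1}(\underline{p_A}) - \Phi^{-1}(\overline{p_B})\big) = 2R'$, exactly twice the radius $R'$ certified for $g'$. I do not expect a genuine obstacle here: the only nontrivial point is the variance bookkeeping of the second paragraph, and once one sees that doubling $\sigma$ precisely cancels the $1/2$ factor introduced by \textsc{AvgPool} at the level of predictions, the factor of $2$ in the radius is forced by the linear dependence of the Theorem \ref{mainbound} bound on $\sigma$.
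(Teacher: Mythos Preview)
Your proposal is correct and follows essentially the same approach as the paper: define $f = f' \circ \textsc{AvgPool}$ with $\sigma = 2\sigma'$, use that averaging four independent $\mathcal{N}(0,\sigma^2)$ variables yields $\mathcal{N}(0,(\sigma/2)^2)$ to conclude that $f(x+\varepsilon)$ and $f'(x'+\varepsilon')$ are equal in distribution, and then invoke Theorem~\ref{mainbound}. Your write-up is in fact slightly more careful than the paper's, spelling out $AA^\top = \tfrac14 I$ and explicitly noting that the equality of all class probabilities forces the same $\underline{p_A},\overline{p_B}$ before applying the radius formula.
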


\begin{proof}
Given some smoothing classifier $g' = (f', \sigma')$ from $\mathcal{X}'$ to $\mathcal{Y}$, define $g$ to be the smoothing classifier $(f, \sigma)$ from $\mathcal{X}$ to $\mathcal{Y}$ with noise level $\sigma = 2 \sigma'$ and base classifier $f(x) = f'(\textsc{AvgPool}(x))$.
Note that the average of four independent copies of $\mathcal{N}(0, (2\sigma)^2)$ is distributed as $\mathcal{N}(0, \sigma^2)$.
Therefore, for any high/low-resolution image pair $x' = \textsc{AvgPool}(x)$, the random variable $\textsc{AvgPool}(x+\varepsilon)$, where $\varepsilon \sim \mathcal{N}(0, (2\sigma)^2 I_{2k \times 2k \times 3})$, is equal in distribution to the random variable $x'+\varepsilon'$, where $\varepsilon' \sim \mathcal{N}(0, \sigma^2 I_{k \times k \times 3})$.
Hence, $f(x+\varepsilon) = f'(\textsc{AvgPool}(x+\varepsilon))$ has the same distribution as $f'(x'+\varepsilon')$.
By the definition of $g$, this means that $g(x) = g'(x')$,
Additionally, by Theorem \ref{mainbound}, since $\sigma=2 \sigma'$, this means that $g$'s prediction at $x$ is certifiably robust at twice the radius as $g'$'s prediction at $x'$.
\end{proof}

\newpage
\section{Additional Experiments}
\label{section:additionalexperiments}

\subsection{Comparisons to baselines}
\label{section:additionalexperiments:comparisons}
Figure \ref{fig:compare-baselines} compares the certified accuracy of a smoothed 20-layer resnet to that of the released models from two recent works on certified $\ell_2$ robustness: the Lipschitz approach from \citet{tsuzuku2018lipschitz} and the approach from \citet{zhang2018efficient}.
Note that in these experiments, the base classifier for smoothing was larger than the networks of competing approaches.
The comparison to \citet{zhang2018efficient} is on CIFAR-10, while the comparison to  \citet{tsuzuku2018lipschitz} is on SVHN.
Note that for each comparison, we preprocessed the dataset to follow the preprocessing used when the baseline was trained; therefore, the radii reported for CIFAR-10 here are not comparable to the radii reported elsewhere in this paper.
Full experimental details are in Appendix \ref{section:experimentdetails}.

\begin{figure}[h]
\centering
\begin{subfigure}{0.49\textwidth}
	\includegraphics[width=\textwidth]{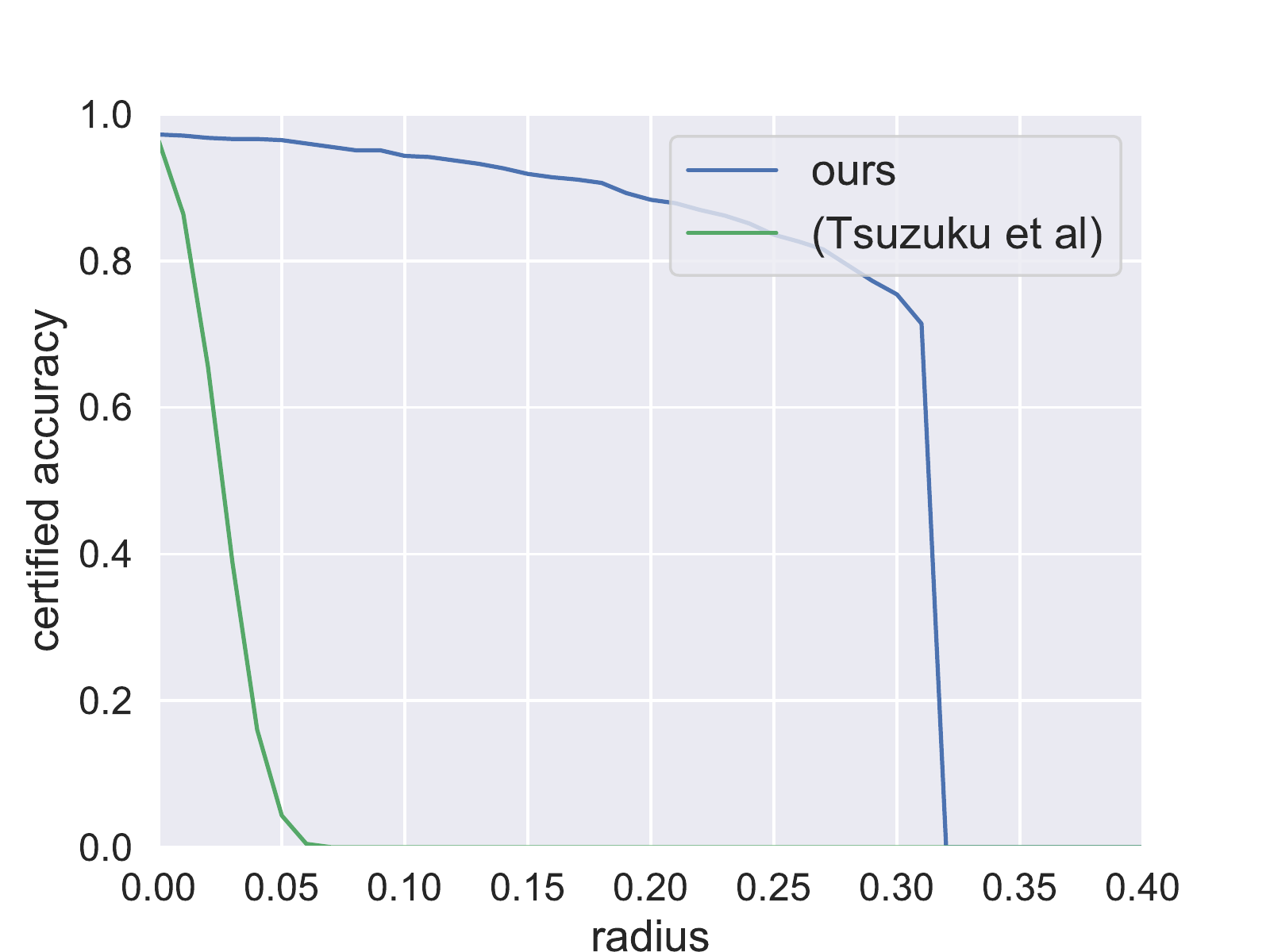}
	\caption{\citet{tsuzuku2018lipschitz}}
	\label{compare_tsuzuku}
\end{subfigure}
\begin{subfigure}{0.49\textwidth}
	\includegraphics[width=\textwidth]{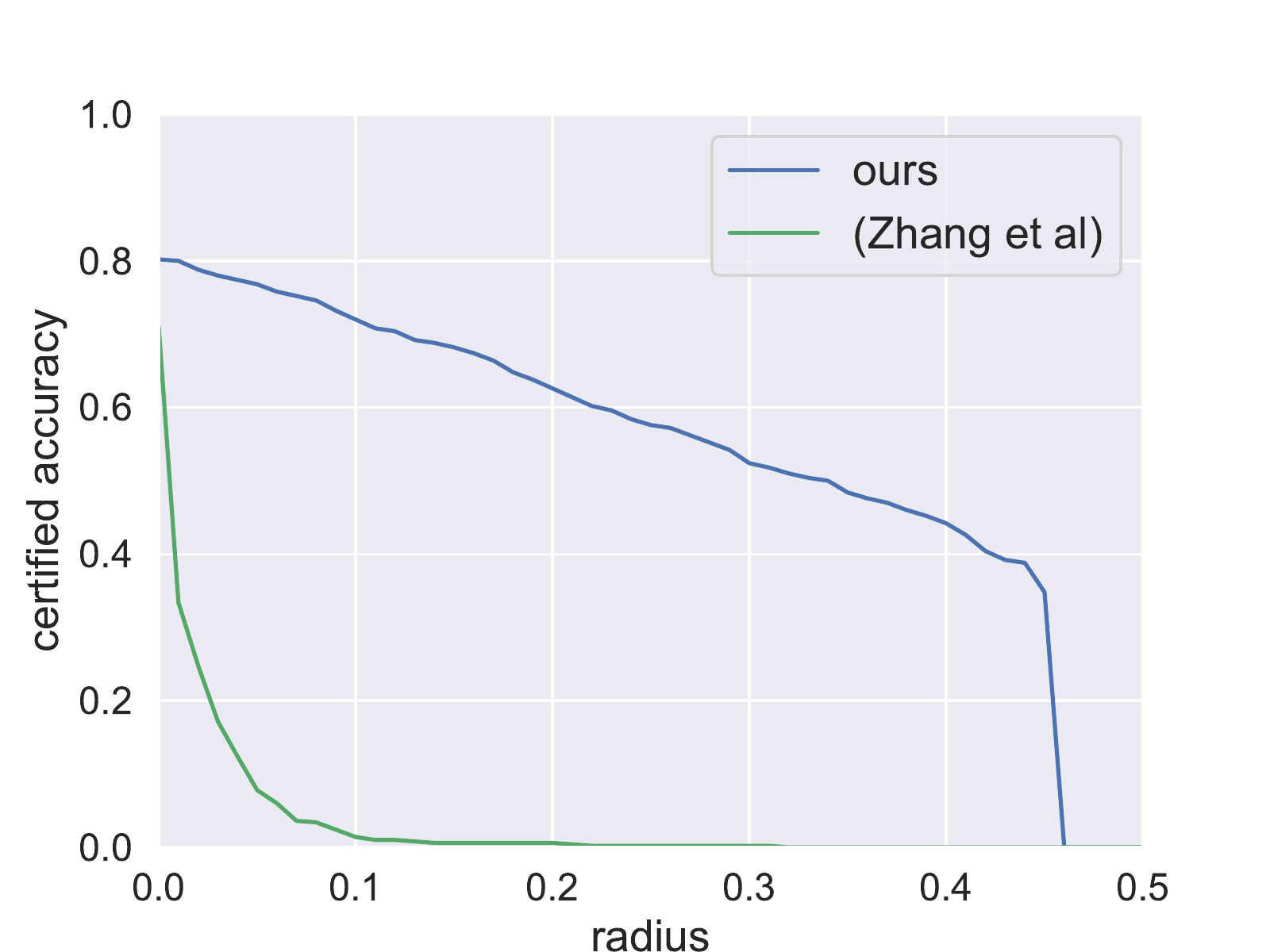}
	\caption{\citet{zhang2018efficient}}
	\label{compare_zhang}
\end{subfigure}
\caption{Randomized smoothing with a 20-layer resnet base classifier attains higher certified accuracy than the released models from two recent works on certified $\ell_2$ robustness.}
\label{fig:compare-baselines}
\end{figure}

\subsection{High-probability guarantees}
\label{section:additionalexperiments:highprobability}

Appendix \ref{section:highprobability} details how to use \textsc{Certify} to obtain a lower bound on the certified test accuracy at radius $r$ of a randomized smoothing classifier that holds with high probability over the randomness in \textsc{Certify}.
In the main paper, we declined to do this and simply reported the approximate certified test accuracy, defined as the fraction of test examples for which \textsc{Certify} gives the correct prediction and certifies it at radius $r$.
Of course, with some probability (guaranteed to be less than $\alpha$), each of these certifications is wrong.

However, we now demonstrate empirically that there is a negligible difference between a proper high-probability lower bound on the certified accuracy and the approximate version that we reported in the paper.
We created a randomized smoothing classifier $g$ on ImageNet with a ResNet-50 base classifier and noise level $
\sigma = 0.25$.
We used \textsc{Certify}  with $\alpha = 0.001$ to certify a subsample of 500 examples from the ImageNet test set.
From this we computed the approximate certified test accuracy at each radius $r$.
Then we used the correction from Appendix \ref{section:highprobability} with $\rho = 0.001$ to obtain a lower bound on the certified test accuracy at $r$ that holds pointwise with probability at least $1 - \rho$ over the randomness in \textsc{Certify}.
Figure \ref{figure:highprobability_lowerbound} plots both quantities as a function of $r$.
Observe that the difference is so negligible that the lines almost overlap.

\begin{figure}[h]
\begin{center}
\includegraphics[width=150px]{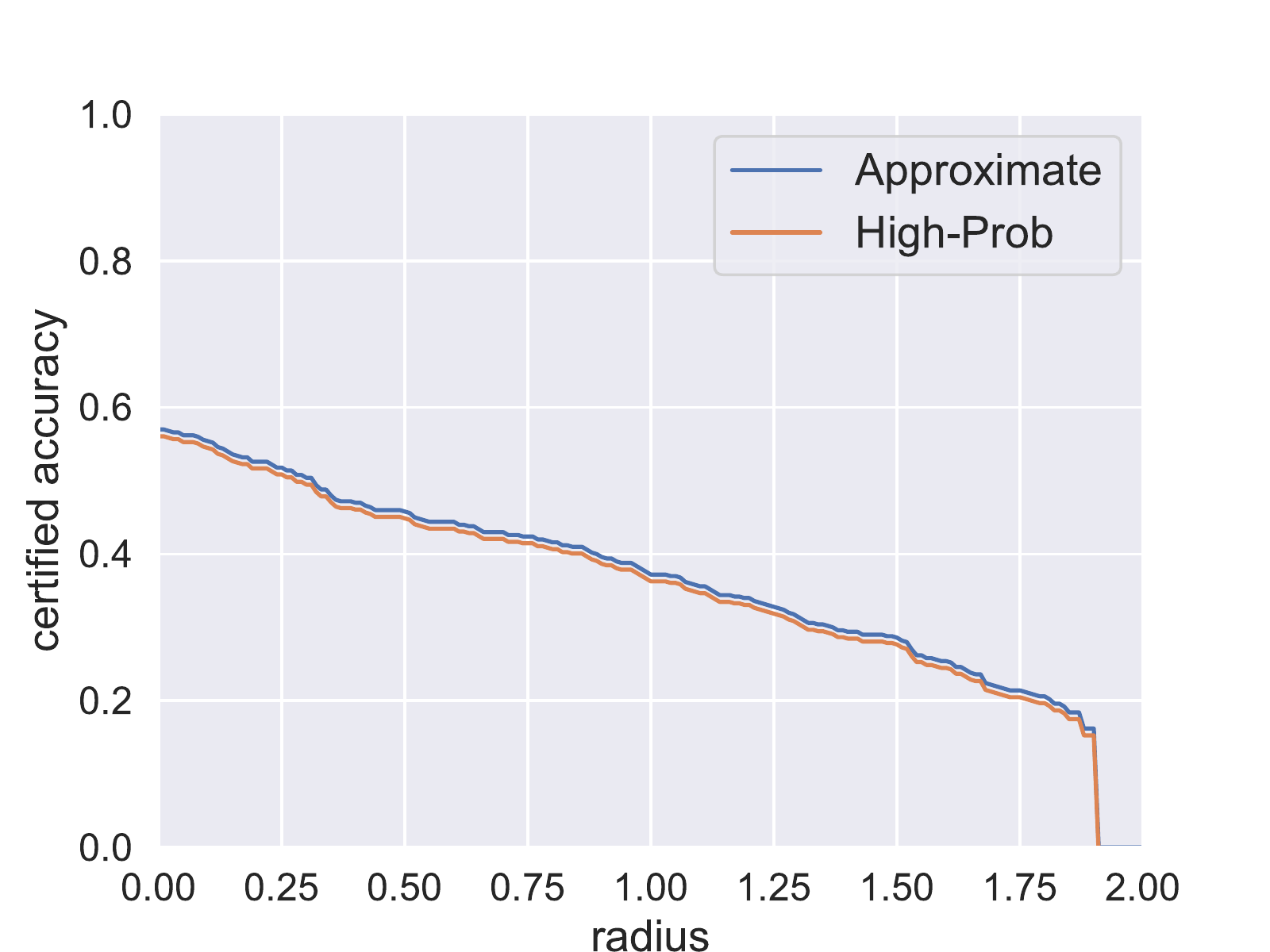}
\caption{The difference between the approximate certified accuracy, and a high-probability lower bound on the certified accuracy, is negligible.}
\label{figure:highprobability_lowerbound}
\end{center}
\end{figure}

\subsection{How much noise to use when training the base classifier?}
\label{section:additionalexperiments:noise}

In the main paper, whenever we created a randomized smoothing classifier $g$ at noise level $\sigma$, we always trained the corresponding base classifier $f$ with Gaussian data augmentation at noise level $\sigma$.
In Figure \ref{varytrainnoise}, we show the effects of training the base classifier with a different level of Gaussian noise.
Observe that $g$ has a lower certified accuracy if $f$ was trained using a different noise level.
It seems to be worse to train with noise $< \sigma$ than to train with noise $> \sigma$.

\begin{figure}[!h]
\begin{center}
\begin{subfigure}{0.40\textwidth}
	\includegraphics[width=\textwidth]{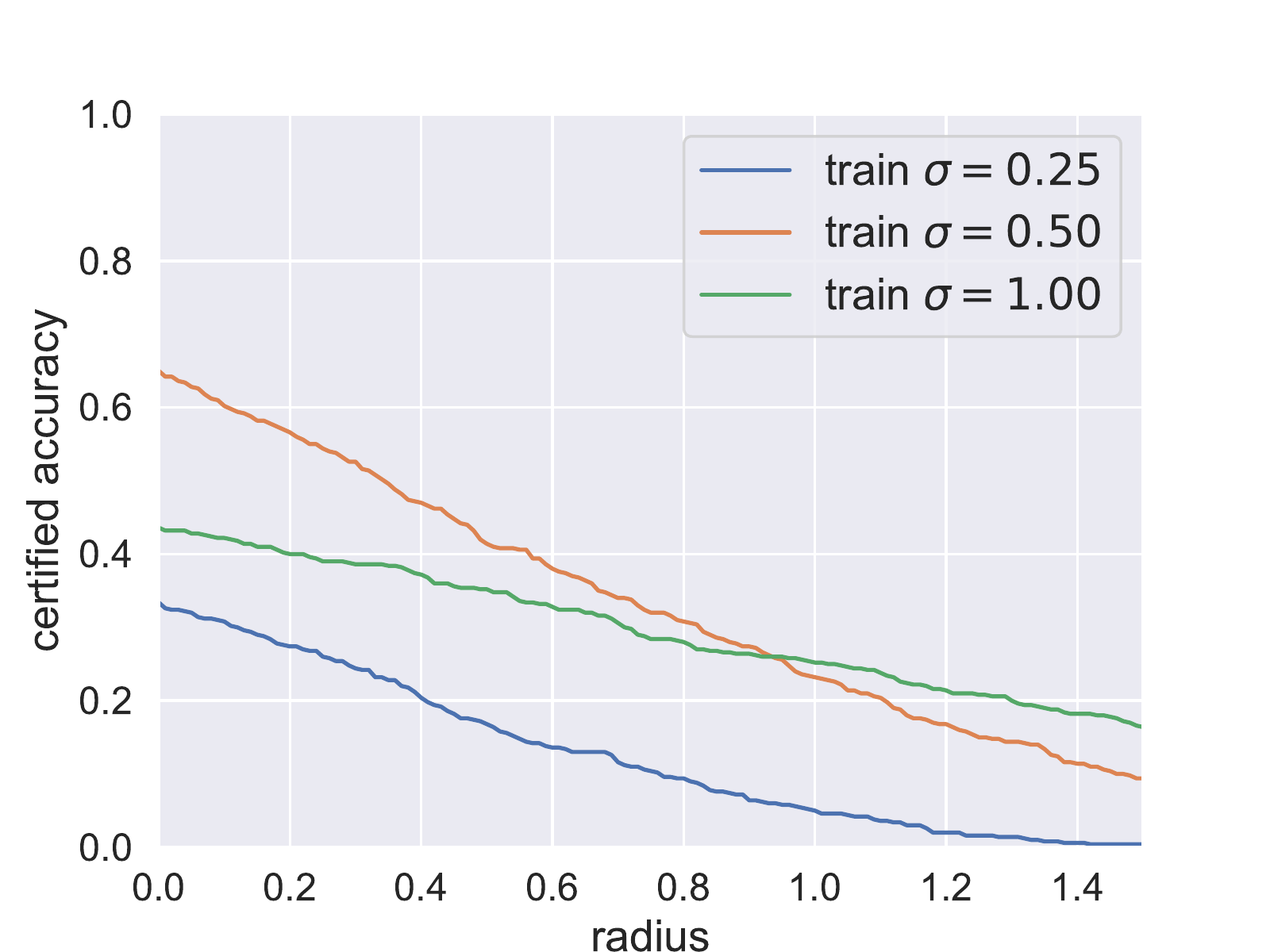}
	\caption{CIFAR-10}
\end{subfigure}
\begin{subfigure}{0.40\textwidth}
	\includegraphics[width=\textwidth]{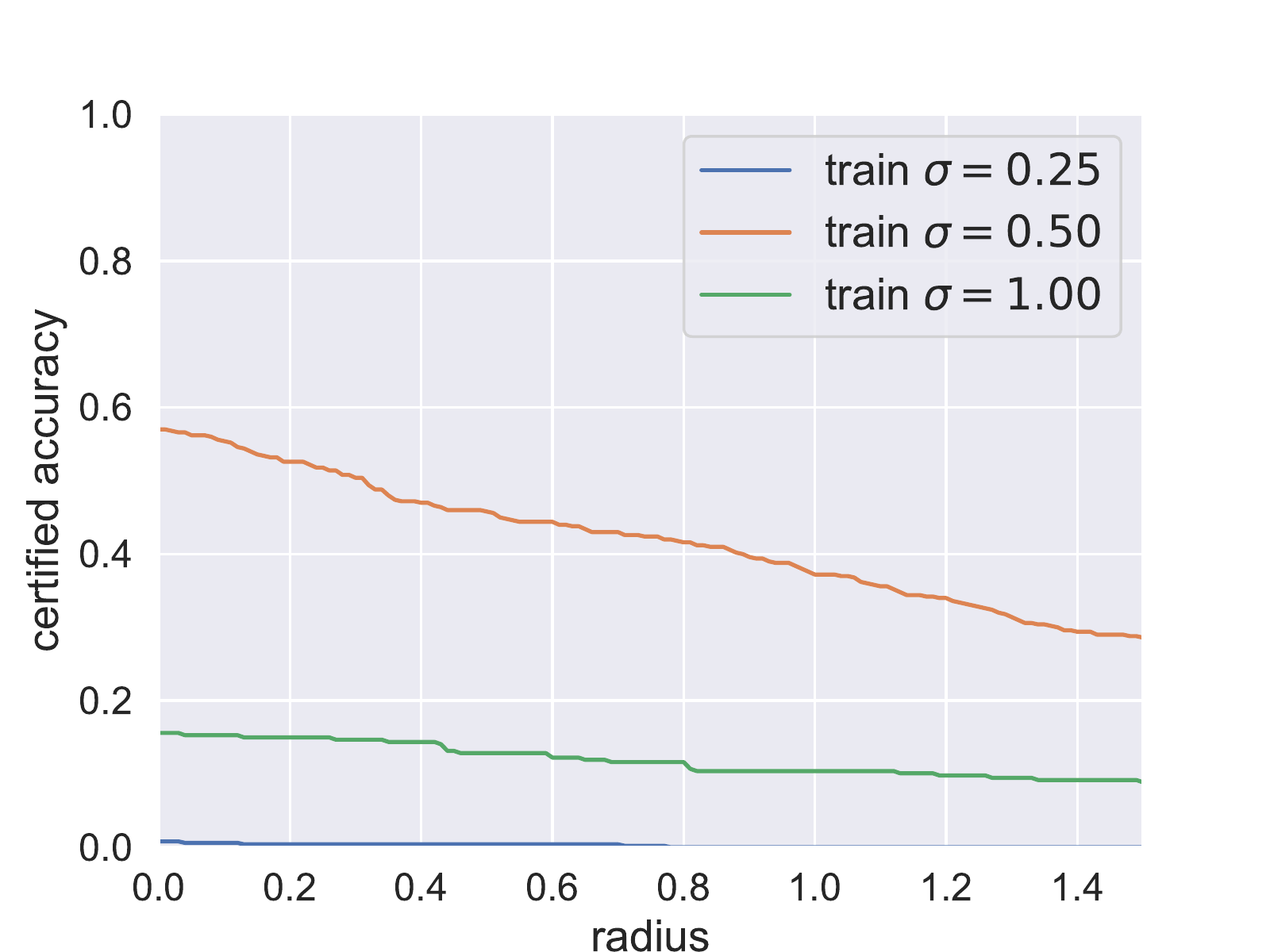}
	\caption{ImageNet}
\end{subfigure}
\caption{Vary training noise while holding prediction noise fixed at $\sigma=0.50$.}
\label{varytrainnoise}
\end{center}
\end{figure}

\newpage

\section{Derivation of Prior Randomized Smoothing Guarantees}
\label{section:derivation}

In this appendix, we derive the randomized smoothing guarantees of \citet{lecuyer2018certified} and \citet{li2018second} using the notation of our paper.
Both guarantees take same general form as ours, except with a different expression for $R$:

\textbf{Theorem (generic guarantee):} \textit{
Let $f: \mathbb{R}^d \to \mathcal{Y}$ be any deterministic or random function, and let $\varepsilon \sim \mathcal{N}(0, \sigma^2 I)$.
Let $g$ be defined as in (\ref{g}).
Suppose $c_A \in \mathcal{Y}$ and $\underline{p_A}, \overline{p_B} \in [0, 1]$ satisfy:
\small
\begin{align}
\label{knownfacts3}
\mathbb{P}(f(x+\varepsilon) = c_A) \ge \underline{p_A} \ge \overline{p_B }\ge \max_{c \neq c_A} \mathbb{P}(f(x+ \varepsilon) = c)
\end{align}
\normalsize
Then $g(x+\delta) = c_A$ for all $\|\delta\|_2 < R$.
}

For convenience, define the notation $X \sim \mathcal{N}(x, \sigma^2 I)$ and $Y \sim \mathcal{N}(x+\delta, \sigma^2 I)$.

\subsection{\citet{lecuyer2018certified}}

\citet{lecuyer2018certified} proved a version of the generic robustness guarantee in which
\begin{align*}
R = \sup_{0 < \beta \le  \min \left(1, \frac{1}{2} \log\frac{\underline{p_A}}{\overline{p_B}} \right) } \frac{\sigma \beta }{\sqrt{2 \log \left( \frac{1.25(1 + \exp(\beta))}{\underline{p_A} - \exp(2 \beta) \overline{p_B}} \right)}}
\end{align*}

\begin{proof}
In order to avoid notation that conflicts with the rest of this paper, we use  $\beta$ and $\gamma$ where \citet{lecuyer2018certified} used $\epsilon$ and $\delta$.

Suppose that we have some $0 < \beta \le 1$ and $\gamma > 0$ such that 
\begin{align}
\sigma^2 = \frac{\|\delta\|^2}{\beta^2} 2 \log \frac{1.25}{\gamma} \label{sigma2}
\end{align}

The ``Gaussian mechanism'' from differential privacy guarantees that:
\begin{align}
\mathbb{P}(f(X) = c_A) &\le \exp(\beta) \mathbb{P}(f(Y) = c_A) + \gamma \label{DP1}
\end{align}
and, symmetrically,
\begin{align}
\mathbb{P}(f(Y) = c_B) &\le \exp(\beta) \mathbb{P}(f(X) = c_B) + \gamma \label{DP2}
\end{align}
See \citet{lecuyer2018certified}, Lemma 2 for how to obtain this form from the standard form of the $(\beta, \gamma)$ DP definition.

Fix a perturbation $\delta$.
To guarantee that $g(x+\delta) = c_A$, we need to show that $\mathbb{P}(f(Y) = c_A) > \mathbb{P}(f(Y) = c_B) $  for each $c_B \neq c_A$.

Together, (\ref{DP1}) and (\ref{DP2}) imply that to guarantee $\mathbb{P}(f(Y) = c_A) > \mathbb{P}(f(Y) = c_B) $ for any $c_B$, it suffices to show that:
\begin{align}
\mathbb{P}(f(X) = c_A) > \exp(2\beta) \mathbb{P}(f(X) = c_B) + \gamma (1+\exp(\beta)) \label{DPbound1}
\end{align}
Therefore, in order to guarantee that$\mathbb{P}(f(Y) = c_A) > \mathbb{P}(f(Y) = c_B) $  for each $c_B \neq c_A$,  by (\ref{knownfacts3}) it suffices to show:
\begin{align}
\underline{p_A} > \exp(2\beta) \overline{p_B} + \gamma (1+\exp(\beta)) \label{DPbound2}
\end{align}
Now, inverting (\ref{sigma2}), we obtain:
\begin{align}
\gamma &= 1.25 \exp \left( - \frac{\sigma^2 \beta^2}{2 \|\delta\|^2} \right) \label{gamma}
\end{align}
Plugging (\ref{gamma}) into (\ref{DPbound2}), we see that to guarantee  $\mathbb{P}(f(Y) = c_A)  \ge \mathbb{P}(f(Y) = c_B)$ it suffices to show that:
\begin{align}
\underline{p_A} > \exp(2 \beta) \overline{p_B} + 1.25 \exp \left( - \frac{\sigma^2 \beta^2}{2 \|\delta\|^2} \right) (1 + \exp(\beta))
\end{align}
which rearranges to:
\begin{align}
\frac{ \underline{p_A} - \exp(2 \beta) \overline{p_B} } {1.25 (1 +\exp(\beta))} > \exp \left(- \frac{\sigma^2 \beta^2}{2 \|\delta\|^2} \right) \label{DPbound3}
\end{align}
Since the RHS is always positive, and the denominator on the LHS is always positive, this condition can only possibly hold if the numerator on the LHS is positive.
Therefore, we need to restrict $\beta$ to
\begin{align}
0 < \beta \le \min \left(1, \frac{1}{2} \log\frac{\underline{p_A}}{\overline{p_B}} \right) \label{validepsilon}
\end{align}
The condition (\ref{DPbound3}) is equivalent to:
\begin{align}
\|\delta\|^2 \log \frac{1.25 (1 +\exp(\beta))} { \underline{p_A} - \exp(2 \beta) \overline{p_B} } < \frac{\sigma^2 \beta^2}{2} 
\end{align}
Since $\underline{p_A} \le 1$ and $\overline{p_B} \ge 0$, the denominator in the LHS is $\le 1$ which is in turn $\le$ the numerator on the LHS.
Therefore, the term inside the log in the LHS is greater than 1, so the log term on the LHS is greater than zero.
Therefore, we may divide both sides of the inequality by the log term on the LHS to obtain:
\begin{align}
\| \delta \|^2 < \frac{\sigma^2 \beta^2}{2 \log \left( \frac{1.25(1 + \exp(\beta))}{\underline{p_A} - \exp(2 \beta) \overline{p_B}} \right)}
\end{align}
Finally, we take the square root and maximize the bound over all valid $\beta$ (\ref{validepsilon}) to yield:
\begin{align}
\| \delta \| < \sup_{0 < \beta \le  \min \left(1, \frac{1}{2} \log\frac{\underline{p_A}}{\overline{p_B}} \right) } \frac{\sigma \beta }{\sqrt{2 \log \left( \frac{1.25(1 + \exp(\beta))}{\underline{p_A} - \exp(2 \beta) \overline{p_B}} \right)}}
\end{align}
\end{proof}

Figure \ref{plot_lecuyerbound} plots this bound at varying settings of the tuning parameter $\beta$, while Figure \ref{optimize_lecuyerbound} plots how the bound varies with $\beta$ for a fixed $\underline{p_A}$ and $\overline{p_B}$.

\subsection{\citet{li2018second}}

\citet{li2018second} proved a version of the generic robustness guarantee in which
\begin{align*}
R = \sup_{\alpha > 0} \sigma \sqrt{- \frac{2}{\alpha} \log \left( 1 - \underline{p_A} - \overline{p_B} + 2 \left( \frac{1}{2} (\underline{p_A}^{1 - \alpha} + \overline{p_B}^{1 - \alpha})^{1 - \alpha} \right) \right)}
\end{align*}

\begin{proof}
A generalization of KL divergence, the $\alpha$-Renyi divergence is an information theoretic measure of distance between two distributions.  It is parameterized by some $\alpha > 0$.  The $\alpha$-Renyi divergence between two discrete distributions $P$ and $Q$ is defined as:
\begin{align}
D_\alpha(P||Q) := \frac{1}{\alpha - 1} \log \left( \sum_{i=1}^k \frac{p_i^\alpha}{q_i^{\alpha - 1 }} \right)
\end{align}
In the continuous case, this sum is replaced with an integral.
The divergence is undefined when $\alpha = 1$ since a division by zero occurs, but the limit of $D_\alpha(P||Q)$ as $\alpha \to 1$ is the KL divergence between $P$ and $Q$.

\citet{li2018second} prove that if $P$ is a discrete distribution for which the highest probability class has probability $\ge \underline{p_A}$ and all other classes have probability $\le \overline{p_B}$, then for any other discrete distribution $Q$ for which
\begin{align}
D_\alpha(P||Q) < - \log \left( 1 - \underline{p_A} - \overline{p_B} + 2 \left( \frac{1}{2} (\underline{p_A}^{1 - \alpha} + \overline{p_B}^{1 - \alpha})^{1 - \alpha} \right) \right) \label{renyibound}
\end{align}
the highest-probability class in $Q$ is guaranteed to be the same as the highest-probability class in $P$.

We now apply this result to the discrete distributions $P=f(X)$ and $Q=f(Y)$.
If $D_\alpha(f(X) || f(Y))$ satisfies (\ref{renyibound}), then it is guaranteed that $g(x) = g(x+\delta)$.

The data processing inequality states that applying a function to two random variables can only decrease the $\alpha$-Renyi divergence between them.  In particular,
\begin{align}
D_\alpha(f(X) || f(Y)) \le D_\alpha(X || Y)
\end{align}
There is a closed-form expression for the $\alpha$-Renyi divergence between two Gaussians:
\begin{align}
D_\alpha(X||Y) = \frac{\alpha \|\delta\|^2}{2 \sigma^2}
\end{align}
Therefore, we can guarantee that $g(x+\delta) = c_A$ so long as
\begin{align}
\frac{\alpha \|\delta\|^2}{2 \sigma^2} < - \log \left( 1 - \underline{p_A} - \overline{p_B} + 2 \left( \frac{1}{2} (\underline{p_A}^{1 - \alpha} + \overline{p_B}^{1 - \alpha})^{1 - \alpha} \right) \right) 
\end{align}
which simplifies to
\begin{align}
\|\delta\| < \sigma \sqrt{- \frac{2}{\alpha} \log \left( 1 - \underline{p_A} - \overline{p_B} + 2 \left( \frac{1}{2} (\underline{p_A}^{1 - \alpha} + \overline{p_B}^{1 - \alpha})^{1 - \alpha} \right) \right)}
\end{align}
Finally, since this result holds for any $\alpha > 0$, we may maximize over $\alpha$ to obtain the largest possible certified radius: 
\begin{align}
\|\delta\| <  \sup_{\alpha > 0} \sigma \sqrt{- \frac{2}{\alpha} \log \left( 1 - \underline{p_A} - \overline{p_B} + 2 \left( \frac{1}{2} (\underline{p_A}^{1 - \alpha} + \overline{p_B}^{1 - \alpha})^{1 - \alpha} \right) \right)}
\end{align}
\end{proof}

Figure \ref{plot_libound} plots this bound at varying settings of the tuning parameter $\alpha$, while figure \ref{optimize_libound} plots how the bound varies with $\alpha$ for a fixed $\underline{p_A}$ and $\overline{p_B}$.

\begin{figure*}[ht]
\begin{center}
\begin{subfigure}{0.49\textwidth}
	\includegraphics[width=\textwidth]{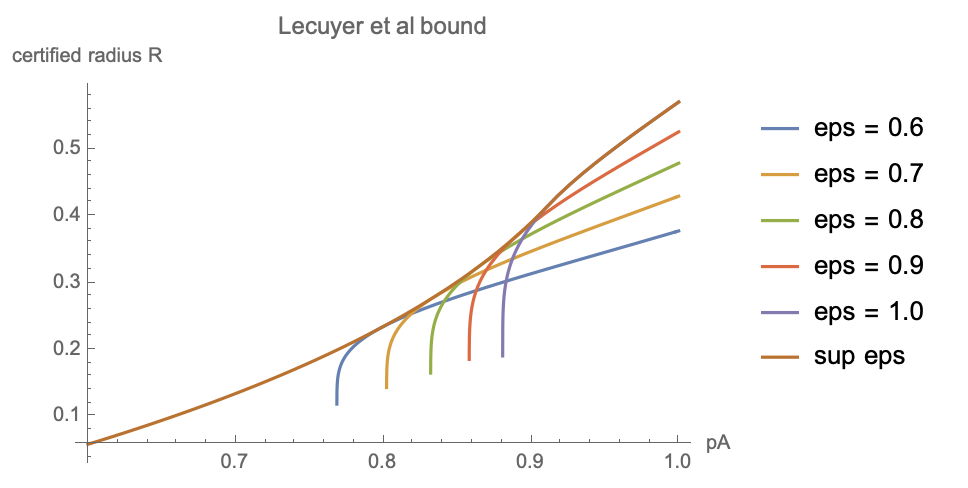}
	\caption{The \citet{lecuyer2018certified} bound over several settings of $\beta$.  The brown line is the pointwise supremum over all eligible $\beta$, computed numerically.}
	\label{plot_lecuyerbound}
\end{subfigure}
\hfill
\begin{subfigure}{0.49\textwidth}
	\includegraphics[width=\textwidth]{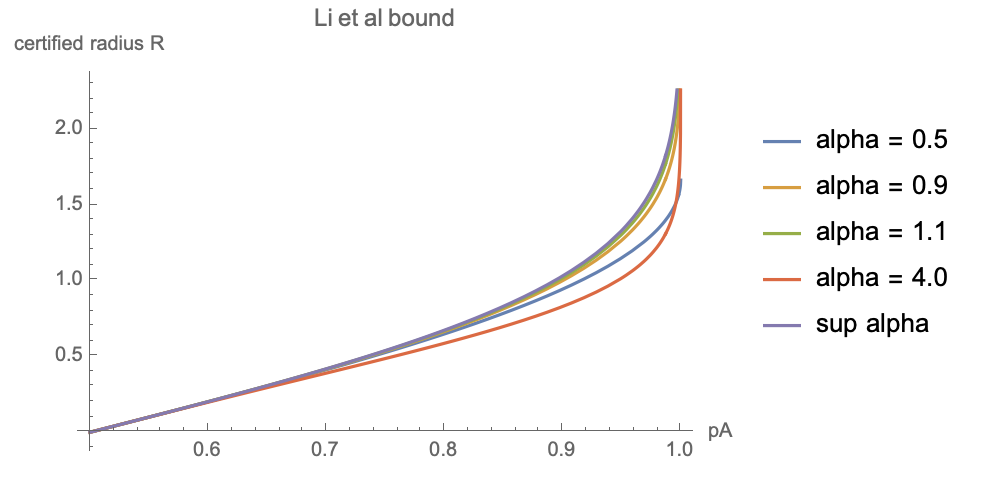}
	\caption{The \citet{li2018second} bound over several settings of $\alpha$.  The purple line is the pointwise supremum over all eligible $\alpha$, computed numerically.}
	\label{plot_libound}
\end{subfigure}
\begin{subfigure}{0.49\textwidth}
	\includegraphics[width=\textwidth]{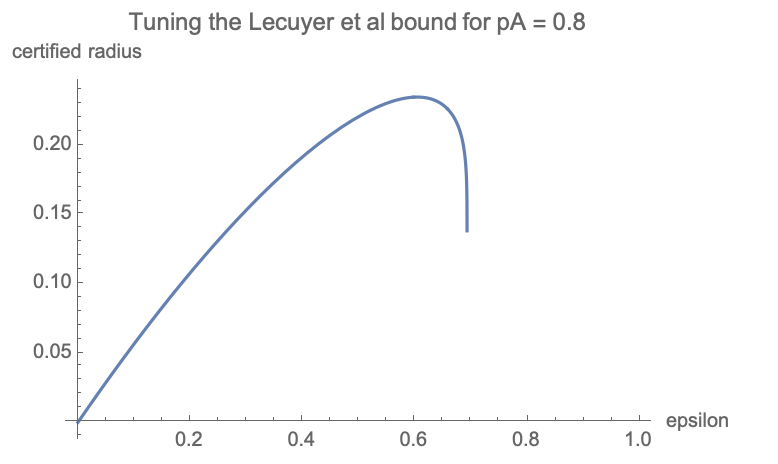}
	\caption{Tuning the \citet{lecuyer2018certified} bound wrt $\beta$ when $\underline{p_A} = 0.8, \overline{p_B} = 0.2$}
	\label{optimize_lecuyerbound}
\end{subfigure}
\begin{subfigure}{0.49\textwidth}
	\includegraphics[width=\textwidth]{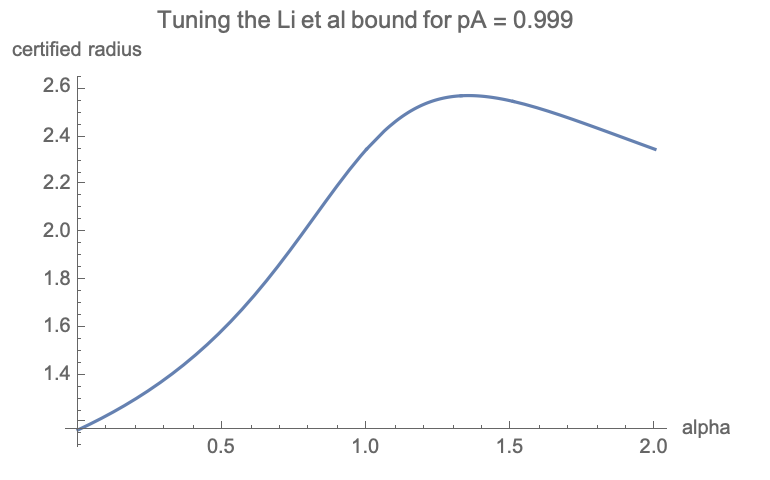}
	\caption{Tuning the \citet{li2018second} bound wrt $\alpha$ when $\underline{p_A} = 0.999, \overline{p_B} = 0.0001$}
	\label{optimize_libound}
\end{subfigure}
\end{center}
\end{figure*}

\newpage

\section{Experiment Details}
\label{section:experimentdetails}

\subsection{Comparison to baselines}

We compared randomized smoothing against three recent approaches for $\ell_2$-robust classification \citep{tsuzuku2018lipschitz, wong2018scaling, zhang2018efficient}.
 \citet{tsuzuku2018lipschitz} and \citet{wong2018scaling} propose both a robust training method and a complementary certification mechanism, while \citet{zhang2018efficient} propose a method to certify generically trained networks.
In all cases we compared against networks provided by the authors.
We compared against \citet{wong2018scaling} and \citet{zhang2018efficient} on CIFAR-10, and we compared against \citet{tsuzuku2018lipschitz} on SVHN.

In image classification it is common practice to preprocess a dataset by subtracting from each channel the mean over the dataset, and dividing each channel by the standard deviation over the dataset.
However, we wanted to report certified radii in the original image coordinates rather than in the standardized coordinates.
Therefore, throughout most of this work we \emph{first} added the Gaussian noise, and \emph{then} standardized the channels, before feeding the image to the base classifier.
(In the practical PyTorch implementation, the first layer of the base classifier was a layer that standardized the input.)
However, all of the baselines we compared against provided pre-trained networks which assumed that the dataset was first preprocessed in a specific way.
Therefore, when comparing against the baselines we also preprocessed the datasets first, so that we could report certified radii that were directly comparable to the radii reported by the baseline methods.

\paragraph{Comparison to \citet{wong2018scaling}}

Following \citet{wong2018scaling}, the CIFAR-10 dataset was preprocessed by subtracting $(0.485, 0.456, 0.406)$ and dividing by $(0.225, 0.225, 0.225)$.

While the body of the \citet{wong2018scaling} paper focuses on $\ell_\infty$ certified robustness, their algorithm naturally extends to $\ell_2$ certified robustness, as developed in the appendix of the paper.
We used three $\ell_2$-trained residual networks publicly released by the authors, each trained with a different setting of their hyperparameter $\epsilon \in \{0.157, 0.628, 2.51\}$.
We used code publicly released by the authors at \url{https://github.com/locuslab/convex_adversarial/blob/master/examples/cifar_evaluate.py} to compute the robustness radius of test images.
The code accepts a radius and returns TRUE (robust) or FALSE (not robust); we incorporated this subroutine into a binary search procedure to find the largest radius for which the code returned TRUE.

For randomized smoothing we used $\sigma = 0.6$ and a 20-layer residual network base classifier.
We ran \textsc{Certify} with $n_0 = 100$, $n=$ 100,000 and $\alpha = 0.001$.

For both methods, we certified the full CIFAR-10 test set.

\paragraph{Comparison to \citet{tsuzuku2018lipschitz}}

Following  \citet{tsuzuku2018lipschitz}, the SVHN dataset was not preprocessed except that pixels were divided by 255 so as to lie within [0, 1].

We compared against a pretrained network provided to us by the authors in which the hyperparameter of their method was set to $c = 0.1$.
The network was a wide residual network with 16 layers and a width factor of 4.
We used the authors' code at  \url{https://github.com/ytsmiling/lmt} to compute the robustness radius of test images.

For randomized smoothing we used $\sigma = 0.1$ and a 20-layer residual network base classifier.
We ran \textsc{Certify} with $n_0 = 100$, $n=$ 100,000 and $\alpha = 0.001$.

For both methods, we certified the whole SVHN test set.

\paragraph{Comparison to \citet{zhang2018efficient}}

Following \citet{zhang2018efficient}, the CIFAR-10 dataset was preprocessed by subtracting 0.5 from each pixel.

We compared against the \texttt{cifar\_7\_1024\_vanilla} network released by the authors, which is a 7-layer MLP.
We used the authors' code at \url{https://github.com/IBM/CROWN-Robustness-Certification} to compute the robustness radius of test images.

For randomized smoothing we used $\sigma = 1.2$ and a 20-layer residual network base classifier.
We ran \textsc{Certify} with $n_0 = 100$, $n=$ 100,000 and $\alpha = 0.001$.

For randomized smoothing, we certified the whole CIFAR-10 test set.
For \citet{zhang2018efficient}, we certified every fourth image in the CIFAR-10 test set.

\subsection{ImageNet and CIFAR-10 Experiments}

Our code is available at \url{http://github.com/locuslab/smoothing}.

In order to report certified radii in the original coordinates, we \emph{first} added Gaussian noise, and \emph{then} standardized the data.
Specifically, in our PyTorch implementation, the first layer of the base classifier was a normalization layer that performed a channel-wise standardization of its input.
For CIFAR-10 we subtracted the dataset mean  $ (0.4914, 0.4822, 0.4465)$ and divided by the dataset standard deviation $(0.2023, 0.1994, 0.2010)$.
For ImageNet we subtracted the dataset mean  $(0.485, 0.456, 0.406)$ and divided by the standard deviation $(0.229, 0.224, 0.225)$.

For both ImageNet and CIFAR-10, we trained the base classifier with random horizontal flips and random crops (in addition to the Gaussian data augmentation discussed explicitly in the paper).  On ImageNet we trained with synchronous SGD on four NVIDIA RTX 2080 Ti GPUs; training took approximately three days.

On ImageNet our base classifier used the ResNet-50 architecture provided in \texttt{torchvision}.
On CIFAR-10 we used a 110-layer residual network from \url{https://github.com/bearpaw/pytorch-classification}.

On ImageNet we certified every 100-th image in the validation set, for 500 images total.
On CIFAR-10 we certified the whole test set.

In Figure \ref{fig:ablations} (\textbf{middle}) we fixed $\sigma = 0.25$ and $\alpha = 0.001$ while varying the number of samples $n$.
We did not actually vary the number of samples $n$ that we simulated: we kept this number fixed at 100,000 but varied the number that we fed the Clopper-Pearson confidence interval.

In Figure \ref{fig:ablations} (\textbf{right}), we fixed $\sigma = 0.25$ and $n =$100,000 while varying $\alpha$.

\subsection{Adversarial Attacks}
\label{section:experimentdetails:attacks}

As discussed in Section \ref{section:experiments}, we subjected smoothed classifiers to a projected gradient descent-style adversarial attack.
We now describe the details of this attack.

Let $f$ be the base classifier and let $\sigma$ be the noise level.
Following \citet{li2018second}, given an example $(x, c) \in \mathbb{R}^d \times \mathcal{Y}$ and a radius $r$, we used a projected gradient descent style adversarial attack to optimize the objective:
\begin{align}
\label{attackobjective}
\argmax_{\delta: \|\delta\|_2 < r} \; \mathbb{E}_{\varepsilon \sim \mathcal{N}(0, \sigma^2 I)} \left[ \ell(f(x+\delta + \varepsilon), c) \right]
\end{align}
where $\ell$ is the softmax loss function.  (Breaking notation with the rest of the paper in which $f$ returns a class, the function $f$ here refers to the function that maps an image in $\mathbb{R}^d$ to a vector of classwise scores.)

At each iteration of the attack, we drew $k$ samples of noise, $\varepsilon_1 \hdots \varepsilon_k \sim \mathcal{N}(0, \sigma^2 I)$, and followed the stochastic gradient 
$g_t = \sum_{i=1}^k \nabla_{\delta_t} \ell(f(x+\delta_t+\varepsilon_k), c)$.

As is typical \citep{kolter2018}, we used a ``steepest ascent'' update rule, which, for the $\ell_2$ norm, means that we normalized the gradient before applying the update.  The overall PGD update is:
$\delta_{t+1} = \text{proj}_r \left (\delta_{t} + \eta \frac{g_t}{\|g_t\|} \right)$
where the function $\text{proj}_r$ that projects its input onto the ball $\{z: \|z\|_2 \le r\}$ is given by $\text{proj}_r(z) = \frac{ r z}{\max(r, \|z\|_2)}$.
We used a constant step size $\eta$ and a fixed number $T$ of PGD iterations.

In practice, our step size was $\eta = 0.1$, we used $T=20$ steps of PGD, and we computed the stochastic gradient using $k=1000$ Monte Carlo samples.

Unfortunately, the objective we optimize (\ref{attackobjective}) is not actually the attack objective of interest.
To force a misclassification, an attacker needs to find some perturbation $\delta$ with $\|\delta\|_2 < r$ and some class $c_B$ for which
\begin{align*}
\mathbb{P}_{\varepsilon \sim \mathcal{N}(0, \sigma^2 I)}(f(x+\delta+\varepsilon) = c_B) \ge \mathbb{P}_{\varepsilon \sim \mathcal{N}(0, \sigma^2 I)}(f(x+\delta+\varepsilon) = c)
\end{align*}
Effective adversarial attacks against randomized smoothing are outside the scope of this paper.

\newpage

\section{Examples of Noisy Images}
\label{section:imageexamples}

We now show examples of CIFAR-10 and ImageNet images corrupted with varying levels of noise. 

\begin{figure}[h]
\captionsetup[subfigure]{labelformat=empty}
\begin{center}
\begin{subfigure}{0.24\textwidth}
	\begin{center}
	\includegraphics[width=32px]{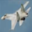}
	\end{center}
\end{subfigure}
\begin{subfigure}{0.24\textwidth}
	\begin{center}
	\includegraphics[width=32px]{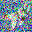}
	\end{center}
\end{subfigure}
\begin{subfigure}{0.24\textwidth}
	\begin{center}
	\includegraphics[width=32px]{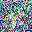}
	\end{center}
\end{subfigure}
\begin{subfigure}{0.24\textwidth}
	\begin{center}
	\includegraphics[width=32px]{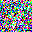}
	\end{center}
\end{subfigure}
\begin{subfigure}{0.24\textwidth}
	\begin{center}
	\includegraphics[width=32px]{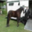}
	\end{center}
\end{subfigure}
\begin{subfigure}{0.24\textwidth}
	\begin{center}
	\includegraphics[width=32px]{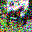}
	\end{center}
\end{subfigure}
\begin{subfigure}{0.24\textwidth}
	\begin{center}
	\includegraphics[width=32px]{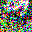}
	\end{center}
\end{subfigure}
\begin{subfigure}{0.24\textwidth}
	\begin{center}
	\includegraphics[width=32px]{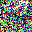}
	\end{center}
\end{subfigure}
\begin{subfigure}{0.24\textwidth}
	\begin{center}
	\includegraphics[width=32px]{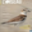}
	\end{center}
\end{subfigure}
\begin{subfigure}{0.24\textwidth}
	\begin{center}
	\includegraphics[width=32px]{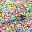}
	\end{center}
\end{subfigure}
\begin{subfigure}{0.24\textwidth}
	\begin{center}
	\includegraphics[width=32px]{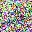}
	\end{center}
\end{subfigure}
\begin{subfigure}{0.24\textwidth}
	\begin{center}
	\includegraphics[width=32px]{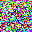}
	\end{center}
\end{subfigure}
\begin{subfigure}{0.24\textwidth}
	\begin{center}
	\includegraphics[width=32px]{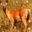}
	\caption{$\sigma = 0.00$}
	\end{center}
\end{subfigure}
\begin{subfigure}{0.24\textwidth}
	\begin{center}
	\includegraphics[width=32px]{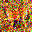}
	\caption{$\sigma = 0.25$}
	\end{center}
\end{subfigure}
\begin{subfigure}{0.24\textwidth}
	\begin{center}
	\includegraphics[width=32px]{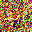}
	\caption{$\sigma = 0.50$}
	\end{center}
\end{subfigure}
\begin{subfigure}{0.24\textwidth}
	\begin{center}
	\includegraphics[width=32px]{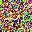}
	\caption{$\sigma = 1.00$}
	\end{center}
\end{subfigure}
\caption{CIFAR-10 images additively corrupted by varying levels of Gaussian noise $\mathcal{N}(0, \sigma^2 I)$.  Pixel values greater than 1.0 (=255) or less than 0.0 (=0) were clipped to 1.0 or 0.0.}
\end{center}
\end{figure}

\begin{figure}[h]
\captionsetup[subfigure]{labelformat=empty}
\begin{center}
\begin{subfigure}{0.24\textwidth}
	\includegraphics[width=\textwidth]{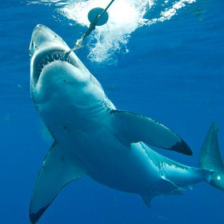}
\end{subfigure}
\begin{subfigure}{0.24\textwidth}
	\includegraphics[width=\textwidth]{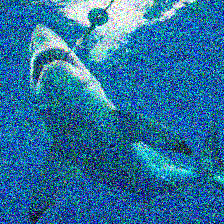}
\end{subfigure}
\begin{subfigure}{0.24\textwidth}
	\includegraphics[width=\textwidth]{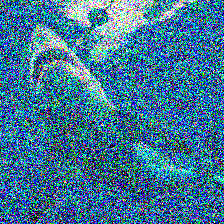}
\end{subfigure}
\begin{subfigure}{0.24\textwidth}
	\includegraphics[width=\textwidth]{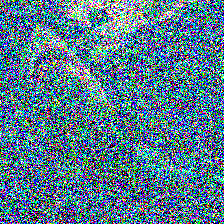}
\end{subfigure}
\begin{subfigure}{0.24\textwidth}
	\includegraphics[width=\textwidth]{figures/example_images/imagenet/19411_0}
\end{subfigure}
\begin{subfigure}{0.24\textwidth}
	\includegraphics[width=\textwidth]{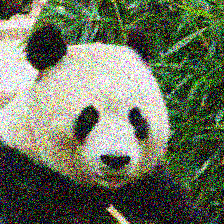}
\end{subfigure}
\begin{subfigure}{0.24\textwidth}
	\includegraphics[width=\textwidth]{figures/example_images/imagenet/19411_50}
\end{subfigure}
\begin{subfigure}{0.24\textwidth}
	\includegraphics[width=\textwidth]{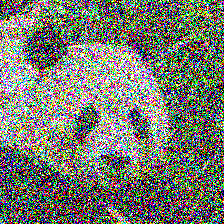}
\end{subfigure}
\begin{subfigure}{0.24\textwidth}
	\includegraphics[width=\textwidth]{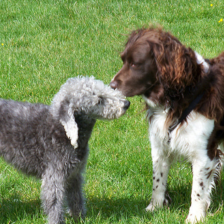}
\end{subfigure}
\begin{subfigure}{0.24\textwidth}
	\includegraphics[width=\textwidth]{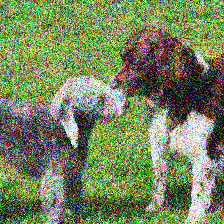}
\end{subfigure}
\begin{subfigure}{0.24\textwidth}
	\includegraphics[width=\textwidth]{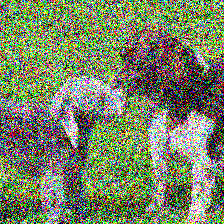}
\end{subfigure}
\begin{subfigure}{0.24\textwidth}
	\includegraphics[width=\textwidth]{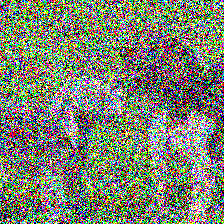}
\end{subfigure}
\begin{subfigure}{0.24\textwidth}
	\includegraphics[width=\textwidth]{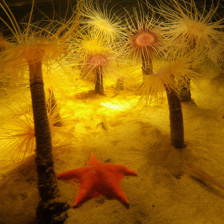}
	\caption{$\sigma = 0.00$}
\end{subfigure}
\begin{subfigure}{0.24\textwidth}
	\includegraphics[width=\textwidth]{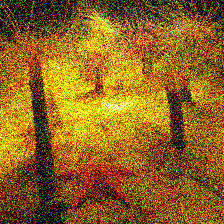}
	\caption{$\sigma = 0.25$}
\end{subfigure}
\begin{subfigure}{0.24\textwidth}
	\includegraphics[width=\textwidth]{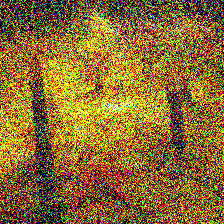}
	\caption{$\sigma = 0.50$}
\end{subfigure}
\begin{subfigure}{0.24\textwidth}
	\includegraphics[width=\textwidth]{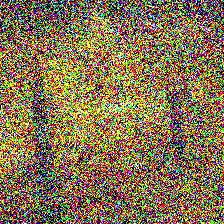}
	\caption{$\sigma = 1.00$}
\end{subfigure}
\caption{ImageNet images additively corrupted by varying levels of Gaussian noise $\mathcal{N}(0, \sigma^2 I)$.  Pixel values greater than 1.0 (=255) or less than 0.0 (=0) were clipped to 1.0 or 0.0.}
\end{center}
\end{figure}

\end{document}